\documentclass{article}

     \PassOptionsToPackage{numbers, compress}{natbib}

\usepackage[preprint]{neurips_2025}




\usepackage[utf8]{inputenc} 
\usepackage[T1]{fontenc}    
\usepackage{hyperref}       
\usepackage{url}            
\usepackage{booktabs}       
\usepackage{amsfonts}       
\usepackage{nicefrac}       
\usepackage{microtype}      
\usepackage{xcolor}         
\usepackage[normalem]{ulem}


\usepackage{amsmath, amsthm, mathtools, bbm,amssymb}
\usepackage{wrapfig, comment, setspace, enumerate}
\usepackage[linesnumbered, ruled]{algorithm2e}
\setlength{\algomargin}{1.em}

\bibliographystyle{abbrvnat}

\allowdisplaybreaks

\DeclareMathOperator*{\argmin}{argmin}
\newcommand{\D}{\ensuremath{\mathrm{d}}}
\def\E{\mathbb{E}}
\newcommand{\1}{\mathbbm{1}}
\newcommand{\identity}{\mathbb{I}}
\newcommand{\ones}{\mathbf{1}}
\newcommand{\xxi}{\boldsymbol{\xi}}
\newcommand{\llambda}{\boldsymbol{\lambda}}
\newcommand{\probsimplex}{\mathfrak{P}}
\newcommand{\w}{\mathbf{w}}
\newcommand{\p}{\mathbf{p}}
\newcommand{\q}{\mathbf{q}}
\renewcommand{\v}{\mathbf{v}}
\renewcommand{\u}{\mathbf{u}}
\newcommand{\N}{\mathbb{N}}
\newcommand{\R}{\mathbb{R}}
\renewcommand{\epsilon}{\varepsilon}
\renewcommand{\d}{\mathop{}\!\mathrm{d} }

\def\P{\mathbb{P}}

\newcommand{\wt}{\widetilde}

\renewcommand{\eqref}[1]{Eq.~(\ref{#1})}

\newcommand{\mft}{\mathfrak{t}}

\DeclarePairedDelimiterXPP\iprodWrapper[3]{}{\langle}{\rangle}{#1}{
	\ifblank{#2}{\MTemptyplaceholder}{#2},
	\ifblank{#3}{\MTemptyplaceholder}{#3}
}
\NewDocumentCommand\iprod{ s o m m }{
	\IfBooleanTF {#1}
	{ \iprodWrapper*{\IfNoValueF{#2}{_{#2}}}{#3}{#4} }
	{ \iprodWrapper {\IfNoValueF{#2}{_{#2}}}{#3}{#4} }
}

\newcommand{\bGamma}{\boldsymbol{\Gamma}}

\NewDocumentCommand{\SG}{m}{\textcolor{orange}{[#1]}}

\theoremstyle{plain}
\newtheorem{theorem}{Theorem}[section]
\newtheorem{proposition}[theorem]{Proposition}
\newtheorem{lemma}[theorem]{Lemma}

\theoremstyle{definition}

\theoremstyle{remark}
\newtheorem{remark}[theorem]{Remark}


\title{Spike-timing-dependent Hebbian learning \\ as noisy gradient descent}

%

\author{%
  Niklas Dexheimer\textsuperscript{1}\thanks{These authors contributed equally to this work.} \quad Sascha Gaudlitz\textsuperscript{2$\ast$}\quad Johannes Schmidt-Hieber\textsuperscript{1}\\
  \textsuperscript{1}University of Twente\quad \textsuperscript{2}Humboldt-Universität zu Berlin\\
  \texttt{$\{$n.dexheimer, a.j.schmidt-hieber$\}$@utwente.nl}\quad \texttt{sascha.gaudlitz@hu-berlin.de}
}

\begin{document}

\maketitle

\begin{abstract}
  Hebbian learning is a key principle underlying learning in biological neural networks. We relate a Hebbian spike-timing-dependent plasticity rule to noisy gradient descent with respect to a non-convex loss function on the probability simplex. Despite the constant injection of noise and the non-convexity of the underlying optimization problem, one can rigorously prove that the considered Hebbian learning dynamic identifies the presynaptic neuron with the highest activity and that the convergence is exponentially fast in the number of iterations. This is non-standard and surprising as typically noisy gradient descent with fixed noise level only converges to a stationary regime where the noise causes the dynamic to fluctuate around a minimiser.
\end{abstract}

\section{Introduction}

\begin{wrapfigure}{r}{0.2\textwidth}
\includegraphics[width = 0.2\textwidth]{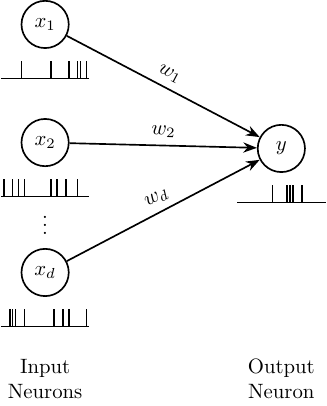}
\caption{Neural network with a single output neuron}
\label{fig:wrapfig}
\end{wrapfigure}

Hebbian learning is a fundamental concept in computational neuroscience, dating back to \citet{hebbOrganizationBehavior1949}. In this work, we provide a rigorous analysis of a Hebbian \emph{spike-timing-dependent plasticity (STDP)} rule. Those are learning rules for the synaptic strength parameters that only depend on the spike times of the involved neurons. More precisely, we consider a neural network composed of $d$ presynaptic/input neurons, which are connected to one postsynaptic/output neuron. The presynaptic neurons communicate with the postsynaptic neuron by sending spike sequences, the so-called spike-trains. Reweighted by synaptic strength parameters $w_1,\dots, w_d\ge 0$, they contribute to the postsynaptic membrane potential.
 Whenever the postsynaptic membrane potential exceeds a threshold, the postsynaptic neuron emits a spike, and the membrane potential is reset to zero. Experiments have shown the following stylised facts, which lie at the core of Hebbian learning based on spikes: (1) \textit{Locality:} The change of the synaptic weight $w_i$ depends only on the spike-train of neuron $i$ and the postsynaptic spike-train. (2) \textit{Spike-timing:} The change of the synaptic weight $w_i$ depends on the relative timing of presynaptic spikes of neuron $i$ and of the postsynaptic neuron. More precisely, a pre-post spike sequence tends to increase $w_i$, whereas a post-pre sequence tends to decrease $w_i$. We refer to \citet{morrisonPhenomenologicalModelsSynaptic2008} for a more comprehensive list of experimental results on STDP rules.

Hebbian learning rules are well-studied if instead of the precise timings of pre- and postsynaptic spikes, only the mean firing rates are taken into account. These \emph{rate-based} models exhibit many desirable properties, including performing streaming PCA \citep{kumarOjasAlgorithmStreaming, pmlr-v134-huang21a} and receptive field development \citep[Section 11.1.4]{gerstnerSpikingNeuronModels2002}. Much less is known if the precise timing of pre- and postsynaptic spikes is considered, since the intrinsic randomness of the dynamics complicates the mathematical analysis.

Our main contribution lies in connecting STDP to noisy gradient descent and providing a rigorous convergence analysis of the noisy learning scheme. To this end, we introduce a learning rule for the weights $w_1,\dots, w_d$, which captures the locality and spike-time dependence of Hebbian STDP. We rewrite the learning rule as a noisy gradient descent scheme with respect to a suitable loss function. The connection to noisy gradient descent and stochastic approximation \citep{lai03,robbinsmonro} paves the way for applying mathematical tools from stochastic process theory to analyse the STDP rule. Our analysis of STDP is inspired by the work on noisy gradient descent for non-convex loss functions of \citet{2020_Neurips_SGD_Martikopolous}. By refining their arguments and carefully tracking the error terms, we show an exponentially fast alignment of the output neuron with the input neuron of the highest mean firing rate on an event of high probability. The specialisation of the output neuron to the input neuron of the highest intensity is related to the winner-take-all mechanism in decision making \citep{feldmanConnectionistModelsTheir1982, yuilleWinnerTakeAllMechanismBased1989, osternips2005, lynchWinnerTakeAllComputationSpiking2019, suSpikeBasedWinnerTakeAllComputation2019}. The competitive nature of Hebbian STDP has been observed by \citep{songCompetitiveHebbianLearning2000, songCorticalDevelopmentRemapping2001, gilsonRepresentationInputStructure2010} and the specialisation to few input neurons is important for receptive field development \citep{clopathConnectivityReflectsCoding2010}. By connecting Hebbian STDP to noisy gradient descent, we are able to provide a mathematical analysis beyond ensemble averages and to quantify the speed of convergence.

Taking into account the intrinsic geometry of the probability simplex, we also relate our learning rule to noisy mirror descent, more precisely to noisy entropic gradient descent, which has been proposed for brain-like learning by \citet{neumannIntrinsicPlasticityNatural2013,cornfordBrainlikeLearningExponentiated2024}.

The key contributions are:
\begin{enumerate}
    \item \textbf{STDP as noisy gradient descent.} We deduce a new framework, in which Hebbian STDP is interpreted as noisy gradient descent. This connection allows us to employ powerful tools from the theory of stochastic processes for analysing Hebbian STDP.
    \item \textbf{Linear convergence.} We prove the alignment of the output neuron with the input neuron of highest intensity at exponential rate on an event of high probability.
    \item \textbf{Connection to noisy mirror descent.} We relate our learning rule to noisy mirror descent, more specifically to entropic gradient descent. This connection facilitates the integration of techniques from both areas, potentially leading to future synergistic effects.
\end{enumerate}

\subsection*{Related literature}

Common approaches to understanding STDP restrict to the mean behaviour after taking the ensemble average, e.g.\ \citep{gerstnerSpikingNeuronModels2002, kempterIntrinsicStabilizationOutput2001, gilsonRepresentationInputStructure2010}, or compute the full distribution using the master equation of the Markov process \citep[Section 11.2.4]{gerstnerSpikingNeuronModels2002}. Unfortunately, the latter is only feasible in specific scenarios. In \citep{kempterHebbianLearningSpiking1999}, the authors consider a general noisy spike-time dependent dynamic which is transformed into a deterministic ODE by imposing a slow learning rate and using the self-averaging effect of the system. A stability analysis reveals structure formation and output stabilisation. 
One major difference to our work is the influence of the noise. In \citep{kempterHebbianLearningSpiking1999}, the variance of the weights grows linearly and a careful comparison of time scales is required. In our work, despite a constant injection of noise into the system, the dynamic for the spike-triggering probabilities converges to a deterministic limit. Secondly, the use of recent ideas from the analysis of noisy SGD allows us to track the influence of the realised noise in every step.
A considerable number of previous works derived STDP rules based on the minimisation of a loss function, typically corresponding to the minimization/maximization of some notion of energy or information, see \citep{chechikSpikeTimingDependentPlasticityRelevant2003,BellPara_MaximisingSensitivitySpikingNetwork, BohteMozer_ReducingSpikeTrainVariability,toyoizumiOptimalityModelUnsupervised2007,toyoizumiGeneralizedBienenstockCooper2005,pfisterOptimalSpikeTimingDependentPlasticity2006,Toyoizumi_STDP_MutualInformationMaximisation,ScellierBengio_EquilibirumBackpropagation}. While this approach is appealing, the mathematical analysis of these learning rules is challenging due to the modifications required to achieve biological plausibility. In contrast, we start with a biologically plausible learning rule and utilise the arising loss function to derive mathematical convergence guarantees of the learning rule. The importance of the choice of a suitable metric for the derivation of the learning rule is laid out in \citep{suraceChoiceMetricGradientbased2020}.
We refer to \citep{abbottTemporallyAsymmetricHebbian1998, yangdanSpikeTimingDependentPlasticity2004, sjostromSpiketimingDependentPlasticity2010, gerstnerNeuronalDynamicsSingle2014, vigneron_Survey_STDP_PatternRecognition} and the references therein for further results on STDP.  In \cite{Jaffard_2024}, a related learning rule is mathematically analyzed by relating it to expert aggregation.

\subsection{Notation}

\textbf{Linear algebra.} For a positive integer $d$, we write $[d]\coloneq\{1,\dots,d\}$ and $\ones\coloneq(1,\dots, 1)^\top\in \mathbb{R}^d$. For $i\in [d]$ we denote by $\mathbf{e}_i$ the $i$\textsuperscript{th} standard basis vector of $\mathbb{R}^d$. The Hadamard product between two vectors $\mathbf{a},\mathbf{b}\in\mathbb{R}^d$ is denoted by $\mathbf{a}\odot\mathbf{b}\coloneqq (a_1b_1,\dots, a_db_d)^\top\in\mathbb{R}^d.$
We write $\identity\in\mathbb{R}^{d\times d}$ for the identity matrix on $\mathbb{R}^d$ and $\Vert \u\Vert^2=\sum_{i=1}^d u_i^2$ for the squared Euclidean norm of a vector $\u\in\mathbb{R}^d$.

\textbf{Probability.} $M(1,\p)$ denotes the multinomial distribution with one trial ($n=1$) and probability vector $\p=(p_1,\dots, p_d)^\top$, that is $\xi\sim M(1,\p)$ if only only if $\P(\xi = i)=p_i$ for any $i\in [d]$. We denote by \begin{equation*}
    \probsimplex \coloneq \Big\{\p\in\mathbb{R}^d\,\colon\, p_i\ge 0 \,\forall i\in [d], \sum_{i=1}^d p_i = 1\Big\}
\end{equation*}
the probability simplex in $\mathbb{R}^d$. We denote by $\1_{\mathcal{A}}$ the indicator function of a set $\mathcal{A}$.

\subsection{Hebbian inspired learning rule}

Inspired by Hebbian learning, we consider an unsupervised learning dynamic with $d$ input (or presynaptic) neurons and one output (or postsynaptic) neuron. The $i$\textsuperscript{th} input neuron has a \emph{mean firing rate} $\lambda_i>0$ describing the expected number of spikes per time unit. The vector $\llambda=(\lambda_1,\ldots,\lambda_d)$ contains the $d$ mean firing rates. The strength of the connection between the $i$\textsuperscript{th} input neuron and the output neuron is modulated by the weight parameter $w_i\ge 0$, and changes to encode the information of the input firing rates.

We introduce a Hebbian STDP rule in Subsection \ref{subsec:Bioplausibility} and show that, under some assumptions on the spike-trains, it is equivalent to the following dynamics. If $\w(0)=(w_1(0),\ldots,w_d(0))^\top$ are the $d$ weights at initialisation, the updating rule from $\w(k)$ to $\w(k+1)$ is given by
\begin{equation}
    \w(k+1) = \w(k) \odot\big(\ones+\alpha \left(\mathbf{B}(k) + \mathbf{Z}(k)\right)\big),\label{eq:LearningRule_W}
\end{equation}
where $\alpha>0$ is the learning rate and $k=0,1,\ldots$ denotes the postsynaptic spike time. The $d$-dimensional vector $\mathbf{B}(k)$ is the standard basis vector pointing to the presynaptic neuron, which triggered the $(k+1)$\textsuperscript{st} postsynaptic spike. It is given as $\mathbf{B}(k) = \sum_{i=1}^d\1_{\zeta_k=i}\mathbf{e}_i$, the one-hot encoding of independent multinomial random variables $\zeta_k\sim M(1, \p(k))$, with $k$-dependent probability vector
\begin{equation}
    \p(k) = \frac{\llambda \odot \w(k)}{ \llambda^\top \w(k)}\in\mathbb{R}^d,\quad k=0,1,\ldots \label{eq:Probabilities_intro}
\end{equation}
Since the probabilities $p_i(k)$ model the probability that the $(k+1)$\textsuperscript{st} postsynaptic spike is triggered by neuron $i=1,\dots, d$, we call them \emph{(postsynaptic) spike-triggering-probabilities}. The i.i.d.\ $d$-dimensional vectors $\mathbf{Z}(k)$, $k=0,1,\ldots$ model the contribution of presynaptic spikes, which did not trigger the $(k+1)$\textsuperscript{st} postsynaptic spike, to the weight change. They are modelled to have i.i.d.\ components $Z_1(k),\dots, Z_d(k)$, which are supported in $[-(Q-1),(Q-1)]$, for some $Q>1$, and centred such that $\E[\mathbf{Z}(k)]=0$.

In the remainder of the paper, we analyse the long-run behaviour of $\p(k)$ as $k\to\infty$ under the learning rule \eqref{eq:LearningRule_P}. We say that the output neuron aligns with the $j$\textsuperscript{th} input neuron if $p_j(k)\to 1$ as $k\to\infty$. Since the input intensities $\lambda_1,\dots,\lambda_d>0$ are fixed throughout the dynamic, this condition is equivalent to $w_j(k)/\sum_{i=1}^d w_i(k)\to 1$ as $k\to\infty$. Figure \ref{fig:wrapfig} visualises the learning rule \eqref{eq:Probabilities_intro}.

\section{Representation as noisy gradient descent}

We continue by relating the learning rule \eqref{eq:LearningRule_W} to noisy gradient descent. For notational simplicity, define $\mathbf{Y}(k)\coloneq \mathbf{B}(k)+\mathbf{Z}(k)$ for $k=0,1,\dots$.
Combining the weight updates \eqref{eq:LearningRule_W} with the formula for the probabilities $\p$ from \eqref{eq:Probabilities_intro}, we find
\begin{equation}
    \p(k+1) = \frac{\llambda\odot\left(\w(k)\odot (\ones+\alpha\mathbf{Y}(k))\right)}{\llambda^\top\left(\w(k) \odot\left(\ones+\alpha \mathbf{Y}(k)\right)\right)} =  \frac{\p(k)\odot (\ones+\alpha\mathbf{Y}(k))}{\p(k)^\top\left(\ones+\alpha \mathbf{Y}(k)\right)},\quad k=0,1,\dots.\label{eq:LearningRule_P}
\end{equation}
The normalisation in the denominator and the multiplicative nature of the update ensures that the dynamic of $\p(k)$ is restricted to the probability simplex. By a Taylor expansion around $\alpha=0$, we find
\begin{equation}\label{eq:taylor p}
    \p(k+1) = \p(k)\odot \Big(\ones+ \alpha \left(\mathbf{Y}(k) - \p(k)^\top \mathbf{Y}(k)\ones \right)\Big)+\mathcal{O}(\alpha^2).
\end{equation}
Since
\begin{equation*}
    \E\left[\mathbf{Y}(k) - \p(k)^\top \mathbf{Y}(k)\ones \,\vert\, \p(k) \right] = \p(k) -\Vert\p(k)\Vert^2\ones,
\end{equation*}
the random vectors 
\begin{equation*}
    \xxi(k) \coloneq \p(k)\odot \left( \mathbf{Y}(k) - \p(k)^\top \mathbf{Y}(k)\ones -\p(k) +\Vert\p(k)\Vert^2\ones\right),\quad k=0,1,\dots.
\end{equation*}
are centred. The distribution of $\xxi(k)$ depends on $\w(k)$ and $\p(k)$. Up to $\mathcal{O}(\alpha^2)$-terms, we can write the learning rule \eqref{eq:LearningRule_P} as a noisy gradient descent scheme
\begin{align}
    \p(k+1) &= \p(k)\odot(\ones+\alpha(\p(k)-\Vert\p(k)\Vert^2 \ones)) + \alpha \xxi(k) \notag \\
    &= \p(k) -\alpha \nabla L(\p(k)) + \alpha \xxi(k),\quad k=0,1,\ldots \label{eq:LearningDynamics_P_aux}
\end{align}
for the loss function
\begin{equation}
    L(\p) \coloneq -\frac{1}{3}\sum_{i=1}^d p_i^3 +\frac{1}{4}\Bigg(\sum_{i=1}^d p_i^2\Bigg)^2 = -\frac{1}{3}\p^\top(\p\odot \p) + \frac{1}{4}\Vert\p\Vert^4,\quad \p\in \mathbb{R}^d \label{eq:LossFunction}
\end{equation}
with gradient
\begin{equation}
    \nabla L(\p) = -\p \odot (\p - \Vert \p\Vert^2 \ones)\in\mathbb{R}^d,\quad \p\in\mathbb{R}^d.\label{eq:Lossgradient}
\end{equation}
Dropping $\mathcal{O}(\alpha^2)$ terms is only done for illustrative purposes. Our main result (Theorem \ref{thm:sgd}) applies to the original learning rule \eqref{eq:LearningRule_P}. The subsequent lemma summarises the key properties of the loss function $L$ from \eqref{eq:LossFunction}. For $d=3$, Figure \ref{fig:Lossfunction_dynamics} visualises the loss function $L$ and the learning dynamics \eqref{eq:LearningRule_P}.

\begin{lemma}\label{lem:Losslandscape}
    All critical points of the loss function \eqref{eq:LossFunction} can be written as $\p^\ast = \tfrac1{\vert S \vert}\sum_{j\in S}\mathbf{e}_j$ for some $S\subseteq [d]$. Every critical point with $\vert S \vert\ge 2$ is a saddle point. The local minima of the loss function $L$ from \eqref{eq:LossFunction} are the standard basis vectors $\{\mathbf{e}_1,\dots, \mathbf{e}_d\}$. Furthermore, every local minimum of $L$ is also a global minimum.
\end{lemma}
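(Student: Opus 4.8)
The plan is to establish the four assertions in sequence: enumerate the critical points on the simplex, evaluate $L$ at each of them, compute the Hessian to classify them, and finally compare the critical values to settle the global-minimum claim.

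First, to locate the critical points I would set up the first-order (KKT) conditions for $L$ constrained to $\probsimplex$. Writing $S=\{i:p_i>0\}$ for the support of a candidate stationary point, stationarity on the relative interior of the corresponding face reads $\partial_i L(\p)=p_i(\|\p\|^2-p_i)=\mu$ for all $i\in S$, where $\mu$ is the multiplier for the constraint $\sum_i p_i=1$. The decisive step is to sum this identity over $i\in S$: since $\sum_{i\in S}p_i=1$ and $\sum_{i\in S}p_i^2=\|\p\|^2$, the left-hand side collapses to $\|\p\|^2-\|\p\|^2=0$, forcing $\mu=0$. Then $p_i(\|\p\|^2-p_i)=0$ together with $p_i>0$ gives $p_i=\|\p\|^2$ for every $i\in S$, so all coordinates in the support are equal, and the normalisation $\sum_{i\in S}p_i=1$ fixes this common value to $1/|S|$. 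This yields the claimed form $\p^\ast=\tfrac{1}{|S|}\sum_{j\in S}\mathbf{e}_j$. Substituting $\p^\ast$ into \eqref{eq:LossFunction} and writing $k:=|S|$ gives $\sum_i (p_i^\ast)^3=k^{-2}$ and $(\sum_i (p_i^\ast)^2)^2=k^{-2}$, hence $L(\p^\ast)=(-\tfrac13+\tfrac14)k^{-2}=-\tfrac{1}{12k^2}$, a quantity strictly increasing in $k$.

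For the classification I would use the Hessian $\nabla^2 L(\p)=\|\p\|^2\identity-2\,\mathrm{diag}(\p)+2\p\p^\top$, evaluated at $\p^\ast$ where $\|\p^\ast\|^2=1/k$. Testing tangent directions $\v$ supported on $S$ with $\sum_{i\in S}v_i=0$ gives $\v^\top\nabla^2 L(\p^\ast)\v=-\tfrac1k\|\v\|^2<0$, so along the face $\probsimplex_S$ the point is a strict local maximum; in particular, for $|S|\ge 2$ such descent directions exist in the feasible set, so $\p^\ast$ cannot be a local minimum. Testing a transverse direction $\v=\mathbf{e}_\ell-\tfrac1k\sum_{i\in S}\mathbf{e}_i$ with $\ell\notin S$ gives $\v^\top\nabla^2 L(\p^\ast)\v=\tfrac1k+\tfrac1{k^2}>0$, exhibiting an ascent direction and hence the saddle structure (for the barycentre $S=[d]$ the only ascent direction is the ambient normal $\ones$). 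At a vertex ($k=1$) the Hessian reduces to $\identity$, so each $\mathbf{e}_j$ is a strict local minimum. Together these show that the local minima are exactly the vertices.

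Finally, the global-minimum claim follows from compactness: $L$ is continuous on the compact set $\probsimplex$, so it attains its infimum, necessarily at a KKT point, hence at one of the enumerated $\p^\ast$. Since $-\tfrac1{12k^2}$ is minimised at $k=1$, the global minimum value is $-\tfrac1{12}$, attained precisely at the vertices; as these coincide with the local minima found above, every local minimum is a global minimum. I expect the main obstacle to be the curvature analysis on the boundary faces — carefully separating within-face descent directions from transverse ascent directions, checking feasibility of the chosen directions at boundary points, and handling the barycentre, where the saddle character is only visible in the ambient space rather than along the simplex itself.
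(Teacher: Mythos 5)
Your proposal is correct and reaches all four claims, following the same overall template as the paper (first-order condition gives the enumeration, the Hessian $2\p\p^\top+\Vert\p\Vert^2\identity-2\operatorname{diag}(\p)$ gives the classification, and the value comparison $L(\p^\ast)=-\tfrac{1}{12\vert S\vert^2}$ settles the global claim), but the execution differs at each stage. The paper enumerates critical points by reading off $\nabla L(\p)=-\p\odot(\p-\Vert\p\Vert^2\ones)=0$, i.e.\ $p_i\in\{0,\Vert\p\Vert^2\}$ for every $i$; your Lagrange-multiplier route (summing $\partial_i L=\mu$ over the support to force $\mu=0$) arrives at the same set and additionally shows that constrained stationarity on $\probsimplex$ already implies vanishing of the full Euclidean gradient, so your enumeration a priori covers a larger class of candidates, which is a mild strengthening. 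For the classification, the paper diagonalises the Hessian completely (eigenvalues $\pm 1/n$ with explicit eigenspaces), whereas you evaluate the quadratic form on two hand-picked families of directions; this is the same computation in lighter clothing, and your treatment of the barycentre via the ambient direction $\ones$ matches exactly the paper's remark that for $n=d$ the saddle structure is invisible inside the simplex. The one substantive divergence is the final claim: you obtain a global minimiser from compactness of $\probsimplex$, which proves that the vertices minimise $L$ over the simplex, while the paper invokes coercivity ($L(\p)\to\infty$ as $\Vert\p\Vert\to\infty$) and thereby proves the stronger, unconstrained statement that the vertices minimise $L$ over all of $\R^d$ --- the reading consistent with the saddle-point claim, which (as you yourself note) is an ambient-space statement. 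To upgrade your argument, observe that $\vert\sum_i p_i^3\vert\le\Vert\p\Vert^3$ gives $L(\p)\ge-\tfrac13\Vert\p\Vert^3+\tfrac14\Vert\p\Vert^4\to\infty$, so the unconstrained infimum is attained at a point where $\nabla L$ vanishes; the unconstrained critical points are your enumerated $\p^\ast$ together with the origin (where $L=0>-\tfrac1{12}$), so the global minimum over $\R^d$ is still $-\tfrac1{12}$, attained precisely at the vertices.
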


\begin{figure}[t]
    \centering
    \includegraphics[width=0.32\linewidth]{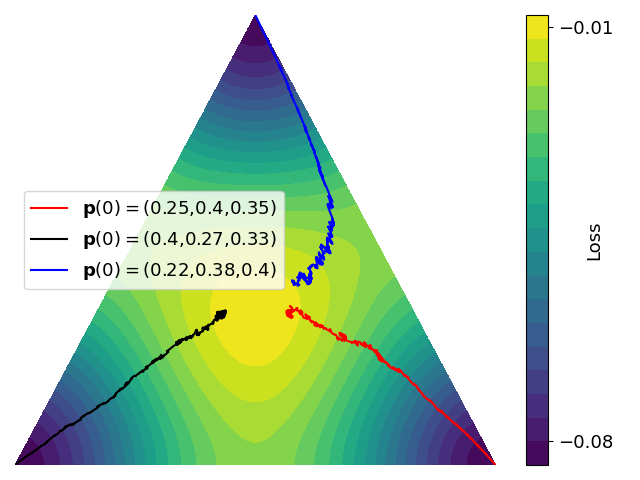}
    \includegraphics[width=0.32\linewidth]{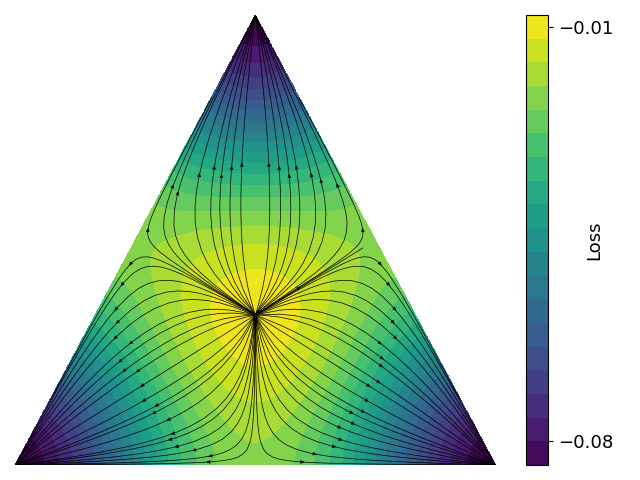}
    \includegraphics[width=0.32\linewidth]{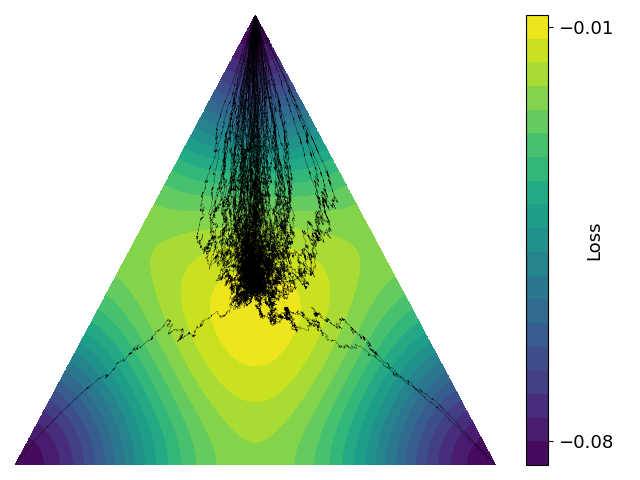}
    \caption{Contour plot of the loss function $L$ from \eqref{eq:LossFunction} on the probability simplex $\probsimplex$ for $d=3$ with different overlays. Left: Three sample trajectories of \eqref{eq:LearningRule_P} with different initial configurations $\p(0)$. Middle: Stream plot of the gradient field given by \eqref{eq:Lossgradient}. Right: 100 sample trajectories of \eqref{eq:LearningRule_P} with $\p(0)= (0.3,0.3,0.4)^\top$. All trajectories are simulated with 2000 iteration steps, learning rate $\alpha=0.01$ and $\mathbf{Z}(k)\sim \operatorname{Unif}([-1,1]^d)$.}
    \label{fig:Lossfunction_dynamics}
\end{figure}

\subsection{Linear convergence of the learning rule}\label{subsec:SGD}
We state the convergence guarantee for the learning rule \eqref{eq:LearningRule_P}. Renaming the indices, we can assume that $p_1(0)$ is the largest initial probability. Provided that $p_1(0)$ is strictly larger than each other component of $\p(0)$, the following theorem shows linear convergence of the first component to $1$ on an event $\Theta$ in expectation. The probability of $\Theta$ can be chosen arbitrarily close to $1$ by reducing the learning rate $\alpha$.
\begin{theorem}\label{thm:sgd}
Given $\epsilon\in(0,1)$, assume 
\[\Delta:=p_1(0)- \max_{i=2,\ldots,d}p_i(0)>0 \quad \text{and} \ \ \ 0<\alpha \leq\frac{\Delta^2}{16Q^2} \Big((1-Q\alpha)^3\land \frac{1}{256(1-p_1(0))}\Big(4\frac{\Delta}{d}+\Delta^2 \Big)\epsilon\Big).\]
Then there exists an event $\Theta$ with probability $\geq 1-\epsilon/2$ such that
	\begin{align*}
	\E\big[	\Vert \p(k)-\mathbf{e}_1\Vert_1\1_{\Theta}\big]\leq2\big(1-p_1(0)\big) \exp\Bigg(-\frac{\alpha}{16}\Big(4\frac{\Delta}{d}+\Delta^2\Big)k\Bigg), \quad \text{for all} \ k=0,1,\ldots
\end{align*}
Consequently, given $\delta>0$,
\begin{equation*}
    \P\Big(\Vert\p(k) - \mathbf{e}_1 \Vert_1\ge\delta\Big)\leq \epsilon\quad\text{for all}\quad k\geq \frac{16d}{\alpha\Delta(4+d\Delta)}\log\Big(\frac{4(1-p_1(0))}{\epsilon\delta} \Big).
\end{equation*}
\end{theorem}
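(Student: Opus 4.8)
The plan is to reduce everything to the single scalar $u(k):=1-p_1(k)=\tfrac12\Vert\p(k)-\mathbf{e}_1\Vert_1$, where the last identity holds because $p_1(k)\le 1$, $p_i(k)\ge 0$ and $\sum_{i\ge2}p_i(k)=1-p_1(k)$. The goal is a per-step contraction $\E[u(k+1)\mid\mathcal{F}_k]\le(1-r)u(k)$ with $r:=\tfrac{\alpha}{16}(4\Delta/d+\Delta^2)$, on a large event. Two intertwined difficulties appear: the contraction rate depends on the running gap $\Delta(k):=p_1(k)-\max_{i\ge2}p_i(k)$, which is random, and the gap only stays large while the contraction holds. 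To break this circularity I would introduce the stopping time $\tau:=\inf\{k:\Delta(k)<\Delta/2\}$, set $\Theta:=\{\tau=\infty\}$, and study the stopped process $V(k):=(1-p_1(k\wedge\tau))(1-r)^{-(k\wedge\tau)}$. If (i) $V$ is a supermartingale and (ii) $\P(\Theta^c)\le\epsilon/2$, the theorem follows: from (i), $\E[V(k)]\le V(0)=1-p_1(0)$, and on $\Theta$ one has $k\wedge\tau=k$, so $\E[u(k)\1_{\Theta}]=(1-r)^k\E[V(k)\1_{\Theta}]\le(1-p_1(0))e^{-rk}$, giving the stated $\ell_1$ bound after multiplying by $2$.

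The heart of (i) is a one-step drift estimate for the exact rule \eqref{eq:LearningRule_P}. Writing $1-p_1(k+1)=\frac{(1-p_1(k))+\alpha\sum_{i\ge2}p_i(k)Y_i(k)}{1+\alpha\,\p(k)^\top\mathbf{Y}(k)}$ and expanding the denominator (legitimate since $\alpha Q<1$ forces $|\alpha\,\p(k)^\top\mathbf{Y}(k)|<1$), I would use $\E[\mathbf{Y}(k)\mid\mathcal{F}_k]=\p(k)$ to obtain
\[
\E[\,1-p_1(k+1)\mid\mathcal{F}_k\,]=(1-p_1(k))-\alpha\,p_1(k)\big(p_1(k)-\Vert\p(k)\Vert^2\big)+\mathcal{E}_k,
\]
where every surviving error term carries a factor $\sum_{i\ge2}p_i=1-p_1$, so that $|\mathcal{E}_k|\le C\alpha^2Q^2(1-\alpha Q)^{-1}(1-p_1(k))$. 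On $\{k<\tau\}$ I would then lower-bound the drift coefficient: with $A:=\sum_{i\ge2}p_i^2/(1-p_1)\le\max_{i\ge2}p_i=p_1-\Delta(k)$ one gets $\tfrac{p_1(p_1-\Vert\p\Vert^2)}{1-p_1}=p_1(p_1-A)\ge p_1\cdot\tfrac{\Delta}{2}$, and combining $p_1\ge1/d$ with $p_1\ge\Delta(k)\ge\Delta/2$ yields $p_1(p_1-A)\ge\tfrac{\Delta}{4d}+\tfrac{\Delta^2}{8}$. The surplus $\tfrac{\alpha\Delta^2}{16}(1-p_1)$ over the target $r(1-p_1)$ absorbs $\mathcal{E}_k$ exactly under $\alpha\le\tfrac{\Delta^2}{16Q^2}(1-Q\alpha)^3$, giving $\E[1-p_1(k+1)\mid\mathcal{F}_k]\le(1-r)(1-p_1(k))$ on $\{k<\tau\}$ and hence the supermartingale property of $V$.

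For (ii) I would use that the gap shrinks only if some competitor ratio grows. Dividing the two multiplicative updates cancels the normalisation, so $\rho_i(k):=p_i(k)/p_1(k)$ satisfies $\rho_i(k+1)=\rho_i(k)\tfrac{1+\alpha Y_i(k)}{1+\alpha Y_1(k)}$, and the log-ratios $R_i(k)=\log\rho_i(k)$ perform a random walk with increments bounded by $\approx2\alpha Q$ and, on $\{k<\tau\}$, strictly negative conditional drift $\le-c\alpha\Delta$ (the same computation as in Step 1). A drop of the gap below $\Delta/2$ forces some $R_i$ to make an upward excursion past a fixed level $s=s(\Delta,p_1(0),d)>0$; I would bound its probability by a maximal inequality for the exponential supermartingale $e^{\theta R_i(k\wedge\tau)}$ (the negative drift makes $\E[e^{\theta D_i}\mid\mathcal{F}_k]\le1$ for $\theta\asymp\Delta/(\alpha Q^2)$), followed by a union bound over $i=2,\dots,d$. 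Forcing this below $\epsilon/2$ is what pins down the second constraint $\alpha\le\tfrac{\Delta^2}{16Q^2}\cdot\tfrac{(4\Delta/d+\Delta^2)\epsilon}{256(1-p_1(0))}$. This step is the main obstacle: it requires tracking the realised noise over the whole infinite horizon and handling the feedback between gap and drift, which is exactly why the stopped filtration and the negative-drift supermartingale are indispensable.

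Finally, the probabilistic statement follows by combining the two pieces with Markov's inequality. Splitting over $\Theta$ and $\Theta^c$,
\[
\P\big(\Vert\p(k)-\mathbf{e}_1\Vert_1\ge\delta\big)\le\frac{\E[\Vert\p(k)-\mathbf{e}_1\Vert_1\1_{\Theta}]}{\delta}+\P(\Theta^c)\le\frac{2(1-p_1(0))e^{-rk}}{\delta}+\frac{\epsilon}{2},
\]
and requiring the first term to be at most $\epsilon/2$ gives $rk\ge\log\tfrac{4(1-p_1(0))}{\epsilon\delta}$, i.e.\ $k\ge\tfrac{16d}{\alpha\Delta(4+d\Delta)}\log\tfrac{4(1-p_1(0))}{\epsilon\delta}$, which is precisely the claimed threshold since $r=\tfrac{\alpha\Delta(4+d\Delta)}{16d}$.
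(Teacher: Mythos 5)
Your overall architecture is the paper's: your event $\{\tau=\infty\}$ is exactly the paper's $\Theta=\bigcap_{k}\Omega(k)$ (gap $\geq\Delta/2$ for all iterates), and the proof splits into a contraction estimate on this event and a lower bound on its probability, finished off by Markov's inequality. Your step (i) is correct: the one-step expansion with all error terms carrying a factor $1-p_1(k)$ is Lemma \ref{lemma:sgd}, the drift lower bound on the benign event is Proposition \ref{prop:basic}, and your stopped supermartingale $V(k)=(1-p_1(k\wedge\tau))(1-r)^{-(k\wedge\tau)}$ is a clean repackaging of the paper's inequality \eqref{eq:rate}; the constant accounting can be made to close as you indicate.

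The genuine gap is in step (ii). Your log-ratio walks $R_i(k)=\log\big(p_i(k)/p_1(k)\big)$ have increments $\approx\alpha(Y_i(k)-Y_1(k))$ whose conditional variance is of order $\alpha^2\Var(Z_i-Z_1)\asymp\alpha^2$ \emph{uniformly}: the external noise $Z_i(k)-Z_1(k)$ does not attenuate as $\p(k)\to\mathbf{e}_1$. Hence the best exponent a Hoeffding-type supermartingale bound can produce is $\theta s\asymp \Delta s/(\alpha Q^2)$, and after the union bound over $i=2,\dots,d$ you need $(d-1)e^{-\theta s}\leq\epsilon/2$, i.e.\ a condition of the form $\alpha\lesssim \Delta s/\big(Q^2\log(d/\epsilon)\big)$. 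This is \emph{not} implied by the theorem's hypothesis: since $4\Delta/d+\Delta^2\geq\Delta^2$, the stated admissible range for $\alpha$ contains a value bounded below independently of $d$ (e.g.\ $\alpha=\tfrac{\Delta^2}{16Q^2}\min\{\tfrac18,\tfrac{\Delta^2\epsilon}{256(1-p_1(0))}\}$), so for fixed $\Delta,\epsilon,p_1(0)$ and $d$ large enough, $\theta s$ stays bounded while the union-bound factor $d-1$ diverges, and your claimed bound $\P(\Theta^c)\leq\epsilon/2$ fails. In particular, your sentence that this step ``pins down'' the stated second constraint on $\alpha$ is wrong; it pins down a different, $\log(d/\epsilon)$-dependent constraint, i.e.\ a weaker theorem.

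The paper's proof of this step works precisely because it keeps the noise on the additive probability scale, where it \emph{does} vanish near $\mathbf{e}_1$: it forms the stopped martingales $M_j(k)=\alpha\sum_{\ell\leq k}\xi_j(\ell)\1_{\Omega(\ell)}$ with $\E[M_j(k)^2]\leq 4Q^2\alpha^2\sum_{\ell}\E[p_j(\ell)\1_{\Omega(\ell)}]$ for $j\geq 2$ (and the analogous bound with $1-p_1(\ell)$ for $j=1$). Summing over $j$ collapses the $d$ variance bounds into the single quantity $Q^2\alpha^2\sum_\ell\E[(1-p_1(\ell))\1_{\Omega(\ell)}]$, which the contraction makes geometrically summable of size $Q^2\alpha^2(1-p_1(0))/r$; thus the union bound costs nothing (total variance across components is conserved) and the factor $1-p_1(0)$ appears automatically. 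Doob's $L^2$ maximal inequality together with Lemma \ref{lemma:ek} then gives $\P(\Theta^c)\lesssim Q^2\alpha(1-p_1(0))\,d/\big(\Delta^3(4+d\Delta)\big)\leq\epsilon/2$ exactly under the stated hypothesis. To salvage your route you would either have to import this variance-summing mechanism (e.g.\ control an aggregate such as $\sum_{i\geq2}\rho_i$ rather than each $\rho_i$ separately) or strengthen the hypothesis on $\alpha$ by a $\log(d/\epsilon)$ factor.
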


\begin{remark}
    If all weights are equal at the starting point of the learning algorithm, the assumption $p_1(0)- \max_{i=2,\ldots,d}p_i(0)>0$, is equivalent to requiring $\lambda_1(0)- \max_{i=2,\ldots,d}\lambda_i(0)>0$. In this case, the convergence of $\p(k)$ to $\mathbf{e}_1$ corresponds to the network performing a winner-take-all mechanism \citep{feldmanConnectionistModelsTheir1982, yuilleWinnerTakeAllMechanismBased1989, osternips2005, suSpikeBasedWinnerTakeAllComputation2019,lynchWinnerTakeAllComputationSpiking2019}. The competitive selection of input neurons in Hebbian STDP has been observed by \citep{songCompetitiveHebbianLearning2000, songCorticalDevelopmentRemapping2001, gilsonRepresentationInputStructure2010}, among others. Our results extend existing findings by going beyond ensemble averages and also provide a rate of convergence for the random dynamics.   
\end{remark}

The convergence on an event of high probability is in line with other recent results on noisy/stochastic gradient descent for non-convex loss functions, see e.g.\ \cite{2020_Neurips_SGD_Martikopolous,weissmann2025almost} or \cite[Theorem 2.5]{dereich2024convergenceratesadamoptimizer}. Contrary to these results, we can choose a constant learning rate and obtain linear convergence. To illustrate the reason for this, we give a brief overview of the proof of Theorem \ref{thm:sgd}. The full proof can be found in Section \ref{subsec: proof sgd}.
\begin{enumerate}
    \item We restrict the analysis to the event $\Theta$ on which
    \begin{equation*}
         p_1(k)- \max_{i=2,\ldots,d}p_i(k)\geq c>0,
    \end{equation*}
    holds for all iterates $k$. On this event, the derivative of the first component can be bounded from below by
       \begin{equation}\label{eq:grad lower bound}
    p_1(k)(p_1(k)-\Vert \p(k)\Vert^2)\gtrsim 1-p_1(k). 
    \end{equation}
    \item As described in \eqref{eq:LearningRule_P}, we apply a Taylor approximation to the original dynamics. We bound the error term for the $i$\textsuperscript{th} component $p_i(k+1)$ by the order $\alpha^2p_i(k)(1-p_i(k))$. The approximation error is dominated by the gradient update on $\Theta$, if the learning rate is small enough (see \eqref{eq:grad lower bound}.
    \item Similarly as in \eqref{eq:LearningDynamics_P_aux}, we restate the learning increments of the dynamics as the sum of the true gradient and a centred noise vector $\boldsymbol{\xi}(k)$. By \eqref{eq:grad lower bound}, this decomposition yields linear convergence of $p_1(k) \to 1$ on $\Theta$.
    \item To find a lower bound for the probability of the chosen event $\Theta$, we employ a similar strategy as \citet{2020_Neurips_SGD_Martikopolous}. Through the representation \eqref{eq:LearningDynamics_P_aux}, we can show that $\Theta$ occurs, as soon as $\mathbf{M}(k)\coloneq \alpha \sum_{i=1}^k\boldsymbol{\xi}(i)$ is uniformly bounded by some sufficiently small constant. As $(\mathbf{M}(k))_{k\in\N}$ is a martingale, the probability of the latter event can be controlled through Doob's submartingale inequality (see \eqref{eq:doob}).
    \item To apply Doob's submartingale inequality, we bound the second moment of $\mathbf{M}(k)$. Since the variance of the components of $\boldsymbol{\xi}(k)$ is also dominated by $1-p_1(k)$, we achieve a bound of the order $\alpha^2\sum_{i=1}^\infty (1-p_1(i))\1_{\Theta}$. This series is summable as we have linear convergence to $0$, which allows us to choose a constant learning rate $\alpha$.
\end{enumerate}

\subsection{Associated gradient flow}\label{subsec:GradientFlow}

In this subsection, we consider the associated gradient flow of probabilities $\p(t)$ as a vector-valued function, which solves the ODE
\begin{equation}
    \frac{\d}{\d t}\p(t) = \p(t)\odot \big(\p(t) - \Vert \p(t)\Vert^2\ones \big)=-\nabla L\big(\p(t)\big),\quad t\ge 0\label{eq:Rate-based_learnignrule}
\end{equation}
and is initialized by the probability vector $\p(0)$. By definition, $\tfrac{\d}{\d t} \sum_{i=1}^d p_i(t)=0$, such that $\sum_{i=1}^d p_i(t)=\sum_{i=1}^d p_i(0) =1$. Since the updating rule is multiplicative, the gradient flow produces for all $t\ge 0$ a probability vector. The gradient flow \eqref{eq:Rate-based_learnignrule} also occurs as a specific \emph{replicator equation} in evolutionary game theory, see \citet[Chapter 7]{hofbauerEvolutionaryGamesPopulation1998}. Elementary properties and an explicit solution for the gradient flow with $d=2$ are derived in Section \ref{subsec:GradientFlow}. Although the loss function $L(\p)$ in \eqref{eq:LossFunction} does not satisfy a global Polyak-\L{}ojasiewicz condition, in particular it is not globally convex, we can deduce the following convergence for the ODE \eqref{eq:Rate-based_learnignrule}.

\begin{theorem}\label{thm:GradientFlow}
    Assume
    \begin{equation*}
        p_1(0)\geq \max_{i=2,\ldots,d}p_i(0)+\Delta,
    \end{equation*}
    for some $\Delta>0$.
    Then
     \begin{equation*}
        \Vert \mathbf{e}_1-\p(t)\Vert_1\le 2\big(1-p_1(0)\big)\exp\Big(- \frac{\Delta}{d} (1+(d-1)\Delta) t\Big),
    \end{equation*}
    that is linear convergence of $\p(t)\to \mathbf{e}_1$ as $t\to\infty$.
\end{theorem}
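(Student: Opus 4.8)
The plan is to reduce everything to the scalar quantity $1-p_1(t)$ and to establish a one-sided differential inequality for it. Since the flow \eqref{eq:Rate-based_learnignrule} keeps $\p(t)$ in $\probsimplex$ (as already noted, $\sum_i p_i(t)\equiv 1$ and the multiplicative form preserves nonnegativity), the components are nonnegative and $1-p_1(t)=\sum_{i\ge 2}p_i(t)\ge 0$. Hence $\Vert\mathbf{e}_1-\p(t)\Vert_1=(1-p_1(t))+\sum_{i\ge 2}p_i(t)=2(1-p_1(t))$, which already explains the prefactor $2$. Differentiating and using \eqref{eq:Rate-based_learnignrule}, $\frac{\d}{\d t}(1-p_1(t))=-p_1(t)\big(p_1(t)-\Vert\p(t)\Vert^2\big)$, so the whole theorem amounts to bounding $p_1(p_1-\Vert\p\Vert^2)$ from below by a constant multiple of $1-p_1$.

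The crucial preliminary step is to show that the ordering, and in fact the gap $\Delta$, is preserved along the flow. For $j\ge 2$ a direct computation from \eqref{eq:Rate-based_learnignrule} gives
\[
\frac{\d}{\d t}\big(p_1(t)-p_j(t)\big)=\big(p_1(t)-p_j(t)\big)\big(p_1(t)+p_j(t)-\Vert\p(t)\Vert^2\big).
\]
Reading this as a linear ODE in $p_1-p_j$ with continuous coefficient and solving via the integrating factor yields $p_1(t)-p_j(t)=(p_1(0)-p_j(0))\exp\!\big(\int_0^t(p_1+p_j-\Vert\p\Vert^2)\,\d s\big)$; since the initial gap is positive, this is strictly positive for all $t$ regardless of the sign of the exponent, so $p_1(t)=\max_i p_i(t)$ for all $t$. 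This in turn forces $\Vert\p\Vert^2=\sum_i p_i^2\le(\max_i p_i)\sum_i p_i=p_1$, whence the integrand $p_1+p_j-\Vert\p\Vert^2\ge p_j\ge 0$ is nonnegative. Therefore the gap is nondecreasing and $p_1(t)-p_j(t)\ge p_1(0)-p_j(0)\ge\Delta$ for all $t\ge 0$ and all $j\ge 2$. There is no circularity here: positivity of the gap is obtained first and unconditionally, and only afterwards used to sign the integrand.

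With the gap secured, the remaining estimates are algebraic. Expanding with $\sum_i p_i=1$ gives the identity $p_1-\Vert\p\Vert^2=\sum_{j\ge 2}p_j(p_1-p_j)$, which together with $p_1-p_j\ge\Delta$ yields $p_1-\Vert\p\Vert^2\ge\Delta\sum_{j\ge 2}p_j=\Delta(1-p_1)$. Moreover, summing the gap bound $p_j\le p_1-\Delta$ over $j\ge 2$ gives $1\le dp_1-(d-1)\Delta$, i.e.\ $p_1\ge\frac{1+(d-1)\Delta}{d}$. Combining,
\[
\frac{\d}{\d t}(1-p_1)=-p_1(p_1-\Vert\p\Vert^2)\le-\frac{\Delta}{d}\big(1+(d-1)\Delta\big)(1-p_1),
\]
and Grönwall's inequality gives $1-p_1(t)\le(1-p_1(0))\exp\!\big(-\tfrac{\Delta}{d}(1+(d-1)\Delta)t\big)$. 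Multiplying by $2$ and invoking the first paragraph's identity finishes the proof.

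I expect the only genuine subtlety to be the gap-preservation argument of the second paragraph, specifically avoiding the apparent circularity between ``$p_1$ is the maximum'' and ``the gap does not shrink''; the integrating-factor representation resolves this cleanly, since it certifies positivity of the gap without needing the sign of the coefficient in advance. Global existence of the flow on $[0,\infty)$ is not an issue, as the vector field is smooth and $\probsimplex$ is compact and invariant. Everything else reduces to the two elementary identities for $\Vert\p\Vert^2$ and a single application of Grönwall.
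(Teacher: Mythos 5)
Your proof is correct, and its overall architecture is the same as the paper's: reduce to the scalar quantity $1-p_1(t)$ via $\Vert\mathbf{e}_1-\p(t)\Vert_1=2(1-p_1(t))$, show that the gap $\Delta$ is preserved along the flow, combine the two bounds $p_1-\Vert\p\Vert^2=\sum_{j\ge 2}p_j(p_1-p_j)\ge\Delta(1-p_1)$ and $p_1\ge\tfrac{1}{d}(1+(d-1)\Delta)$, and conclude with Gr\"onwall. The one place where you genuinely diverge is the proof of gap preservation, which in the paper is Lemma \ref{lem:PropertiesGradientFlow}\eqref{num:Monotonicity}. The paper, like you, first obtains sign preservation from the integrating-factor identity $p_1(t)-p_j(t)=(p_1(0)-p_j(0))\exp\big(\int_0^t(p_1+p_j-\Vert\p\Vert^2)\,\d s\big)$, but it then proves the quantitative bound $p_1(t)-p_j(t)\ge\Delta$ by a bootstrap: using the crude speed bounds $p_1(t)\ge p_1(0)-t$ and $p_i(t)\le p_i(0)+t$ it secures the ordering, hence $p_1\ge\Vert\p\Vert^2$, hence monotonicity of the gaps, on the interval $[0,\Delta/2]$, and then iterates this argument over successive intervals of length $\Delta/2$. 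You instead observe that sign preservation alone already gives $p_1(t)=\max_i p_i(t)$ for all $t\ge 0$, hence $\Vert\p(t)\Vert^2\le p_1(t)$ unconditionally, so the exponent in the integrating-factor identity is nonnegative and all gaps are nondecreasing globally, in one stroke. Your variant is a genuine simplification of that lemma: it removes the iteration and the auxiliary Lipschitz bounds without losing anything, since the paper's bootstrap yields exactly the same conclusion. Everything else in your argument (the two algebraic inequalities, the differential inequality $\frac{\d}{\d t}(1-p_1)\le-\frac{\Delta}{d}(1+(d-1)\Delta)(1-p_1)$, and the final Gr\"onwall step) coincides with the paper's proof.
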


\subsection{Biological plausibility of the proposed learning rule}\label{subsec:Bioplausibility}

\begin{wrapfigure}{r}{0.4\textwidth}
\includegraphics[width = 0.4\textwidth]{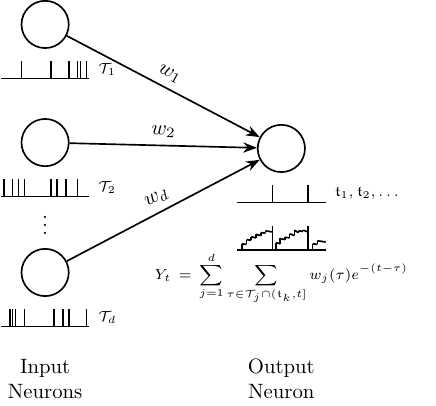}
\caption{Considered biological neural network with spike trains and membrane potential $Y_t$ of the postsynaptic neuron.}
\label{fig:wrapfig_Bio}
\vspace{-1em}
\end{wrapfigure}

We study a biological neural network consisting of $d$ input (or presynaptic) neurons, which are connected to one output (or postsynaptic) neuron. For the subsequent argument, we assume that the spike times of the $d$ input neurons are given by the corresponding jump times of $d$ independent Poisson processes $(X_t^{(1)})_{t\geq 0}, \ldots, (X_t^{(d)})_{t\geq 0}$ with respective intensities $\lambda_1,\dots,\lambda_d$. All neurons are excitatory and each connection between an input neuron $j\in[d]$ and the output neuron has a time varying and non-negative synaptic strength parameter, which we denote by $w_j(t)\ge 0$.

An idealized model is that a spike of the $j$\textsuperscript{th} input neuron at time $\tau$ causes an exponentially decaying contribution to the postsynaptic membrane potential of the form $t\mapsto w_j(\tau)Ce^{-c(t-\tau)}\1_{t\geq \tau}$. We set the parameters $c,C>0$ to one, as this can always be achieved by a time change $t\mapsto tc$ and a change of units of the voltage in the membrane potential.

If $\mathcal{T}_j$ denotes the spike times of neuron $j\in [d]$, the postsynaptic membrane potential $(Y_t)_{t\ge 0}$ is given by $Y_0=0$ and $Y_t =  \sum_{j=1}^d \sum_{\tau \in \mathcal{T}_j, \tau\leq t} w_j(\tau)e^{-(t-\tau)}$ for all $t\ge 0$  until $Y_t\geq S$, where $S>0$ is a given threshold value. Once the threshold $S$ is surpassed, the postsynaptic neuron emits a spike and its membrane potential is reset to its rest value, which we assume to be $0$. Afterwards, the incoming spikes will contribute to rebuilding the postsynaptic membrane potential. If $\mft_0\coloneq 0<\mft_1 < \mft_2 < \ldots$ denote the postsynaptic spike times, the membrane potential at arbitrary time is therefore given by 
\begin{equation}
    Y_t =  \sum_{j=1}^d \sum_{\tau \in \mathcal{T}_j \cap (\mft_{k},t]} w_j(\tau)e^{-(t-\tau)} \quad \text{for all} \ \ \mft_{k}< t\leq \mft_{k+1}.\label{eq:Postsynaptictrace}
\end{equation}

We consider the following pair-based spike-timing-dependent plasticity (STDP) rule (\citep[Section 19.2.2]{gerstnerNeuronalDynamicsSingle2014}): A spike of the $j$\textsuperscript{th} presynaptic neuron at time $\tau$ causes the weight parameter function $t\mapsto w_j(t)$ to decrease at $\tau$ by $\alpha e^{-(\tau-\mft_-)}$, where $\mft_-$ is the last postsynaptic spike time before $\tau$ and to increase at any postsynaptic spike time $\mft_k$ by $\alpha \sum_{\tau \in \mathcal{T}_j\cap (\mft_{k},\mft_{k+1}]} e^{-(\tau-\mft_k)}$, with $\alpha>0$ the learning rate. As common in the literature, spike times that occurred before $\mft_{k}$ only have a minor influence and are neglected in the updating of the weights after $\mft_{k}$. The term $\sum_{\tau \in \mathcal{T}_j\cap (\mft_{k},\mft_{k+1}]} e^{-(\tau-\mft_k)}$ is then the trace (\citep[Equation (19.12)]{gerstnerNeuronalDynamicsSingle2014}) of the $j$\textsuperscript{th} presynaptic neuron at time $\mft_{k+1}$.

For mathematical convenience, we will assume that all weight-updates in $(\mft_{k},\mft_{k+1}]$ are delayed to the postsynaptic spike times $\mft_k$ in the sense that the learning rule becomes 
\begin{equation}
    w_{j}(\mft_{k+1})
    =w_{j}(\mft_{k})
    \bigg(1 +\alpha 
    \Big(\sum_{\tau \in \mathcal{T}_j\cap (\mft_{k},\mft_{k+1}]}
    e^{-(\mft_{k+1}- \tau)} - e^{-(\tau-\mft_{k})}\Big)\bigg), \quad k=0,1,\ldots \label{eq:STDP_Rule}
\end{equation}
The postsynaptic spike times $\mft_k$ are the moments at which the postsynaptic membrane potential $Y_t$ reaches the threshold $S$. They depend on the presynaptic spike times, however, the exact dependence is hard to characterise in the assumed model. For mathematical tractability, we will instead work with an adjusted rule to select the postsynaptic spike times $\mft_1,\mft_2,\ldots$ Since $Y_t$ only increases at the presynaptic spike times, $\mft_{k+1}$ has to happen at a presynaptic spike time. Denote by $\tau_{j1},\tau_{j2},\ldots$ the spike times of the $j$-th presynaptic neuron after the previous postsynaptic spike time $\mft_k$ in increasing order. The distribution of $\mft_{k+1}|\mft_k$ is completely determined by the probabilities
    \[
        \mathbb{P}\big(\mathfrak{t}_{k+1}=\tau_{j\ell}\big)
        = \mathbb{P}\big(\mathfrak{t}_{k+1}=\tau_{j\ell} \big|\mathfrak{t}_{k+1}\in (\tau_{jm})_{m\geq 1} \big)
        \mathbb{P}\big(\mathfrak{t}_{k+1}\in (\tau_{jm})_{m\geq 1}\big), \ j=1,\ldots,d, \ \ell=1,\ldots 
    \]
    Based on probabilistic arguments related to the underlying Poisson processes that we outline in Section A.3, we replace the probabilities $P(\mathfrak{t}_{k+1}\in (\tau_{jm})_{m\geq 1})$ by the probabilities
    \begin{equation}
    \frac{\lambda_j w_j(\mft_{k})}{\sum_{\ell=1}^d \lambda_\ell w_\ell(\mft_{k})}.
    \label{eq.prob_j_causes_spike}
\end{equation}
Working with \eqref{eq.prob_j_causes_spike} instead of $\mathbb{P}(\mathfrak{t}_{k+1}\in (\tau_{jm})_{m\geq 1})$ results in an approximation of the distribution of $\mathfrak{t}_{k+1}.$ Lemma \ref{lem:equal_weights} describes a setting, where \eqref{eq.prob_j_causes_spike} is exact. If all weights are much larger than the threshold $S$, every presynaptic spike causes a postsynaptic spike. The proof of Lemma \ref{lem:equal_weights} can be adapted to this case to show that the probability that the $j$\textsuperscript{th} neuron emits the first spike is $\lambda_j/\sum_{\ell=1}^d\lambda_\ell$. Since Hebbian learning is intrinsically unstable, we argue that the proposed approximation describes the dynamic at the beginning of the learning process. This view is corroborated by experimental results, see point (vi) of \citet[Section 2.1]{morrisonPhenomenologicalModelsSynaptic2008}.

Compared to the original definition of $\mft_{k+1}$, the proposed sampling scheme has the advantage that the presynaptic spike times, which were not selected as postsynaptic spike time, add centred noise to the updates. More precisely, one can show that by the construction of $\mft_k$ and the properties of the underlying Poisson processes, the conditional distribution $\tau | \{\tau \in (\mft_{k}, \mft_{k+1})\}$ is uniformly distributed on $(\mft_{k}, \mft_{k+1})$. By the symmetry relation $e^{-(b -u)}-e^{-(u-a)}= -(e^{-(b -v)}-e^{-(v-a)}) \in [-1,1]$, which holds for all real numbers $a\leq u\leq b$ with $v=b+a-u \in [a,b]$, this implies that conditionally on  $\tau\in (\mft_{k}, \mft_{k+1})$, the random variable $e^{-(\mft_{k+1}- \tau)} - e^{-(\tau-\mft_{k})}$ is centred and supported on $[-1,1]$. The update rule \eqref{eq:STDP_Rule} then becomes 
\begin{align}
    w_{j}(\mft_{k+1})
    =w_{j}(\mft_{k})
    \bigg(1 +\alpha \1_{\{j=j^\ast(k+1)\}} \big(1 - e^{\mft_{k}-\mft_{k+1}}\big) 
    + \alpha
    \sum_{\tau \in \mathcal{T}_j\cap (\mft_{k},\mft_{k+1})} Z(\tau, j)\bigg),
\end{align}
with centred random variables $Z(\tau, j)$ satisfying $\vert Z(\tau, j) \vert\leq 1$. Assuming that the postsynaptic firing rate is slow compared to the learning dynamic, we discard the term $e^{\mft_{k}-\mft_{k+1}} \ll 1$. Since $j^\ast(k+1)$ follows a multinomial distribution with parameters $\lambda_j w_j(\mft_{k})/(\sum_{\ell=1}^d \lambda_\ell w_\ell(\mft_{k}))$, the term $\1_{\{j=j^\ast(k+1)\}}$ corresponds to the $j$\textsuperscript{th} component of $\mathbf{B}(k)$ in \eqref{eq:LearningRule_W}. This motivates the learning rule \eqref{eq:LearningRule_W}. Additional details on the derivation are given in Subsection \ref{subsec:Bioplausibility_proofs} of the supplementary material.

\section{A mirror descent perspective}

In this section, we rewrite the gradient flow \eqref{eq:Rate-based_learnignrule} as natural gradient descent on the probability simplex and relate the discrete-time learning rule \eqref{eq:LearningRule_P} for the probabilities $\p$ to noisy mirror gradient descent.

Recall from \eqref{eq:LearningDynamics_P_aux} that the learning rule \eqref{eq:LearningRule_P} can be interpreted as noisy gradient descent with respect to the loss function $L$ from \eqref{eq:LossFunction} in the Euclidean geometry. As we consider a flow on probability vectors, a different perspective is to use the natural geometry of the probability simplex. To this end, we consider the interior of the probability simplex $\mathcal{M}\coloneq \operatorname{int}(\probsimplex)$ as a Riemannian manifold with tangent space $\mathcal{T}_{\p}\mathcal{M}= \{\mathbf{x}\in \mathbb{R}\,\colon \, \ones^\top \mathbf{x}=0\}$ for every $\p\in \mathcal{M}$. A natural metric on $\mathcal{M}$ is given by the \emph{Fisher information metric / Shahshahani metric} \citep{amariMethodsInformationGeometry2007, hofbauerEvolutionaryGamesPopulation1998}, which is induced by the metric tensor $d_{\p}\colon \mathcal{T}_{\p}\mathcal{M}\times \mathcal{T}_{\p}\mathcal{M} \to\mathbb{R}$, $(\u,\v)\mapsto {\u} ^\top \operatorname{diag}(\p)^{-1} \v$ at $\p\in\mathcal{M}$. Here, $\operatorname{diag}(\p)\in\mathbb{R}^{d\times d}$ is the diagonal matrix with diagonal entries given by $\p$. We refer to Figure 1 of \citet{mertikopoulosRiemannianGameDynamics2018} for an illustration of unit balls in this metric. The (Riemannian) gradient of the loss function $\wt{L}(\p) = -\Vert \p \Vert^2/2$ with respect to $d_{\p}$ is given by  
\begin{equation*}
    \nabla_{d_{\p}} \wt{L}(\p) = \operatorname{diag}(\p) \nabla \wt{L}(\p)\in\mathcal{T}_{\p}\mathcal{M}, 
\end{equation*}
where we denote by $\nabla \wt{L}$ the Euclidean gradient of $\wt{L}$. The Riemannian gradient flow is called \emph{natural gradient flow} in information geometry \citep{amariNaturalGradientWorks1998} and \emph{Shahshahani gradient flow} in evolutionary game theory \citep{hofbauerEvolutionaryGameDynamics2003}. When transforming the Euclidean gradient flow for $L$ to a Riemannian gradient flow on the probability simplex, the part $+\Vert \p\Vert^2\ones$ is orthogonal to $\mathcal{T}_{\p}\mathcal{M}$. Consequently, it does not contribute a direction on the probability simplex. Consequently, the Riemannian gradient flow of $\wt{L}$ and the Euclidean gradient flow of $L$ coincide.

The mirror descent algorithm \citep{nemirovskyProblemComplexityMethod1983} prescribes the discrete-time optimisation algorithm
\begin{equation}
    \p(k+1)  \in \argmin_{\p\in\mathcal{M}} \left\{\p ^\top  \nabla f(\p(k)) + \frac{1}{\alpha} \Phi(\p,\p(k))\right\},\quad k=0,1,\dots,\label{eq:EntropicMirrorDescent}
\end{equation}
where $f\colon \mathcal{M}\to\mathbb{R}$ is the function to be minimised and $\Phi\colon \mathcal{M}\times \mathcal{M}\to \mathbb{R}_+$ is a suitable proximity function. Euclidean gradient descent is recovered by the choice $\Phi(\p,\p(k))= \Vert \p-\p(k)\Vert^2$. It is well-known that the natural gradient flow is the continuous-time analogue of the \emph{exponentiated gradient descent} or \emph{entropic mirror descent}, where $\Phi(\p,\p(k))= \operatorname{KL}(\p\Vert \p(k))$ is chosen as the Kullback--Leibler divergence between $\p$ and $\p(k)$ \citep{alvarezHessianRiemannianGradient2004, wibisonoVariationalPerspectiveAccelerated2016, raskuttiInformationGeometryMirror2015}. Consequently, the gradient flow \eqref{eq:Rate-based_learnignrule} can also be viewed as continuous-time version of entropic mirror descent with respect to $f=\wt{L}$. This connection transfers to the discrete-time and noisy updating rule \eqref{eq:LearningRule_P}. An alternative approach for connecting our proposed discrete-time learning rule \eqref{eq:LearningRule_P} to entropic mirror descent is included in Subsection \ref{subsec:EntropicMirrordescent} of the supplementary material.

\section{Multiple weight vectors}\label{subsec:MultipleWeights}

\begin{wrapfigure}{r}{0.2\textwidth}
\includegraphics[width = 0.2\textwidth]{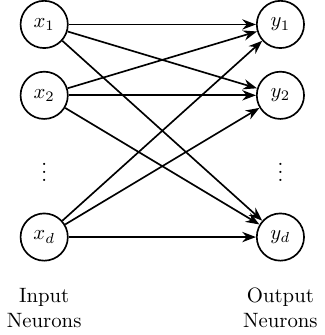}
\caption{Neural network with $d$ input/output neurons.}
\vspace{1em}
\end{wrapfigure}

The learning rule \eqref{eq:LearningRule_W} aligns the output neuron with the input neuron of the highest intensity, but no information about the remaining input neurons is unveiled. As a proof-of-concept, we generalise the learning algorithm \eqref{eq:LearningRule_W} to estimate the order of the intensities $\lambda_1,\dots,\lambda_d$. To this end, we consider $d$ different output neurons, which are connected to the $d$ input neurons via the weight vectors $\w_1,\dots, \w_d\in\mathbb{R}^d$. The weights at time $k$ are combined into the matrix
\begin{equation*}
    \mathbf{W}(k) = \begin{bmatrix}\mathbf{w}_1(k) &\cdots &\mathbf{w}_d(k)\end{bmatrix}\in\mathbb{R}^{d\times d},\quad k=0,1,\dots
\end{equation*}
and the corresponding probabilities $\p_1,\dots, \p_d$ are combined into the matrix $\mathbf{P}(k) = \begin{bmatrix}\p_1(k) &\cdots &\p_d(k)\end{bmatrix}\in\mathbb{R}^d$. By reordering the neurons, we can achieve $\lambda_1\le \lambda_2\le \cdots\le \lambda_d$. If the intensities are strictly ordered, our goal is the alignment of the $j$\textsuperscript{th} output neuron with the $j$\textsuperscript{th} input neuron, which amounts to the convergence of $\mathbf{P}(k)$ to the identity matrix $\identity\in\mathbb{R}^{d\times d}$ as time increases. If multiple intensities are equal, convergence is up to permutations within the group of equal intensities. We propose Algorithm \ref{alg:MultipleWeights}, which constitutes an STDP rule as lines 3 - 4 can be implemented using the spike-trains and the learning rule \eqref{eq:STDP_Rule}.

\begin{algorithm}\label{alg:MultipleWeights}
\KwIn{$K\in\mathbb{N}$: number of iterations, $\mathbf{W}(0)\in\mathbb{R}^{d\times d}$: weight initialisation, $\alpha_1,\dots,\alpha_d$: learning rates of the output neurons.
}
\For{$k=0,1,\dots, K-1$}{
    \For{$j=1,\dots, d$}{
        Receive $\mathbf{B}_j(k) \sim M(1,\p_j(k))$ with $\p_j(k)\gets \llambda\odot \w_j(k)/\llambda^\top \w_j(k)$ and $\mathbf{Z}_j(k)\sim \operatorname{Unif}([-1,1]^d)$ from spike trains\;
        Compute the base change $\Delta \w_j(k)\gets \alpha_j\w_j(k)\odot (\mathbf{B}_j(k) + \mathbf{Z}_j(k))$\;
        Update \begin{equation*}\w_j(k+1) \gets \Delta \w_j(k) - \sum_{i=1}^{j-1} \frac{(\Delta \w_j(k))^\top \w_i(k)}{\Vert \w_i(k)\Vert^2} \w_i(k);\end{equation*}
    }
}
\KwOut{The weight evolution $\mathbf{W}(k) = [\w_1(k)\,\cdots \, \w_d(k)]$, $k=0,\dots, K$ and probability evolution $\mathbf{P}(k) = [\p_1(k)\,\cdots \,\p_d(k)]$, $k=1,\dots, K$.}
\caption{Aligning multiple output neurons}
\end{algorithm}

Algorithm \ref{alg:MultipleWeights} is inspired by Sanger's rule \citep{sangerOptimalUnsupervisedLearning1989} for learning $d$ principal components in streaming PCA. The first weight vector $\w_1(k)$ aligns with $\mathbf{e}_1$ by Theorem \ref{thm:sgd} since its dynamic equals the learning rule \eqref{eq:LearningRule_W}. By removing the components of the change $\Delta \w_j(k)$ in the direction of $\w_1(k),\dots, \w_{j-1}(k)$ in line 5 of Algorithm \ref{alg:MultipleWeights}, the weight vector $\w_j(k)$ is forced to converge to $\mathbf{e}_j$, similarly to the Gram--Schmidt algorithm. 

Simulations of the corresponding probability matrix $\mathbf{P}(k)$ with varying learning rates and $\mathbf{Z}$ drawn i.i.d.\ from $\operatorname{Unif}([-1,1]^d)$ are included in Figure \ref{fig:Multiple_WeightVectors}. We choose different learning rates for the $d$ vectors satisfying $\alpha_1>\cdots>\alpha_d>0$. This ordering ensures fast convergence of the lower order weight vectors to the correct standard basis vector and counteracts the impact of initial misalignments of the higher order weight vectors. The simulation of Algorithm \ref{alg:MultipleWeights} shown in Figure \ref{fig:Multiple_WeightVectors} displays a decrease of the Frobenius error $\Vert \mathbf{P}(k)-\identity\Vert^2/2$ over the iteration index $k$, when averaged (blue line). Nevertheless, we observe that for a single trajectory, the error can plateau around 1 and 2. Given that the probability vectors tend to converge to standard basis vectors $\{\mathbf{e}_1,\dots,\mathbf{e}_d\}$, a non-vanishing error is due to an incorrect ordering or duplicates. Consequently, the error $\Vert \mathbf{P}(k)-\identity\Vert^2/2$ corresponds to the number of output neurons aligning with the incorrect input neuron, and plateaus at 1, 2 and 3 can arise. Theorem \ref{thm:sgd} shows that this phenomenon can be mitigated by slower learning rates, which is corroborated by decreasing the base learning rate from $10^{-3}$ to $10^{-4}$ in the simulation. A rigorous mathematical analysis of the Algorithm \ref{alg:MultipleWeights} is challenging due to joint updates in all read-out neurons. In Section \ref{subsec:Theory_MultipleNeurons} we show that Theorem \ref{thm:sgd} is applicable if the learning is split into disjoint learning periods, and we derive theoretical convergence guarantees.

\begin{figure}[t]
    \centering
    \includegraphics[width = 0.325\textwidth]{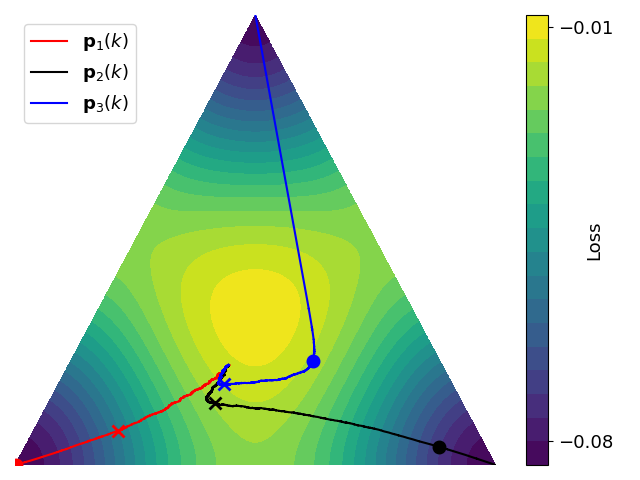}
    \includegraphics[width = 0.325\textwidth]{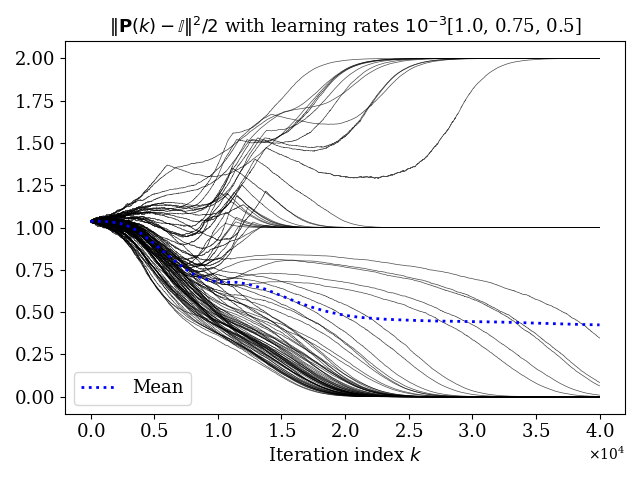}
    \includegraphics[width = 0.325\textwidth]{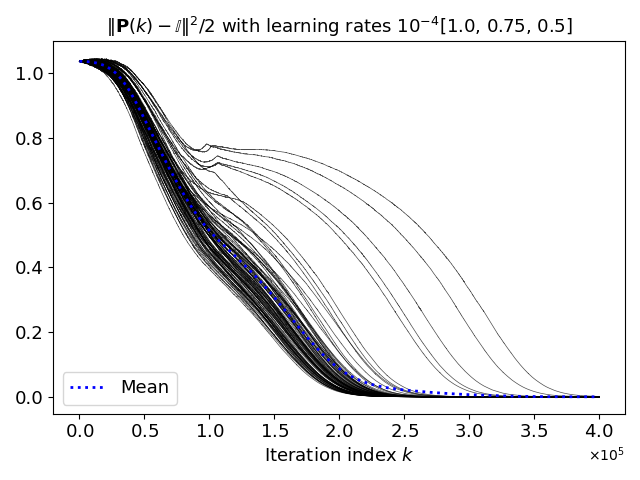}
    \caption{Probability matrix $\mathbf{P}(k)$ arising from the weight dynamic $\mathbf{W}(k)$ of Algorithm \ref{alg:MultipleWeights} for dimensions $n=d=3$. The weights are initialised equally, and the intensities are given by $\llambda=(10,7.5,5)^\top$. The resulting initial probabilities are $\p_1(0)=\p_2(0)=\p_3(0) =(4/9, 1/3, 2/9)^\top$. Left: A single trajectory with learning rates $10^{-3}(1,0.75,0.5)^\top$ and $4\times 10^{4}$ iterations. The markers $\times$ and $\bullet$ correspond to the probabilities at $k=4\times 10^{3}$ and $k=10^4$. Middle \& right: The Frobenius error $\Vert \mathbf{P}(k) - \identity\Vert^2/2$ of 100 trajectories with learning rates $10^{-3}(1,0.75,0.5)^\top$ and $10^{-4}(1,0.75,0.5)^\top$, respectively.}
    \label{fig:Multiple_WeightVectors}
\end{figure}

\section{Extensions, discussion and limitations}

\textbf{Time-inhomogeneous intensities.} We considered input spike trains generated from Poisson point processes with fixed intensity. 
It is natural to extend this to time-inhomogeneous intensities. Here we assume that the intensities of the input neurons are constant on the interval $(k,k+1]$ and are stored in the vector $\llambda(k).$ The intensities $\llambda(k)$ and weights $\w(k)$ determine the spike-triggering-probabilities $\p(k) = \llambda(k) \odot \w(k)/  \llambda(k)^\top \w(k)=\E[\mathbf{Y}(k)]$ and the update formula \eqref{eq:Probabilities_intro} 
becomes
\begin{equation}
    \p(k+1) = \frac{\llambda(k+1)\odot\left(\w(k)\odot (\ones+\alpha\mathbf{Y}(k))\right)}{\llambda(k+1)^\top\left(\w(k) \odot\left(\ones+\alpha \mathbf{Y}(k)\right)\right)} =  \frac{\wt \p(k)\odot (\ones+\alpha\mathbf{Y}(k))}{\wt\p(k)^\top\left(\ones+\alpha \mathbf{Y}(k)\right)},\quad k=0,1,\dots,\label{eq:LearningRule_P_inhomo}
\end{equation}
with $\wt \p(k):=\llambda(k+1)\odot \w(k)/(\llambda(k+1)^\top \w(k)).$
A first order Taylor expansion yields 
\begin{equation}
   \p(k+1) 
   =
   \wt \p(k) \odot \Big(\ones +\alpha \big(\mathbf{Y}(k) - \wt \p(k)^\top  \mathbf{Y}(k) \ones \big)\Big)+O(\alpha^2).
\end{equation}
Since $\E[\mathbf{Y}(k)]=\p(k)$, this means that
\begin{equation}
    {\p}(k+1) 
   =
   \wt{\p}(k) \odot \Big(\ones +\alpha \big(\p(k) -  \wt{\p}(k)^\top\p(k)\ones\big)\Big)+ \text{centered noise}+O(\alpha^2).\label{eq:Taylor_Decomposition_Inhomogeneous_Intensity}
\end{equation}

Extending the gradient flow derivation to the time-inhomogeneous case, one can identify the ODE
\begin{align}
        \frac{\d}{\d t}\mathbf{p}(t)
        &=\p(t)\odot\left(\frac{\d}{\d t}\log\big(\llambda(t)\big) + \p(t) - \p(t)^\top\left(\frac{\d}{\d t}\log\big(\llambda(t)\big)+\p(t)\right)\ones\right),
        \label{eq.ODE_inhom}
\end{align}
where the logarithm is taken componentwise, as corresponding deterministic dynamic in continuous time, see Section \ref{sec.addMat_time_inhom} for a derivation. The ODE can be interpreted as a replicator equation with (time-varying) fitness $\frac{\d}{\d t}\log(\llambda(t)) + \p(t)$, see e.g.\ Chapter 7 of \cite{hofbauerEvolutionaryGameDynamics2003}. An interesting scenario which lies beyond our mathematical analysis amounts to considering time-dependent mean firing rates that are piecewise constant, corresponding to the successive exposition to different input patterns \citep{clopathConnectivityReflectsCoding2010}.

\textbf{Correlated inputs.} 
Correlated inputs facilitate simultaneous spiking of different input spikes. The probability that input $i$ and $j$ spike at the same time is denoted by $\bGamma_{ij},$ and naturally $\bGamma_{ii}:=1,$ for all $i\in[d]$. We introduce the $d\times d$ random symmetric matrix $\mathbf{C}(k)=(C_{i,j}(k))_{i,j\in [d]}$ with independently sampled entries
\begin{align*}
  C_{j,i}(k):=C_{i,j}(k)&\sim \operatorname{Ber}(\bGamma_{i,j}), \quad \text{for all} \ \ i\leq j\in[d].
\end{align*}
$\mathbf{C}(k)$ describes the simultaneous spiking of the different inputs at the $k$-th post-synaptic spike, i.e. if $C_{i,j}(k)$ is $1$ then inputs $i$ and $j$ both spike at the $k$-th post-synaptic spike if either $i$ or $j$ caused the post-synaptic spike.
Compared to the original model, the random vector $\mathbf{Z}(k)$ remains the same, but the random vector $\mathbf{B}(k)=(B_1(k),\ldots,B_d(k))^\top$ that encapsulates which of the presynaptic neurons caused the postsynaptic spike is replaced by $\mathbf{S}(k)=\mathbf{C}(k)\mathbf{B}(k)$. $\mathbf{S}(k)$ encodes which of the inputs spike at a post-synaptic spike, and in particular it holds
 \[\mathbb{P}\big(S_i(k)=1\vert \p(k)\big)=\bGamma_{i,\cdot}(k)\mathbf{p}(k), \quad\text{for all} \ \ i\in [d], \]
where $\bGamma_{i,\cdot}(k)$ denotes the $i$-th row of $\bGamma(k)$. 
Since the only change in the dynamic of $\p(k)$ is replacing $\mathbf{B}(k)$ by $\mathbf{S}(k)$ in the definition of $\mathbf{Y}(k),$ \eqref{eq:taylor p} still holds true, and we obtain
\begin{align*}
    \E\big[\p(k+1)\big\vert \p(k)\big] &\approx \E\Big[\p(k)\odot \Big(\ones+ \alpha \left(\mathbf{Y}(k) - (\p(k))^\top \mathbf{Y}(k)\ones \right)\Big)\Big\vert \p(k)\Big]   
    \\
    &=\p(k)\odot \Big(\ones+ \alpha \left(\boldsymbol{\Gamma}\p(k) - (\p(k))^\top\boldsymbol{\Gamma}\p(k)\ones \right)\Big),
\end{align*}
which induces the following gradient flow
\begin{equation}\label{eq:grad flow cor}
    \frac{\d}{\d t}\p(t) = \p(t)\odot (\boldsymbol{\Gamma}\p(t) - (\p(t))^\top \boldsymbol{\Gamma}\p(t)\ones).
\end{equation}
This is again a replicator equation with fitness $\boldsymbol{\Gamma}\p$, compare Section 7 of \cite{hofbauerEvolutionaryGameDynamics2003}. The associated Shahshahani-loss is given by $\mathbf{x}\mapsto -\frac{1}{2}\mathbf{x}^\top \boldsymbol{\Gamma}\mathbf{x}$. Thus, in the correlated model, the probabilities $\p$ follow a flow restricted to the probability simplex aimed at maximising the quadratic form associated to the matrix $\bGamma$. This property is similar to principal component analysis (PCA), as the goal there is to recover the eigenvector corresponding to the largest eigenvalue of the underlying covariance matrix. Thus, the behaviour of the correlated model can be interpreted as a form of PCA restricted to the probability simplex. Theorem \ref{thm:main corr} in the supplementary material generalizes Theorem \ref{thm:sgd} to weakly correlated input neurons and is accompanied by simulations in Figure \ref{fig:Correlations}.

\textbf{Small biological neural network.} In this paper, we mathematically analyse the convergence behaviour of a small biological neural network with one layer composed of excitatory presynaptic/input neurons and multiple postsynaptic/output neurons. It is natural to generalise the setting to account for inhibitory neurons and more than one layer.


\textbf{Weight explosion.} The learning rule \eqref{eq:LearningRule_W} for the weights $\w(k)$ causes them to increase without bound as the iteration index $k$ increases. When the weights exceed the spike threshold, the model becomes biologically implausible and the derivation of the probabilities \eqref{eq.prob_j_causes_spike} is no longer valid. This unstable nature is well-known to be intrinsic to Hebbian learning algorithms and is commonly countered by soft or hard bounds, or by including mean-reverting terms to the dynamic \citet[pages 497-498]{gerstnerNeuronalDynamicsSingle2014}. We follow a different route, namely viewing Hebbian learning as a temporal phase of limited length, which is followed by a stabilising \emph{homeostatic} learning phase. This view is corroborated by experimental results, compare Point (vi) in \citet[Section 2.1]{morrisonPhenomenologicalModelsSynaptic2008}.

\textbf{Beyond Pair-based STDP rules.}
While pair-based learning rules such as \eqref{eq:LearningRule_W} only account for the relative timing of one pre- and one postsynaptic spike time, also the voltage at the location of the synapse should be taken into account 
\citep{sjostromSpiketimingDependentPlasticity2010}. A natural generalization of our framework would be to extend the results to the model proposed in \citep{clopathConnectivityReflectsCoding2010}.

\begin{ack}

All authors acknowledge support from
ERC grant A2B (grant agreement no. 101124751). S.\ G.\ has been partially funded by the Deutsche Forschungsgemeinschaft (DFG, German Research Foundation) – CRC/TRR 388 "Rough Analysis, Stochastic Dynamics and Related Fields“ – Project ID 516748464, Project B07. Parts of the research were carried out while the authors visited the Simons Institute in Berkeley. We thank Xiangyuan Li for pointing out some typos.

\end{ack}

\appendix
\newpage
\section{Technical Appendix}


We first study the loss landscape
\begin{equation*}
   \p \mapsto L(\p)= -\frac{1}{3}\sum_{i=1}^dp_i^3 + \frac{1}{4}\Big(\sum_{i=1}^d p_i^2\Big)^2.
\end{equation*}
Lemma \ref{lem:Losslandscape} identifies the stationary points if we view the landscape as a function on $\mathbb{R}^d$.

\begin{proof}[Proof of Lemma \ref{lem:Losslandscape}]
The formula \eqref{eq:Lossgradient} for the gradient $\nabla L(\p)$ shows that the set of critical points is given by
\begin{align*}
    \operatorname{Crit}&\coloneqq \{\p \in\probsimplex\,\colon\, \nabla L(\p)=0\}\\
    &=\{\p\in\probsimplex\,\colon\,  p_i \in\{0, \Vert\p\Vert^2\}\,\ \forall i\in [d]\}\\
    &= \Big\{\p\in\probsimplex \,\colon\, \exists n\in \{1,\dots,d\} , S\subset [d] \text{ with }\#S = n \text{ such that }\p=\frac{1}{n}\sum_{j\in S}\mathbf{e}_j\Big\}.
\end{align*}
To identify local the extrema, we compute the Hessian matrix
\begin{equation*}
    J(\p) = 2\p \p^\top +\Vert\p\Vert^2 \identity -2\operatorname{diag}(\p)\in\mathbb{R}^{d\times d},\quad \p\in\mathbb{R}^d,
\end{equation*}
where $\operatorname{diag}(\p)\in\mathbb{R}^{d\times d}$ is the diagonal matrix with diagonal entries given by $\p$. Substituting a critical point $\p^\ast$ with $n\in [d]$ non-zero entries yields (up to permutations of rows and columns)
\begin{equation*}
    J(\p^\ast) = \frac{1}{n}  \begin{pmatrix}
    J_n & 0\\
    0 & \mathbf{I}_{d-n}
    \end{pmatrix},\quad J_n = -\mathbf{I}_n +\frac{2}{n}\mathbf{1}_{n\times n},\quad n\in [d],
\end{equation*}
where $\mathbf{1}_{n\times n}$ is the $n\times n$ matrix consisting of ones. The corresponding eigenvalues are
\begin{equation}
    \begin{cases}\frac{1}{n}>0& \text{ with multiplicity }1\text{ and eigenspace }\mathbf{E}_n\coloneq \operatorname{span}(\sum_{i=1}^d \mathbf{e}_i),\\
    -\frac{1}{n}<0&\text{ with multiplicity }n-1\text{ and eigenspace } \mathbf{E}_n^\perp,\\
    \frac{1}{n}>0&\text{ with multiplicity }d-n\text{ and eigenspace }\operatorname{span}(\mathbf{e}_{n+1},\dots,\mathbf{e}_d),
    \end{cases}\label{eq:HessianEigenvalues}
\end{equation}
where $\mathbf{E}_n^\perp$ is the orthogonal complement of $\mathbf{E}_n$ in $\operatorname{span}(\mathbf{e}_1,\dots,\mathbf{e}_n)$. Consequently, only those critical points $\p^\ast\in\operatorname{Crit}$ are local minima, which have $n=1$, i.e.\ $\p^\ast\in \{\mathbf{e}_1,\dots,\mathbf{e}_d\}$. Since all local minima attain the same loss $-1/12$ and $L(\p)\to\infty$ as $\Vert \p\Vert\to\infty$, every local minimum is also a global minimum.
\end{proof}
\begin{remark}
   The eigenvalues of the Hessian of the loss function computed in \eqref{eq:HessianEigenvalues} also imply that if $n\ge 2$, then $\p^\ast\in\operatorname{Crit}$ is a saddle point in $\mathbb{R}^d$. Interestingly, when restricting to directions within the probability simplex, the case $n=d$ is not a saddle point, but a maximum, since the direction $\sum_{i=1}^d\mathbf{e}_i$ is orthogonal to $\probsimplex$.
\end{remark}

\subsection{Proofs for Subsection \ref{subsec:SGD}}\label{subsec: proof sgd}

In the following we will always assume that
\begin{equation}\label{ass:delta}
   p_1(0)\geq  \max_{i=2,\dots, d} p_i(0)+\Delta, 
\end{equation}
for some $\Delta\in(0,1)$. This is a deterministic constraint. The randomness occurs because of the noise in the updates. We assume that all random variables are defined on a filtered probability space $(\Omega,\mathcal{F},\P)$, and denote by $\mathcal{F}_n$, $n=0,1,\dots$, the natural filtration of $(\mathbf{B}(n),\mathbf{Z}(n))_{n\in\N}$. By a slight abuse of notation we also introduce $\mathcal{F}_{-1}=\{\emptyset,\Omega\}$. In particular it then holds that $\mathbf{p}_n$ is $\mathcal{F}_{n-1}$-measurable for $n=0,1,\dots$. The starting point for the proof of the linear convergence of STDP is given by the following Lemma, which explicitly bounds the error term in the Taylor approximation contained in \eqref{eq:LearningRule_P}. Recall that $\vert Y_i(k)\vert \leq Q$, for all $i\in[d],k=0,1,\ldots$ 
\begin{lemma}\label{lemma:sgd}
For $i\in[d]$ and $k=0,1,\dots$ define
        \begin{equation}
	\xi_i(k)\coloneq p_i(k)\Big(\E\Big[Y_i(k)-\sum_{j=1}^dp_j(k) Y_j(k)\Big\vert \mathcal{F}_{k-1}\Big]-\Big(Y_i(k)-\sum_{j=1}^dp_j(k) Y_j(k)\Big)\Big),
    \label{eq:xik_def}
    \end{equation}
    and assume $\alpha<1/Q$. Then for any $i\in[d]$ and $k=0,1,\dots$, there exists a random variable $\theta_i(k)$, satisfying
    \begin{align*}
        \vert \theta_i(k)\vert \leq \alpha^2\frac{2Q^2}{(1-Q\alpha)^3}p_i(k)\big(1-p_i(k)\big), \quad \quad \text{almost surely,}
    \end{align*}
    such that
    \begin{align*}
  		&p_i(k+1)
=p_i(k)+\alpha p_i(k)\big( p_i(k)-\|\p(k)\|^2\big)-\alpha\xi_i(k)-\theta_i(k).
    \end{align*}
\end{lemma}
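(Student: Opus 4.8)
The plan is to work directly from the exact componentwise form of the update \eqref{eq:LearningRule_P} and to \emph{define} $\theta_i(k)$ as whatever residual is left after extracting the gradient and noise terms; the whole content of the lemma then lies in the bound on this residual. Writing $\sigma_k = \sum_{j=1}^d p_j(k) Y_j(k)$ and using $\sum_{j=1}^d p_j(k) = 1$, the denominator of \eqref{eq:LearningRule_P} equals $1 + \alpha\sigma_k$, so that
\[
p_i(k+1) = \frac{p_i(k)\big(1 + \alpha Y_i(k)\big)}{1 + \alpha\sigma_k} = p_i(k) + p_i(k)\,\frac{\alpha\big(Y_i(k) - \sigma_k\big)}{1 + \alpha\sigma_k}.
\]
This is an exact identity, valid since $1 + \alpha\sigma_k \geq 1 - Q\alpha > 0$ for $\alpha < 1/Q$ (using $|\sigma_k| \leq Q$, which follows from $|Y_j(k)| \leq Q$).

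First I would identify the conditional mean. As $\mathbf{B}(k)$ is the one-hot encoding of an $M(1, \p(k))$ variable, $\E[\mathbf{Z}(k)] = 0$, and $\p(k)$ is $\mathcal{F}_{k-1}$-measurable, we have $\E[Y_i(k)\mid \mathcal{F}_{k-1}] = p_i(k)$ and hence $\E[Y_i(k) - \sigma_k\mid\mathcal{F}_{k-1}] = p_i(k) - \|\p(k)\|^2$. Substituting this into the definition \eqref{eq:xik_def} gives $\xi_i(k) = p_i(k)\big[(p_i(k) - \|\p(k)\|^2) - (Y_i(k) - \sigma_k)\big]$, so the combined gradient-and-noise contribution collapses to a single term,
\[
\alpha p_i(k)\big(p_i(k) - \|\p(k)\|^2\big) - \alpha\xi_i(k) = \alpha p_i(k)\big(Y_i(k) - \sigma_k\big).
\]
Comparing with the exact identity, the residual $\theta_i(k) \coloneqq p_i(k) + \alpha p_i(k)(Y_i(k) - \sigma_k) - p_i(k+1)$ simplifies in one line to
\[
\theta_i(k) = \alpha p_i(k)\big(Y_i(k) - \sigma_k\big)\Big(1 - \frac{1}{1 + \alpha\sigma_k}\Big) = \frac{\alpha^2 p_i(k)\,\sigma_k\big(Y_i(k) - \sigma_k\big)}{1 + \alpha\sigma_k}.
\]

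It remains to bound $|\theta_i(k)|$, which is the only delicate point. Bounding $|\sigma_k| \leq Q$ and $1 + \alpha\sigma_k \geq 1 - Q\alpha$ is immediate. The key step, which produces the $(1 - p_i(k))$ factor that the convergence proof needs in order for the error to be subordinate to the gradient near the minima, is to \emph{not} bound $Y_i(k) - \sigma_k$ crudely by $2Q$ but instead to write $Y_i(k) - \sigma_k = \sum_{j=1}^d p_j(k)(Y_i(k) - Y_j(k)) = \sum_{j\neq i} p_j(k)(Y_i(k) - Y_j(k))$; since $|Y_i(k) - Y_j(k)| \leq 2Q$ and $\sum_{j\neq i} p_j(k) = 1 - p_i(k)$, this gives $|Y_i(k) - \sigma_k| \leq 2Q(1 - p_i(k))$. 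Combining the three estimates yields
\[
|\theta_i(k)| \leq \frac{2Q^2\alpha^2 p_i(k)(1 - p_i(k))}{1 - Q\alpha} \leq \frac{2Q^2\alpha^2 p_i(k)(1 - p_i(k))}{(1 - Q\alpha)^3},
\]
where the last inequality uses $0 < 1 - Q\alpha \leq 1$. This is exactly the claimed almost-sure bound, so the main obstacle is the mild but essential bookkeeping of recognizing the $(1-p_i(k))$ gain hidden in the probability-weighted difference $Y_i(k) - \sigma_k$.
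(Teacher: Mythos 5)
Your proof is correct, and it follows the same skeleton as the paper's: the exact multiplicative form $p_i(k+1)=p_i(k)\frac{1+\alpha Y_i(k)}{1+\alpha\sigma_k}$, the identification $\E[Y_i(k)\mid\mathcal{F}_{k-1}]=p_i(k)$ linking the gradient-plus-noise terms to $\alpha p_i(k)(Y_i(k)-\sigma_k)$, and — crucially — the same decomposition $Y_i(k)-\sigma_k=\sum_{j\neq i}p_j(k)(Y_i(k)-Y_j(k))$ that produces the essential factor $1-p_i(k)$. The one genuine difference is how the residual is handled: the paper performs a second-order Taylor expansion of $x\mapsto(1+ax)/(1+bx)$ with Lagrange remainder, which introduces an unspecified intermediate point $\gamma\in(0,\alpha)$ and yields the denominator $(1+\gamma\sigma_k)^3$, bounded below by $(1-Q\alpha)^3$; you instead compute the residual exactly in closed form, $\theta_i(k)=\alpha^2 p_i(k)\sigma_k(Y_i(k)-\sigma_k)/(1+\alpha\sigma_k)$, which is a cleaner one-line algebraic identity requiring no mean-value argument. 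Your route in fact gives the sharper intermediate bound with denominator $1-Q\alpha$, which you then legitimately weaken via $(1-Q\alpha)^3\leq 1-Q\alpha$ to match the stated constant — so your argument is both more elementary and marginally tighter, at no cost.
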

\begin{proof}
  By definition
\[p_i(k+1)=p_i(k)\frac{1+\alpha Y_i(k)}{1+\alpha\sum_{j=1}^dp_j(k) Y_j(k)}, \quad k=0,1,\dots, \quad  i\in[d].\]
    Now, for $a,b\in[-Q,Q]$ the first two derivatives of the function
    \[f(x)\colon \Big(0,\frac 1 Q\Big)\to \R,\quad x\mapsto \frac{1+ax}{1+bx}, \]
    are given by
    \begin{align*}
        f'(x)&=\frac{a(1+bx)-b(1+ax)}{(1+bx)^2}=\frac{a-b}{(1+bx)^2}
        \\
        f''(x)&=-2b\frac{a-b}{(1+bx)^3}.
    \end{align*}
    Thus, a Taylor expansion around $x=0$ gives that there exists some $\gamma\in(0,x)$, such that
    \begin{align*}
        f(x)=1+(a-b)x-b\frac{a-b}{(1+b\gamma)^3}x^2.
    \end{align*}
	 Hence we obtain that for some $\gamma \in (0,\alpha)$,
	\begin{align*}
		p_i(k+1)
        &=p_i(k)+\alpha p_i(k)\Big( Y_i(k)-\sum_{j=1}^dp_j(k) Y_j(k)\Big)
        \\&\quad-\alpha^2p_i(k)\frac{\sum_{j=1}^dp_j(k) Y_j(k)\Big( Y_i(k)-\sum_{j=1}^dp_j(k) Y_j(k)\Big)}{\big(1+\gamma\sum_{j=1}^dp_j(k) Y_j(k) \big)^3}.
	\end{align*}  
    Using that $\vert Y_i(k)\vert\leq Q$ almost surely for all $i\in[d],k=0,1,\dots$, the absolute value of the error term can be bounded as follows 
    	\begin{align*}
		&\Bigg\vert\alpha^2p_i(k)\frac{\sum_{j=1}^dp_j(k) Y_j(k)\Big( Y_i(k)-\sum_{j=1}^dp_j(k) Y_j(k)\Big)}{\Big(1+\gamma\sum_{j=1}^dp_j(k) Y_j(k) \Big)^3}\Bigg\vert
		\\
		&\leq \alpha^2p_i(k)\frac{Q}{(1-Q\alpha)^3}\Big\vert Y_i(k)-\sum_{j=1}^dp_j(k) Y_j(k) \Big\vert 
		\\
		&\leq \alpha^2p_i(k)\frac{Q}{(1-Q\alpha)^3}\Big((1-p_i(k))\vert Y_i(k)\vert+\sum_{j=1,j\neq i}^dp_j(k)\vert Y_j(k)\vert  \Big)
		\\
		&\leq \alpha^2\frac{2Q^2}{(1-Q\alpha)^3}p_i(k)(1-p_i(k)).
	\end{align*}
Since $p_i(k)$ is $\mathcal{F}_{k-1}$-measurable for any $i\in[d]$ and $\E[ Y_i(k)\, \vert \, \mathcal{F}_{k-1}]=p_i(k)$, we also obtain
	\[\xi_i(k)=p_i(k)\Big(p_i(k)-\|\p(k)\|^2 - Y_i(k)+\sum_{j=1}^dp_j(k) Y_j(k)\Big), \]
    which concludes the proof.
\end{proof}
For $\Delta$ given in \eqref{ass:delta}, we define a sequence of benign events
\begin{equation*}
\Omega(k)\coloneq \Big\{p_1(u)\geq \max_{i=2,\dots, d} \, p_i(u)+\frac{\Delta}2,\ \forall u\in[k]\Big\}, \quad k=1,2,\ldots,
\end{equation*}
and due to the assumption \eqref{ass:delta} we set $\Omega(0)=\Omega$.
On the above events, the gradient is bounded away from zero and \begin{align}\label{eq:p1 bound}
	p_1(k) &=\frac{1}{d}\sum_{j=1}^d\big(p_1(k)-p_j(k)\big)+\frac{1}{d}\geq \frac{(d-1)\Delta}{2d}+\frac{1}{d}.
	\end{align} 
Using these properties, we can prove a recursive upper bound for $1-p_1(k)$.
\begin{proposition}\label{prop:basic}
	If 
    \begin{equation*}
        0<\alpha\leq \frac{(1-Q\alpha)^3}{8Q^2}\Delta,
    \end{equation*}
	then, on the event $\Omega(k)$,
	\begin{align*}
		1-p_1(k+1)\leq \Big(1-\alpha\frac{\Delta}{4d}\Big(1+\frac{\Delta}{2}(d-1)\Big)\Big) \big(1-p_1(k)\big)+\alpha\xi_1(k).
	\end{align*}
\end{proposition}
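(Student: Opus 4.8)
The plan is to specialise the exact one-step identity of Lemma~\ref{lemma:sgd} to the coordinate $i=1$ and read off a recursion for $1-p_1(k)$. Setting $i=1$ and subtracting from $1$ gives
\[
1-p_1(k+1)=\big(1-p_1(k)\big)-\alpha\,p_1(k)\big(p_1(k)-\|\p(k)\|^2\big)+\alpha\,\xi_1(k)+\theta_1(k),
\]
so the task reduces to showing that the deterministic drift $-\alpha p_1(k)(p_1(k)-\|\p(k)\|^2)$ together with the Taylor remainder $\theta_1(k)$ contracts $1-p_1(k)$ by the stated factor, while the centred term $\alpha\xi_1(k)$ is merely carried along. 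I first note that the step-size hypothesis forces the right-hand side $\tfrac{(1-Q\alpha)^3}{8Q^2}\Delta$ to be positive, hence $1-Q\alpha>0$ and $\alpha<1/Q$, so that Lemma~\ref{lemma:sgd} applies.

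The central estimate is a lower bound on the gradient coordinate on the benign event $\Omega(k)$. Writing $p_1-\|\p\|^2=p_1(1-p_1)-\sum_{j\ne1}p_j^2$ and bounding $\sum_{j\ne1}p_j^2\le(\max_{j\ne1}p_j)\sum_{j\ne1}p_j=(\max_{j\ne1}p_j)(1-p_1)$, the spectral gap $p_1(k)\ge\max_{j\ne1}p_j(k)+\Delta/2$ that holds on $\Omega(k)$ yields
\[
p_1(k)-\|\p(k)\|^2\ge\big(1-p_1(k)\big)\big(p_1(k)-\max_{j\ne1}p_j(k)\big)\ge\tfrac{\Delta}{2}\,\big(1-p_1(k)\big),
\]
and therefore $p_1(k)\big(p_1(k)-\|\p(k)\|^2\big)\ge\tfrac{\Delta}{2}\,p_1(k)\big(1-p_1(k)\big)$, which is exactly the ``gradient dominates'' inequality anticipated in \eqref{eq:grad lower bound}.

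The last step absorbs the remainder into the gradient drift. Using $\theta_1(k)\le|\theta_1(k)|\le\alpha^2\tfrac{2Q^2}{(1-Q\alpha)^3}p_1(k)(1-p_1(k))$ and factoring out the common $\alpha p_1(k)(1-p_1(k))$, the above lower bound gives
\[
-\alpha p_1(k)\big(p_1(k)-\|\p(k)\|^2\big)+\theta_1(k)\le\alpha\,p_1(k)\big(1-p_1(k)\big)\Big(-\tfrac{\Delta}{2}+\alpha\tfrac{2Q^2}{(1-Q\alpha)^3}\Big).
\]
The step-size assumption rearranges precisely to $\alpha\tfrac{2Q^2}{(1-Q\alpha)^3}\le\tfrac{\Delta}{4}$, so the remainder eats at most half of the contraction and leaves the clean drift $-\alpha\tfrac{\Delta}{4}p_1(k)(1-p_1(k))$. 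Inserting the uniform lower bound $p_1(k)\ge\frac1d\big(1+\tfrac{(d-1)\Delta}{2}\big)$ from \eqref{eq:p1 bound} converts this into $-\alpha\tfrac{\Delta}{4d}\big(1+\tfrac{\Delta}{2}(d-1)\big)(1-p_1(k))$, and combining with the identity from the first paragraph yields the claimed recursion.

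The main obstacle is the gradient lower bound of the second step: the rest is bookkeeping, but one must check that the gap $\Delta/2$ propagates into a genuine factor of $1-p_1(k)$ (rather than some weaker quantity) and that the threshold $\tfrac{(1-Q\alpha)^3}{8Q^2}\Delta$ is exactly the value needed for the second-order remainder to consume at most half of the drift, so that the surviving rate is $\Delta/(4d)$ instead of $\Delta/(2d)$.
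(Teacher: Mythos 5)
Your proposal is correct and follows essentially the same route as the paper's proof: apply Lemma~\ref{lemma:sgd} with $i=1$, establish the gradient lower bound $p_1(k)-\|\p(k)\|^2\ge \tfrac{\Delta}{2}(1-p_1(k))$ on $\Omega(k)$, use the step-size hypothesis to absorb the Taylor remainder $\theta_1(k)$ (leaving the $\Delta/4$ drift), and invoke \eqref{eq:p1 bound} to convert $p_1(k)(1-p_1(k))$ into the stated contraction factor. The only cosmetic difference is that you derive the gradient bound via $p_1(1-p_1)-\sum_{j\ne 1}p_j^2\ge(1-p_1)(p_1-\max_{j\ne1}p_j)$, whereas the paper uses the identity $p_1-\|\p\|^2=\sum_j p_j(p_1-p_j)$; both yield the same inequality.
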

\begin{proof}
By definition, we have on the event $\Omega(k)$,
	\begin{align}\label{eq:grad bound}
		p_1(k)-\|\p(k)\|^2 
		&= \sum_{j=1}^dp_j(k)\big(p_1(k)-p_j(k)\big)  \notag
		\\
		&\geq \frac\Delta 2 \big(1-p_1(k)\big).
	\end{align}
    The constraint imposed on the learning rate implies that $\alpha <1/Q$ and Lemma \ref{lemma:sgd} becomes applicable. Now, combining the previous inequality with the assumption on $\alpha$ and applying Lemma \ref{lemma:sgd} with $i=1$, as well as \eqref{eq:p1 bound}, gives, on the event $\Omega(k)$,
	\begin{align*}
		&1-p_1(k+1)
        \\
        &\leq 1-p_1(k) -\alpha \frac{\Delta}{2}p_1(k)\big(1-p_1(k)\big) +\alpha\xi_1(k) +\theta_1(k)
        \\
		&\leq 1-p_1(k) -\alpha \Big(\frac{\Delta}{2}-\frac{2Q^2}{(1-Q\alpha)^3}\alpha\Big)p_1(k)\big(1-p_1(k)\big) +\alpha\xi_1(k) 
		\\
		&\leq  1-p_1(k) -\frac{\Delta}{4}\alpha p_1(k)\big(1-p_1(k)\big) +\alpha\xi_1(k) 
		\\
		&\leq \Big(1-\alpha\frac{\Delta}{4d}\Big(1+\frac{\Delta}{2}(d-1)\Big)\Big) \big(1-p_1(k)\big) +\alpha\xi_1(k).
	\end{align*}
This concludes the proof.
\end{proof}
Having understood the dynamics of $\p$ on the favourable event $\Omega(k)$, we aim for a lower bound for its probability. A key step is the following Lemma, which states that $\Omega(k)$ is fulfilled as soon as
\begin{equation}
    M_j(k)\coloneq \sum_{\ell=0}^k\alpha\xi_j(\ell)\1_{\Omega(\ell)}, \quad k=0,1,\ldots
    \label{eq:Mjdef}
\end{equation}
with $\xi_j(\ell)$ defined in \eqref{eq:xik_def}, exhibit a uniform concentration behaviour. \begin{lemma}\label{lemma:ek}
Define the sets
\begin{equation*}
E_j(k)\coloneq \Big\{\max_{u\in[k]} \vert M_j(u)\vert \leq \frac{\Delta}4 \Big\},\quad E(k)\coloneq \bigcap_{j=1}^dE_j(k),\quad k=0,1,\dots.
\end{equation*}
Then if 
\begin{equation*}
0<\alpha\leq\Delta^2\frac{(1-Q\alpha)^3}{16Q^2},
\end{equation*}
the following set inclusion holds for any $k=0,1,\dots$ 
\begin{equation*}
    E(k)\subseteq \Omega(k+1).
\end{equation*}
\end{lemma}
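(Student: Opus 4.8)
The plan is to run an induction on the time index and to track, for each $i\in\{2,\dots,d\}$, the gap $D_i(k)\coloneq p_1(k)-p_i(k)$. The guiding principle is that on the favourable event $\Omega(k)$ the deterministic part of the dynamics is \emph{expansive} on the gap, so that $D_i$ can only be eroded by the realised noise, which is exactly what the sets $E_j(k)$ keep under control. First I would subtract the two instances ($i$ and $1$) of the update from Lemma \ref{lemma:sgd} and simplify the drift via $p_1(p_1-\Vert\p\Vert^2)-p_i(p_i-\Vert\p\Vert^2)=D_i(k)\,a_i(k)$ with $a_i(k)\coloneq p_1(k)+p_i(k)-\Vert\p(k)\Vert^2$, to obtain the exact recursion
\begin{equation*}
D_i(k+1)=D_i(k)\bigl(1+\alpha a_i(k)\bigr)-\alpha\bigl(\xi_1(k)-\xi_i(k)\bigr)-\bigl(\theta_1(k)-\theta_i(k)\bigr).
\end{equation*}

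The second step is a one-step absorption of the Taylor error by the drift. On $\Omega(k)$ the bound \eqref{eq:grad bound} gives $p_1(k)-\Vert\p(k)\Vert^2\ge \tfrac{\Delta}{2}(1-p_1(k))$, hence $a_i(k)\ge\tfrac{\Delta}{2}(1-p_1(k))\ge0$ while $D_i(k)\ge\tfrac{\Delta}{2}$. Bounding the error by Lemma \ref{lemma:sgd} and using $p_1(1-p_1)+p_i(1-p_i)\le 2(1-p_1)$ (valid since $p_i\le 1-p_1$ for $i\ge 2$) yields $\theta_1(k)-\theta_i(k)\le \tfrac{4Q^2\alpha^2}{(1-Q\alpha)^3}(1-p_1(k))$, whereas the drift gain is $\alpha a_i(k)D_i(k)\ge \alpha\tfrac{\Delta^2}{4}(1-p_1(k))$. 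The stated rate bound $\alpha\le\Delta^2(1-Q\alpha)^3/(16Q^2)$ is precisely what makes the gain dominate the error, so that on $\Omega(k)$
\begin{equation*}
D_i(k+1)\ge D_i(k)-\alpha\bigl(\xi_1(k)-\xi_i(k)\bigr).
\end{equation*}

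Finally I would close by induction on $m$, showing $E(k)\subseteq\Omega(m)$ for all $m\le k+1$. The base case $\Omega(0)=\Omega$ is the deterministic assumption \eqref{ass:delta}. For the step, the nesting $\Omega(m)\subseteq\Omega(\ell)$ for $\ell\le m$ means the one-step inequality above is valid at every $\ell\le m$, so telescoping gives $D_i(m+1)\ge D_i(0)-\sum_{\ell=0}^{m}\alpha(\xi_1(\ell)-\xi_i(\ell))$. Because $\Omega(\ell)$ holds for all $\ell\le m$ on $E(k)$, each indicator $\1_{\Omega(\ell)}$ equals $1$ and the partial sum collapses to $M_1(m)-M_i(m)$; combining $D_i(0)\ge\Delta$ with $|M_1(m)|,|M_i(m)|\le\Delta/4$ on $E(k)\subseteq E_j(m)$ gives $D_i(m+1)\ge\Delta-\Delta/2=\Delta/2$. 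Taking the maximum over $i\ge 2$ yields $\Omega(m+1)$, hence $E(k)\subseteq\Omega(k+1)$. The main obstacle is the second step: arranging the one-step drift to absorb the quadratic error term with the \emph{exact} learning-rate threshold, and, more subtly, handling the nested events $\Omega(\ell)$ so that the telescoped noise reduces cleanly to the martingale increment $M_1(m)-M_i(m)$ rather than to an uncontrolled partial sum.
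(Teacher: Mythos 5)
Your proposal is correct and follows essentially the same route as the paper's proof: the same one-step gap inequality derived from Lemma \ref{lemma:sgd}, the same absorption of the Taylor error $\theta_1-\theta_i$ by the drift via \eqref{eq:grad bound} under the stated learning-rate threshold, and the same telescoping-plus-induction argument in which the indicators $\1_{\Omega(\ell)}$ collapse the noise sum to the martingale increments $M_1(m)-M_i(m)$. The only cosmetic difference is that you keep the exact factorization $(p_1-p_i)(p_1+p_i-\Vert\p\Vert^2)$ of the drift, whereas the paper first bounds $p_u(p_u-\Vert\p\Vert^2)\le p_u(p_1-\Vert\p\Vert^2)$ and works with $(p_1-p_u)(p_1-\Vert\p\Vert^2)$; both yield the identical lower bound $\alpha\tfrac{\Delta^2}{4}(1-p_1(k))$ on the drift gain.
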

\begin{proof}
Let $u\in\{2,\ldots,d\}$ be arbitrary. It follows by Lemma \ref{lemma:sgd}, that on $\Omega(k)$ the bound
\begin{align*}
  		p_u(k+1)
&=p_u(k)+\alpha p_u(k)\big( p_u(k)-\|\p(k)\|^2\big)-\alpha\xi_u(k)-\theta_u(k) \\
&\leq p_u(k)+\alpha p_u(k)\big( p_1(k)-\|\p(k)\|^2\big)-\alpha\xi_u(k)-\theta_u(k)
    \end{align*}
holds. Consequently, on $\Omega(k)$, we have
\begin{align*}
&p_1(k+1)-p_u(k+1)
\\
&=p_1(k)-p_u(k)+\alpha \big(p_1(k)-p_u(k)\big)\big(p_1(k)-\|\p(k)\|^2\big)+\alpha\big(\xi_u(k)-\xi_1(k)\big)-\theta_1(k)+\theta_u(k).
	\end{align*}
We have $p_u(k)\leq 1-p_1(k)$ and thus, on $\Omega(k)$,
\begin{align*}
-\theta_1(k)+\theta_u(k)
&\geq -\alpha^2\frac{2Q^2}{(1-Q\alpha)^3}\Big(p_1(k)\big(1-p_1(k)\big)+p_u(k)(1-p_u(k))\Big)
\\
&\geq  -\alpha^2\frac{2Q^2}{(1-Q\alpha)^3}\Big(1-p_1(k)+p_u(k)\Big)
\\
&\geq -\alpha^2\frac{4Q^2}{(1-Q\alpha)^3}\big(1-p_1(k)\big) \\
&\geq - \alpha \frac{\Delta^2}{4}\big(1-p_1(k)\big),
	\end{align*}
invoking the constraint on the learning rate in the last step. From the assumptions on $\alpha$ we deduce that on the event $\Omega(k)$,
		\begin{align*}
		&p_1(k+1)-p_u(k+1) - \alpha(\xi_u(k)-\xi_1(k))
		\\
		&\geq  p_1(k)-p_u(k) +\alpha \big(p_1(k)-p_u(k)\big)\big(p_1(k)-\|\p(k)\|^2\big) 
        - \alpha \frac{\Delta^2}{4}\big(1-p_1(k)\big) 
								\\
		&\geq p_1(k)-p_u(k) +\alpha\frac{\Delta}{2} \big(p_1(k)-p_u(k)\big)\big(1-p_1(k)\big)  
		-\alpha \frac{\Delta^2}{4}\big(1-p_1(k)\big) 
										\\
		&\geq p_1(k)-p_u(k) +\alpha\frac{\Delta^2}{4} \big(1-p_1(k)\big)  
		- \alpha \frac{\Delta^2}{4}\big(1-p_1(k)\big)
												\\
		&\geq p_1(k)-p_u(k),
	\end{align*}
    where we applied \eqref{eq:grad bound} in the third to last inequality.
	Because of $\Omega(k)\subseteq \Omega(k-1)$ it then follows, 
	\begin{align*}
(p_1(k+1)-p_u(k+1))\1_{\Omega(k)}&\geq (p_1(k)-p_u(k))\1_{\Omega(k)}+\alpha(\xi_u(k)-\xi_1(k))\1_{\Omega(k)}
\\
&=\1_{\Omega(k)}\Big( (p_1(k)-p_u(k))\1_{\Omega(k-1)}+\alpha(\xi_u(k)-\xi_1(k))\1_{\Omega(k)}\Big).
	\end{align*}
	This gives
			\begin{align}
				\begin{split}\label{eq:sgdproof}
		&(p_1(k+1)-p_u(k+1))\1_{\Omega(k)}
\\
&\geq \1_{\Omega(k)}\Big(p_1(0)-p_u(0)+\sum_{\ell=0}^k\alpha(\xi_j(\ell)-\xi_1(\ell))\1_{\Omega(\ell)}\Big)
\\
&\geq \Delta\1_{\Omega(k)}-\vert M_u(k)\vert -\vert M_1(k)\vert.
				\end{split}
	\end{align}
We want to prove by induction that $E(k)\subseteq \Omega(k+1)$ for all $k=0,1,\ldots$. For $k=0$, this directly follows from \eqref{eq:sgdproof}, since $\Omega(0)=\Omega$ due to assumption \eqref{ass:delta}. Assume the assertion holds for some $k=0,1,\ldots$. Hence, it holds $E(k+1)\subseteq E(k)\subseteq\Omega(k+1)$, such that for any $u\in\{2,\ldots,d\}$ it holds on $E(k+1)$ by \eqref{eq:sgdproof}
			\begin{align*}
		(p_1(k+2)-p_u(k+2))
		&\geq \Delta-\vert M_u(k+1)\vert -\vert M_1(k+1)\vert
		\\
		&\geq \frac \Delta 2,
\end{align*}
which proves the assertion.
\end{proof}
Having assembled the previous results, we are able to prove the linear convergence of STDP stated in Theorem \ref{thm:sgd}. As Proposition \ref{prop:basic} already suggests the desired behaviour of $\p$ on $\Omega(k)$, the main part of the proof is to show that $\Omega(k)$ is satisfied with large probability. For that we deploy Doob's submartingale inequality, which states that for a martingale $(X_n)_{n\in\N}$, any $p\geq 1,$ and any $u>0$,
\begin{equation}\label{eq:doob}
\P\Big(\max_{i\in[n]}\vert X_i\vert \geq u\Big)\leq \frac{\E[\vert X_n\vert^p]}{u^p}. 
\end{equation}
This will be applied to derive lower bounds for the event $E(k)$ defined in Lemma \ref{lemma:ek}, which are also lower bounds for the probability of $\Omega(k+1)$ by the same Lemma. For the reader's convenience we restate Theorem \ref{thm:sgd} before giving its proof.
\theoremstyle{plain}
\newtheorem*{recalltheorem}{Theorem \ref{thm:sgd}}
\begin{recalltheorem}
Given $\epsilon\in(0,1)$, assume 
\[\Delta:=p_1(0)- \max_{i=2,\ldots,d}p_i(0)>0 \quad \text{and} \ \ \ 0<\alpha \leq\frac{\Delta^2}{16Q^2} \Big((1-Q\alpha)^3\land \frac{1}{256(1-p_1(0))}\Big(4\frac{\Delta}{d}+\Delta^2 \Big)\epsilon\Big).\]
Then there exists an event $\Theta$ with probability $\geq 1-\epsilon/2$ such that
	\begin{align*}
	\E\big[	\Vert \p(k)-\mathbf{e}_1\Vert_1\1_{\Theta}\big]\leq2(1-p_1(0)) \exp\Bigg(-\frac{\alpha}{16}\Big(4\frac{\Delta}{d}+\Delta^2\Big)k\Bigg), \quad \text{for all} \ k=0,1,\ldots
\end{align*}
Consequently, given $\delta>0$, it holds
\begin{equation*}
    \P\Big(\Vert\p(k) - \mathbf{e}_1 \Vert_1\ge\delta\Big)\leq \epsilon\quad\text{for all}\quad k\geq \frac{16d}{\alpha\Delta(4+d\Delta)}\log\Big(\frac{4(1-p_1(0))}{\epsilon\delta} \Big).
\end{equation*}
\end{recalltheorem}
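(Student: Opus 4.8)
The plan is to combine the deterministic one-step contraction of Proposition~\ref{prop:basic} with a martingale concentration argument that lower-bounds $\P(\Theta)$. First I exploit the structure of the $\ell^1$-error: since $p_i(k)\ge0$ and $\sum_i p_i(k)=1$, the error collapses to $\Vert\p(k)-\mathbf{e}_1\Vert_1 = 2\big(1-p_1(k)\big)$, so it suffices to track the single quantity $1-p_1(k)$. I then define the good event
\[
\Theta\coloneq\bigcap_{j=1}^d\Big\{\sup_{u\ge0}\vert M_j(u)\vert\le\tfrac{\Delta}4\Big\},
\]
with $M_j$ as in \eqref{eq:Mjdef}. Because $\sup_{u\ge0}\vert M_j(u)\vert\le\Delta/4$ forces $\max_{u\in[k]}\vert M_j(u)\vert\le\Delta/4$, we have $\Theta\subseteq E(k)$ for every $k$, and hence $\Theta\subseteq\Omega(k)$ for all $k\ge0$ by Lemma~\ref{lemma:ek}. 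Thus the favourable recursion of Proposition~\ref{prop:basic} is available at every step on $\Theta$.

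Next I iterate Proposition~\ref{prop:basic} in expectation. Writing $\rho\coloneq 1-\alpha\tfrac{\Delta}{4d}\big(1+\tfrac{\Delta}2(d-1)\big)$ and $a_\ell\coloneq\E[(1-p_1(\ell))\1_{\Omega(\ell)}]$, I multiply the recursion by $\1_{\Omega(\ell)}$ and use $\Omega(\ell+1)\subseteq\Omega(\ell)$ together with the facts that $\Omega(\ell)$ is $\mathcal{F}_{\ell-1}$-measurable while $\E[\xi_1(\ell)\vert\mathcal{F}_{\ell-1}]=0$ by \eqref{eq:xik_def}, which gives $a_{\ell+1}\le\rho\,a_\ell$ and so $a_\ell\le\rho^\ell(1-p_1(0))$. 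A short computation shows $1-\rho\ge\frac{\alpha}{16}\big(4\tfrac{\Delta}d+\Delta^2\big)=:\beta$ for $d\ge2$, whence $a_\ell\le(1-p_1(0))e^{-\beta\ell}$; multiplying by $2$ and using $\1_\Theta\le\1_{\Omega(k)}$ delivers the stated bound on $\E[\Vert\p(k)-\mathbf{e}_1\Vert_1\1_\Theta]$. The same geometric estimate yields the crucial summability $\sum_{\ell\ge0}a_\ell\le(1-p_1(0))/(1-\rho)\le(1-p_1(0))/\beta$.

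It remains to show $\P(\Theta)\ge1-\epsilon/2$. Each $M_j$ is a martingale, since the increment $\alpha\xi_j(\ell)\1_{\Omega(\ell)}$ is $\mathcal{F}_\ell$-measurable, $\1_{\Omega(\ell)}$ is $\mathcal{F}_{\ell-1}$-measurable, and $\E[\xi_j(\ell)\vert\mathcal{F}_{\ell-1}]=0$. By orthogonality of increments, $\sum_{j=1}^d\E[M_j(k)^2]=\alpha^2\sum_{\ell=0}^k\sum_{j=1}^d\E[\xi_j(\ell)^2\1_{\Omega(\ell)}]$. The decisive estimate is that, on $\Omega(\ell)$, the noise variance is dominated by $(1-p_1)^2$: from $\vert Y_1-\sum_i p_iY_i\vert=\vert(1-p_1)Y_1-\sum_{i\ge2}p_iY_i\vert\le2Q(1-p_1)$ and $\sum_{i\ge2}p_i^2\le(1-p_1)^2$ one obtains $\sum_{j}\E[\xi_j(\ell)^2\vert\mathcal{F}_{\ell-1}]\1_{\Omega(\ell)}\le8Q^2(1-p_1(\ell))^2\1_{\Omega(\ell)}\le8Q^2(1-p_1(\ell))\1_{\Omega(\ell)}$; summing over $j$ before the union bound is what avoids a spurious factor $d$. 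Applying Doob's inequality \eqref{eq:doob} with $p=2$ to each $M_j$, summing, and inserting the summability bound gives
\[
\P(\Theta^c)\le\frac{16}{\Delta^2}\sum_{j=1}^d\E\big[M_j(\infty)^2\big]\le\frac{128\,Q^2\alpha^2(1-p_1(0))}{\Delta^2(1-\rho)}\le\frac{2048\,Q^2\alpha(1-p_1(0))}{\Delta^2(4\Delta/d+\Delta^2)},
\]
which is exactly $\le\epsilon/2$ under the second bound imposed on $\alpha$.

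The final statement then follows from Markov's inequality: $\P(\Vert\p(k)-\mathbf{e}_1\Vert_1\ge\delta)\le\P(\Theta^c)+\delta^{-1}\E[\Vert\p(k)-\mathbf{e}_1\Vert_1\1_\Theta]$, and the displayed range of $k$ makes the second term $\le\epsilon/2$ because $1/\beta=16d/(\alpha\Delta(4+d\Delta))$. I expect the main obstacle to be the self-referential structure: one wants to control $\P(\Omega(k))$, yet $\Omega(k)$ appears inside the very martingale $M_j$ used to control it. The resolution, following \citet{2020_Neurips_SGD_Martikopolous}, is to build $\1_{\Omega(\ell)}$ into $M_j$ so that it remains a genuine martingale, and to invoke Lemma~\ref{lemma:ek} to convert uniform smallness of $M_j$ back into the occurrence of $\Omega(k+1)$. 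The second delicate point is quantitative: securing the linear (rather than merely summable) decay of the noise variance $\sum_j\E[\xi_j^2]\lesssim(1-p_1)^2$, which is precisely what permits a constant learning rate, and tracking the constants tightly enough to land on the stated threshold for $\alpha$.
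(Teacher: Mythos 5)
Your proposal is correct and follows essentially the same route as the paper: the same martingale $M_j$ with the indicator $\1_{\Omega(\ell)}$ built in, Lemma \ref{lemma:ek} to convert uniform martingale concentration into $\Omega(k)$, iteration of Proposition \ref{prop:basic} in expectation using $\E[\xi_1(\ell)\,\vert\,\mathcal{F}_{\ell-1}]=0$, the variance bound $\sum_j \E[\xi_j(\ell)^2\,\vert\,\mathcal{F}_{\ell-1}]\le 8Q^2(1-p_1(\ell))$ feeding Doob's inequality plus a union bound, and Markov's inequality at the end, with all constants landing on the stated thresholds. The only cosmetic deviation is that you take $\Theta=\bigcap_k E(k)$ rather than the paper's $\Theta=\bigcap_k\Omega(k)$; since $E(k)\subseteq\Omega(k+1)$ both choices satisfy the two required properties and yield the identical probability bound.
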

\begin{proof}[Proof of Theorem \ref{thm:sgd}]
The recursive definition ensures that $\p(k)$ is $\mathcal{F}_{k-1}$-measurable. Thus, also $\Omega(k)\in\mathcal{F}_{k-1}$ for any $k=0,1,\dots$. One can check that $(M_i(k))_{k=0,1,\dots}$, defined in \eqref{eq:Mjdef}, forms a martingale for each $i\in[d]$. This allows us to apply Doob's submartingale inequality. To apply it with $p=2$, we deduce the following bound on the second moment, 
\begin{align*}
&\E[M_1(k)^2]
\\
&=\alpha^2\E\Big[ \Big(\sum_{\ell=0}^k\xi_1(\ell)\1_{\Omega(\ell)}\Big)^2\Big]
\\
&=\alpha^2\Big(\sum_{\ell=0}^k\E[ (\xi_1(\ell)\1_{\Omega(\ell)})^2]+\sum_{i,j=0,i\neq j}^k\E[ \xi_1(i)\1_{\Omega(i)}\xi_1(j)\1_{\Omega(j)}]\Big)
\\
&=\alpha^2\Big(\sum_{\ell=0}^k\E[ (\xi_1(\ell)\1_{\Omega(\ell)})^2]+\sum_{i,j=0,i\neq j}^k\E\Big[ \xi_1(i\land j)\1_{\Omega(i\land j)}\E[\xi_1(i\lor j)\1_{\Omega(i\lor j)}\vert \mathcal{F}_{i\land j}]\Big]\Big)
\\
&=\alpha^2\sum_{\ell=0}^k\E[ (\xi_1(\ell)\1_{\Omega(\ell)})^2]
\\
&=\alpha^2\sum_{\ell=0}^k\E\Big[ (p_1(\ell))^2\Big(\E\Big[ Y_1(\ell)-\sum_{j=1}^dp_j(\ell) Y_j(\ell)\Big\vert \mathcal{F}_{\ell-1}\Big]-\Big( Y_1(\ell)-\sum_{j=1}^dp_j(\ell) Y_j(\ell)\Big)\Big)^2\1_{\Omega(\ell)}\Big]
\\
&\leq \alpha^2\sum_{\ell=0}^k\E\Big[ \E\Big[\Big(\E\Big[ Y_1(\ell)-\sum_{j=1}^dp_j(\ell) Y_j(\ell)\Big\vert \mathcal{F}_{\ell-1}\Big]
-\Big( Y_1(\ell)-\sum_{j=1}^dp_j(\ell) Y_j(\ell)\Big)\Big)^2\Big\vert \mathcal{F}_{\ell-1}\Big]\1_{\Omega(\ell)}\Big]
\\
&=\alpha^2\sum_{\ell=0}^k\E\Big[ \E\Big[\Big( Y_1(\ell)-\sum_{j=1}^dp_j(\ell) Y_j(\ell)\Big)^2\Big\vert \mathcal{F}_{\ell-1}\Big]\1_{\Omega(\ell)}-\E\Big[ Y_1(\ell)-\sum_{j=1}^dp_j(\ell) Y_j(\ell)\Big\vert \mathcal{F}_{\ell-1}\Big]^2\1_{\Omega(\ell)}\Big]
\\
&\leq \alpha^2\sum_{\ell=0}^k\E\Big[ \Big( Y_1(\ell)-\sum_{j=1}^dp_j(\ell) Y_j(\ell)\Big)^2\1_{\Omega(\ell)}\Big]
\\
&= \alpha^2\sum_{\ell=0}^k\E\Big[ \Big((1-p_1(\ell)) Y_1(\ell)-\sum_{j=2}^dp_j(\ell) Y_j(\ell)\Big)^2\1_{\Omega(\ell)}\Big]
\\
&\leq2 \alpha^2\sum_{\ell=0}^k\E\Big[ \Big((1-p_1(\ell)) Y_1(\ell)\Big)^2\1_{\Omega(\ell)}+\Big(\sum_{j=2}^dp_j(\ell) Y_j(\ell)\Big)^2\1_{\Omega(\ell)}\Big]
\\
&\leq 2Q^2 \alpha^2\sum_{\ell=0}^k\E\Big[ \Big((1-p_1(\ell))^2+\Big(\sum_{j=2}^dp_j(\ell)\Big)^2\Big)\1_{\Omega(\ell)}\Big]
\\
&\leq4Q^2 \alpha^2\sum_{\ell=0}^k\E\Big[ (1-p_1(\ell))\1_{\Omega(\ell)}\Big],
\end{align*}
where we used that $\E[\xi_1(k_2)\,\vert\, \mathcal{F}_{k_1}]=0$, for any $k_2> k_1=0,1,\ldots$, together with $\vert Y_1(k)\vert\leq Q$, the inequality $(a+b)^2\leq 2(a^2+b^2)$ for $a,b\in\R$ and $1-p_\ell^1\le 1$. Arguing similarly we obtain for $u\in\{2,\ldots,d\}$,
\begin{align*}
	\E[(M_u(k)^2]
	&=\alpha^2\E\Big[ \sum_{\ell=0}^k(\xi_u(\ell)\1_{\Omega(\ell)})^2\Big]
	\\
	&\leq \alpha^2\sum_{\ell=0}^k\E\Big[ 	 (p_u(\ell))^2\Big(Y_u(\ell)-\sum_{j=1}^dp_j(\ell)Y_j(\ell)\Big)^2\1_{\Omega(\ell)}\Big]
		\\
	&\leq 4Q^2\alpha^2\sum_{\ell=0}^k\E\Big[ 	 p_u(\ell)\1_{\Omega(\ell)}\Big].
\end{align*}
Hence, applying a union bound, Doob's submartingale inequality \eqref{eq:doob} with $p=2$ gives for any $k=0,1,\dots$
\begin{align*}
\P(E(k))&=1-\P\Big(\bigcup_{j=1}^d\max_{u\in[k]} \vert M_j(u)\vert \geq \Delta/4\Big)
\\
&\geq 1-\sum_{j=1}^d\P\Big(\max_{u\in[k]} \vert M_j(u)\vert \geq \Delta/4\Big)
\\
&\geq  1-64Q^2\frac{\alpha^2}{\Delta^2}\Big(\sum_{\ell=0}^k\E\Big[ (1-p_1(\ell))\1_{\Omega(\ell)}\Big]+\sum_{j=2}^d\sum_{\ell=0}^k\E\Big[ 	 p_j(\ell)\1_{\Omega(\ell)}\Big].\Big)
\\
&=  1-128Q^2\frac{\alpha^2}{\Delta^2}\sum_{\ell=0}^k\E\Big[ (1-p_1(\ell))\1_{\Omega(\ell)}\Big].
\end{align*}
Proposition \ref{prop:basic} gives for any $k=0,1,\dots$ the bound
		\begin{align*}
			\E[	(1-p_1(k+1))\1_{\Omega(k+1)}]&\leq \E[	(1-p_1(k+1))\1_{\Omega(k)}]
	\\
	&\leq\E\Big[\Big(1-\alpha\frac{\Delta}{4d}\Big(1+\frac{\Delta}{2}(d-1)\Big)\Big)\big(1-p_1(k)\big)\1_{\Omega(k)}+\alpha\xi_1(k)\1_{\Omega(k)}\Big]
	\\
	&=\Big(1-\alpha\frac{\Delta}{4d}\Big(1+\frac{\Delta}{2}(d-1)\Big)\Big)\E[\big(1-p_1(k)\big)\1_{\Omega(k)}],
	\end{align*}
	which implies
	\begin{equation}\label{eq:rate}
		\E\big[	\big(1-p_1(k)\big)\1_{\Omega(k)}\big]\leq\big(1-p_1(0)\big) \Big(1-\alpha\frac{\Delta}{4d}\Big(1+\frac{\Delta}{2}(d-1)\Big)\Big)^{k}.
	\end{equation}
    We set
	\[\Theta\coloneq \bigcap_{k=0}^\infty \Omega(k)=\Big\{p_1(u)\geq \max_{i=2,\dots, d} p_i(u)+\frac{\Delta}2,\forall u\in\N \Big\}. \]
The continuity of probability measures and Lemma \ref{lemma:ek} then imply
	\begin{align*}
\P(\Theta)&=\lim\limits_{k\to\infty}\P(\Omega(k))
\\
&\geq \lim\limits_{k\to\infty}\P(E(k))
\\
&\geq  1-128Q^2\frac{\alpha^2}{\Delta^2}\sum_{\ell=0}^\infty\E\Big[ (1-p_1(\ell))\1_{\Omega(\ell)}\Big]
\\
&\geq  1-128Q^2(1-p_1(0))\frac{\alpha^2}{\Delta^2}\sum_{\ell=0}^\infty \Big(1-\alpha\frac{\Delta}{4d}\Big(1+\frac{\Delta}{2}(d-1)\Big)\Big)^{\ell}
\\
&=1-1024Q^2(1-p_1(0))\frac{d\alpha}{\Delta^3\Big(2+\Delta(d-1)\Big)}
\\
&\geq 1-2048Q^2(1-p_1(0))\frac{\alpha}{\Delta^3\Big(\tfrac{4}{d}+\Delta \Big)}
\\
&\geq 1-\frac\epsilon 2,
	\end{align*}
    where we used that we can assume $d\geq 2$ without loss of generality.
Additionally, \eqref{eq:rate} and the elementary inequality  $1-x\leq \exp(-x)$, which is valid for any real number $x,$ give
		\begin{align*}
		\E[	\big(1-p_1(k)\big)\1_{\Theta}]&\leq 		\E[	\big(1-p_1(k)\big)\1_{\Omega(k)}]
		\\
		&\leq(1-p_1(0)) \Big(1-\alpha\frac{\Delta}{4d}\Big(1+\frac{\Delta}{2}(d-1)\Big)\Big)^{k}
		\\
		&\leq(1-p_1(0)) \exp\Big(-\alpha\frac{\Delta}{4d}\Big(1+\frac{\Delta}{2}(d-1)\Big)k\Big).
    \end{align*} 
    When $d=1$, the right hand side of this inequality is $0$. For $d\geq 2$, we can also use the bound $d-1\geq d/2$. Together with 
     \begin{align*}
        \Vert \p(k)-\mathbf{e}_1\Vert_1&=1-p_1(k)+\sum_{i=2}^dp_i(k)
 =2\big(1-p_1(k)\big),
    \end{align*}
    this concludes the proof of the first statement. 
For the proof of the second statement, we apply Markov's inequality to obtain
\begin{align*}
    &\P\Big(\Vert\p(k) - \mathbf{e}_1 \Vert_1\ge\delta\Big)
    \\
    &\leq \P(\Theta^{\mathbf{C}})+\P\Big(\Vert\p(k) - \mathbf{e}_1 \Vert_1\1_{\Theta}\ge\delta\Big)
    \\
    &\leq \frac \epsilon 2+2(1-p_1(0)) \exp\Bigg(-\frac{\alpha}{16}\Big(4\frac{\Delta}{d}+\Delta^2\Big)k\Bigg)\delta^{-1}.
\end{align*}
Hence, if
\begin{align*}
    k&\geq \Bigg(\frac{\alpha}{16}\Big(4\frac{\Delta}{d}+\Delta^2\Big)\Bigg)^{-1}\log\Big(\frac{4(1-p_1(0))}{\epsilon\delta} \Big)
    \\
    &=\frac{16d}{\alpha\Delta(4+d\Delta)}\log\Big(\frac{4(1-p_1(0))}{\epsilon\delta} \Big),
\end{align*}
then,
\begin{align*}
    &\P\Big(\Vert\p(k) - \mathbf{e}_1 \Vert_1\ge\delta\Big)
\leq \epsilon.
\end{align*}
\end{proof}

\begin{lemma}
Given an initialization of the weights $\w(0)=(w_1(0),\ldots,w_d(0))^\top$ consider two $d$-dimensional intensity vectors $\llambda=(\lambda_1,\ldots,\lambda_d)^\top, \wt \llambda=(\wt \lambda_1,\ldots, \wt \lambda_d)^\top$ with positive entries. Assume that $\lambda_1 w_1(0)> \max_{i=2,\ldots,d} \lambda_i w_i(0)$ and $\wt \lambda_d w_d(0)> \max_{i=1,\ldots,d-1} \wt \lambda_i w_i(0).$ Assume we run the learning rule \eqref{eq:LearningRule_P} with intensity $\llambda$ until time $K^*$ and then, for $k>K^*,$ change the intensity to $\wt \llambda$ and run the learning dynamic until time $k\to \infty.$ If $K^*$ is small, in particular, if $K^*=0,$ the above convergence result can be applied to show that the dynamic converges to the corner $\mathbf{e}_d.$ However, for any $\epsilon\in (0,1)$ and all sufficiently large $K^*$ (depending on $\epsilon$), the dynamics will converge to $\mathbf{e}_1$ with probability $1-2\epsilon.$ 
\end{lemma}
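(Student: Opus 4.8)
The conceptual heart is that the recursion \eqref{eq:LearningRule_P} for $\p$ does not reference the intensity: writing $\mathbf{Y}(k)=\mathbf{B}(k)+\mathbf{Z}(k)$, the update $\p(k+1)=\p(k)\odot(\ones+\alpha\mathbf{Y}(k))/\big(\p(k)^\top(\ones+\alpha\mathbf{Y}(k))\big)$ depends on $\llambda$ only through the current state. Hence switching from $\llambda$ to $\wt\llambda$ at time $K^*$ leaves the weights $\w(K^*)$ untouched and merely renormalises the probability vector. With $r_i:=\wt\lambda_i/\lambda_i>0$, one checks from $p_i(K^*)\propto\lambda_i w_i(K^*)$ that the post-switch state is $\wt p_i(K^*)=r_i p_i(K^*)/\sum_{j=1}^d r_j p_j(K^*)$, a deterministic function of the Phase-1 state $\p(K^*)$, and that $\big(\wt\p(k)\big)_{k\ge K^*}$ again solves \eqref{eq:LearningRule_P} with fresh i.i.d.\ noise, started from $\wt\p(K^*)$. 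For $K^*=0$ this already gives convergence to $\mathbf{e}_d$, since $\wt\lambda_d w_d(0)>\max_{i\ne d}\wt\lambda_i w_i(0)$ is exactly $\wt p_d(0)>\max_{i\ne d}\wt p_i(0)$, so Theorem \ref{thm:sgd} applies.

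The key quantitative point is that $\wt\p(K^*)$ inherits the proximity of $\p(K^*)$ to $\mathbf{e}_1$: the change-of-variables formula gives, on any event where $p_1(K^*)$ is bounded below, $1-\wt p_1(K^*)\le C\,(1-p_1(K^*))$ with a constant $C$ depending only on the ratios $r_i$ and the lower bound for $p_1$. Thus a sufficiently entrenched first weight ($w_1/w_i$ large) keeps index $1$ the winner after the switch, whatever the reweighting. To make this precise, fix $\epsilon\in(0,1/2)$ (the claim being vacuous otherwise) and apply Theorem \ref{thm:sgd} to Phase 1 with gap $\Delta=p_1(0)-\max_{i\ge2}p_i(0)>0$: for $\alpha$ small enough (depending on $\Delta,\epsilon,p_1(0),Q,d$) there is an event $\Theta$ with $\P(\Theta)\ge1-\epsilon/2$ on which $p_1(k)-\max_{i\ge2}p_i(k)\ge\Delta/2$ for all $k$, so $p_1(k)$ is bounded below via \eqref{eq:p1 bound}, and $\E[(1-p_1(k))\1_\Theta]\le(1-p_1(0))e^{-ck}$ with $c=\tfrac{\alpha\Delta}{4d}(1+\tfrac\Delta2(d-1))$.

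Next I would fix a tolerance $\eta\in(0,1/4)$ and set $G=\{1-\wt p_1(K^*)\le\eta\}$, on which the Phase-2 gap satisfies $\wt\Delta:=\wt p_1(K^*)-\max_{i\ge2}\wt p_i(K^*)\ge1-2\eta>1/2$. Combining the transfer bound with Markov's inequality and the Phase-1 decay yields $\P(\Theta\setminus G)\le \eta^{-1}C\,\E[(1-p_1(K^*))\1_\Theta]\le \eta^{-1}C(1-p_1(0))e^{-cK^*}$, which is $\le\epsilon/2$ for all $K^*$ large enough (depending on $\epsilon$). Finally, conditioning on $\mathcal F_{K^*-1}$ and invoking the Markov property, on $G$ the future is a fresh instance of \eqref{eq:LearningRule_P} started from the good state $\wt\p(K^*)$; since $\wt\Delta\ge1-2\eta$ and $1-\wt p_1(K^*)\le\eta$, the learning-rate hypothesis of Theorem \ref{thm:sgd} (applied with parameter $2\epsilon$) holds uniformly on $G$ once $\alpha$ is small, because near $\mathbf{e}_1$ the factor $1/(256(1-\wt p_1(K^*)))$ blows up and the binding constraint reduces to $\alpha\le \wt\Delta^2(1-Q\alpha)^3/(16Q^2)$. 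Applying the theorem pointwise on $G$ and integrating gives $\wt\p(k)\to\mathbf{e}_1$ with conditional probability $\ge1-\epsilon$; a union bound over $\Theta^c$, $\Theta\setminus G$ and the Phase-2 failure event then yields convergence to $\mathbf{e}_1$ with probability $\ge1-2\epsilon$.

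The main obstacle is this last step: applying Theorem \ref{thm:sgd} with the random, $\mathcal F_{K^*-1}$-measurable initial condition $\wt\p(K^*)$. It requires the strong Markov property at the deterministic switching time together with the independence of $\wt\p(K^*)$ from the post-switch noise, and a genuinely uniform version of Theorem \ref{thm:sgd} over the compact good set $\{p:1-p_1\le\eta\}$, so that the conditional failure probability and the learning-rate constraint are controlled simultaneously for every admissible start. Upgrading the a.s.\ convergence on the Phase-2 success event (obtained from summability of the exponential bound) to the pathwise statement $\wt\p(k)\to\mathbf{e}_1$ is then routine.
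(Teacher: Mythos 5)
Your proposal follows essentially the same route as the paper: run Phase 1 long enough that Theorem \ref{thm:sgd} forces $\p(K^*)$ to lie within $\delta$ of $\mathbf{e}_1$ with probability $\geq 1-\epsilon$, observe that this proximity (calibrated against the intensity ratios $\wt\lambda_i/\lambda_i$) guarantees $\wt\lambda_1 w_1(K^*)>\max_{i\neq 1}\wt\lambda_i w_i(K^*)$ after the switch, and then apply Theorem \ref{thm:sgd} a second time to conclude with total failure probability $2\epsilon$. Your transfer bound $1-\wt p_1(K^*)\le C\,(1-p_1(K^*))$ is just the paper's inequality $\max_{i\neq 1}\wt\lambda_i w_i(K^*)<\wt\lambda_1 w_1(K^*)$ (obtained from its choice of $\delta$ with $\max_{i\neq 1}\wt\lambda_i\lambda_1\delta/(\lambda_i\wt\lambda_1(1-\delta))<1$) rewritten in probability rather than weight coordinates, and the conditioning/uniformity subtleties you carefully flag at the end are exactly the points the paper's own proof passes over silently.
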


This result shows that the dynamic can be primed at the beginning to end up in one regime. Despite the noise and the infinite amount of data, the dynamic is unable to escape this domain of attraction. From the proof, one can derive quantitative bounds for $K^*.$

\begin{proof}
Given $\epsilon\in (0,1),$ choose $\delta\in (0,1)$ such that 
\begin{align}
\max_{i\neq 1} \frac{\wt \lambda_i \lambda_1 \delta}{\lambda_i \wt \lambda_1 (1-\delta)}<1.
    \label{eq.937vc}
\end{align}
Let $\Delta=p_1(0)-\max_{i\neq 1}p_i(0).$ Given $\epsilon,\delta,\Delta$ choose 
\begin{equation*}
    K^*\geq \frac{16d}{\alpha\Delta(4+d\Delta)}\log\Big(\frac{4(1-p_1(0))}{\epsilon\delta} \Big).
\end{equation*}
By Theorem \ref{thm:sgd}, this guarantees that 
\begin{equation*}
    \P\Big(\Vert\p(K^*) - \mathbf{e}_1 \Vert_1\ge\delta\Big)\leq \epsilon.
\end{equation*}
On the event $\Vert\p(K^*) - \mathbf{e}_1 \Vert_1< \delta,$ we have 
\[
    1- \frac{\lambda_1 w_1(K^*)}{\llambda^\top \w(K^*)}< \delta, \quad \text{and} \ \ 
    \max_{i\neq 1}\frac{\lambda_i w_i(K^*)}{\llambda^\top \w(K^*)}< \delta,
\]
which can be combined into 
\[
\max_{i\neq 1}\lambda_i w_i(K^*)<\delta \llambda^\top \w(K^*) 
< \frac{\delta}{1-\delta} \lambda_1 w_1(K^*).
\]
Using the inequality \eqref{eq.937vc}, we obtain 
\[
\max_{i\neq 1}\wt \lambda_i w_i(K^*)< \frac{\wt \lambda_1 (1-\delta)}{\lambda_1 \delta} \frac{\delta}{1-\delta} \lambda_1 w_1(K^*)= \wt \lambda_1 w_1(K^*).
\]
This means that restarting the learning rule \eqref{eq:LearningRule_P} at time $K^*$ with intensities $\wt \lambda_1,\ldots, \wt \lambda_d$ and weights $w_1(K^*),\ldots,w_d(K^*),$ shows that $p_1(K^*)> \max_{i\neq 2} p_i(K^*).$ Applying Theorem \ref{thm:sgd} again shows convergence to $\mathbf{e}_1$ with probability $1-2\epsilon.$

\end{proof}

\subsection{Proofs for Subsection \ref{subsec:GradientFlow}}

This section contains additional material on the gradient flow \eqref{eq:Rate-based_learnignrule}, as well as the proofs for Lemma \ref{lem:PropertiesGradientFlow} and Theorem \ref{thm:GradientFlow}.

For $d=2$, the gradient flow admits an explicit solution. In this case, $\p(t)=(p_1(t),p_2(t))^\top$. If $\p(0)=(1/2,1/2)^\top$, then this is a stationary solution and $\p(t)=\p(0)=(1/2,1/2)^\top$ for all $t\geq 0$. If $p_1(0)>1/2$, then, 
\begin{align}
    p_1(t) = \frac 12 + \frac{1}{2\sqrt{Ce^{-t}+1}}, \quad \text{with}  \ \  C:=\frac{1}{(2p_1(0)-1)^2}-1.
    \label{eq.ofcbuew}
\end{align}
If $p_1(0)<1/2$, then $p_2(t)=1-p_1(t)>1/2$ follows the dynamic in \eqref{eq.ofcbuew}. This formula immediately implies that $p_1(t)$ converges exponentially fast to $1$.

\begin{proof}[Proof of Formula \eqref{eq.ofcbuew}] Throughout the proof we set $p(t)\coloneq p_1(t)$ and do not use the previous notation $p_1(t),p_2(t)$ for the first and second probability. For $d=2$, the gradient flow ODE \eqref{eq:Rate-based_learnignrule} becomes 
\begin{align}
\frac{\d}{\d t}p(t)=p(t)^2-p(t)\big(p(t)^2+(1-p(t))^2\big)=3p(t)^2-2p(t)^3-p(t).
    \label{eq.8bas}
\end{align}
Rewriting this in the variable $u(t)=1-2p(t)$ gives the dynamic,
\[
\frac{\d}{\d t}u(t)=
- 2 \frac{\d}{\d t}p(t)=
-6p(t)^2+4p(t)^3+2p(t) = \frac{1}{2}\big(1-u(t)^2\big) u(t).
\]
This is solved by $u(t)=-1/\sqrt{Ce^{-t}+1}$ since 
\[
-\frac{\d}{\d t} \frac{1}{\sqrt{Ce^{-t}+1}}
= - \Big(-\frac 12\Big) \cdot \frac{-Ce^{-t}}{(Ce^{-t}+1)^{3/2}}= \frac{1}{2}\big(1-u(t)^2\big) u(t).
\]
Thus, $p(t)=\tfrac 12 (1-u(t))$ solves \eqref{eq.8bas}. Finally, $C$ is determined by the initial condition $p(0)=\tfrac 12 (1-1/\sqrt{C+1})$.
\end{proof}

The following lemma summarises different properties of the gradient flow \eqref{eq:Rate-based_learnignrule}. In its statement, differentiable on $[0,1]$ means differentiable on $(0,1)$ and continuous on $[0,1]$.
\begin{lemma}\label{lem:PropertiesGradientFlow}
The gradient flow \eqref{eq:Rate-based_learnignrule} exhibits the following properties.
\begin{enumerate}[(a)]
    \item If $\phi\colon [0,1]\to \mathbb{R}$ is a convex and differentiable function, then \smash{$t\mapsto \sum_{i=1}^d \phi(p_i(t))$} is monotone increasing.
    \item\label{num:Monotonicity} Let $i,j\in [d]$ with $i\neq j$. If $p_i(0)> p_j(0),$ respectively $p_i(0)=p_j(0),$ then $p_i(t)> p_j(t),$ respectively $p_i(t)= p_j(t),$ \, for all $t\geq 0$.    
 Moreover, if
 \begin{equation*}
     \Delta\coloneq p_1(0)- \max_{i=2,\ldots,d}p_i(0) >0,
 \end{equation*}
 then
 \begin{equation*}
     p_1(t)\geq \max_{i=2,\ldots,d}p_i(t)+\Delta \quad \text{for all}\ \  t\geq 0.
 \end{equation*}
\end{enumerate}
\end{lemma}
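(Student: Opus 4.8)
The plan is to prove both assertions by differentiating the relevant scalar quantities along the flow \eqref{eq:Rate-based_learnignrule}, whose $i$-th component reads $\tfrac{\d}{\d t}p_i(t) = p_i(t)\big(p_i(t) - \Vert\p(t)\Vert^2\big)$. Throughout I rely on the fact, established earlier, that the flow exists globally and remains in $\probsimplex$, so all coefficients appearing below are continuous and the ODE manipulations are justified.

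For part (a), I would compute
\[
\frac{\d}{\d t}\sum_{i=1}^d \phi\big(p_i(t)\big) = \sum_{i=1}^d \phi'\big(p_i(t)\big)\, p_i(t)\big(p_i(t) - \Vert\p(t)\Vert^2\big).
\]
The key observation is that $\p(t)$ itself defines a probability distribution on $[d]$. Letting $X$ be the random variable taking the value $p_i(t)$ with probability $p_i(t)$, one has $\E[X] = \sum_i p_i(t)^2 = \Vert\p(t)\Vert^2$, so the right-hand side equals $\E[X\phi'(X)] - \E[X]\,\E[\phi'(X)] = \operatorname{Cov}\big(X, \phi'(X)\big)$. Since $\phi$ is convex, $\phi'$ is nondecreasing, hence $X$ and $\phi'(X)$ are comonotone and their covariance is nonnegative (Chebyshev's sum inequality). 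This shows the sum is monotone increasing.

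For part (b), I would first derive the scalar ODE governing pairwise differences,
\[
\frac{\d}{\d t}\big(p_i(t) - p_j(t)\big) = \big(p_i(t)+p_j(t) - \Vert\p(t)\Vert^2\big)\big(p_i(t)-p_j(t)\big),
\]
which is linear in $u(t) := p_i(t)-p_j(t)$ with continuous coefficient $g(t) := p_i(t)+p_j(t) - \Vert\p(t)\Vert^2$. The integrating factor gives $u(t) = u(0)\exp\!\big(\int_0^t g(s)\,\d s\big)$, so the sign of $p_i-p_j$ (including its vanishing) is preserved for all $t$; this yields the first claim. For the quantitative claim, note that $\Delta>0$ forces $p_1(0) > p_j(0)$ for every $j\geq 2$, so by the first claim $p_1(t)$ remains strictly the largest coordinate. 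The crucial inequality is then $\Vert\p(t)\Vert^2 = \sum_j p_j(t)^2 \leq \big(\max_j p_j(t)\big)\sum_j p_j(t) = p_1(t)$, which forces $g(t) = p_1(t)+p_j(t) - \Vert\p(t)\Vert^2 \geq p_j(t) \geq 0$ (for $i=1$ and any $j\geq 2$). Hence the integrating factor is at least one, and
\[
p_1(t) - p_j(t) = \big(p_1(0)-p_j(0)\big)\exp\Big(\int_0^t g(s)\,\d s\Big) \geq p_1(0)-p_j(0) \geq \Delta;
\]
taking the maximum over $j\geq 2$ gives the asserted bound.

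The hard part will be the second claim of part (b): the integrating-factor argument alone only shows that differences keep their sign, not that they stay above $\Delta$, since $g$ could in principle be negative. The resolution is to combine the order-preservation from the first claim with the elementary bound $\Vert\p\Vert^2\leq p_1$ (valid precisely because $p_1$ stays the maximal coordinate) to guarantee $g\geq 0$, so that the integrating factor never drops below one and the gap $p_1-p_j$ is nondecreasing.
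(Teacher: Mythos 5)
Your proposal is correct. Parts (a) and the sign-preservation claim in (b) follow essentially the paper's own route: your identity $\tfrac{\d}{\d t}\sum_i \phi(p_i(t)) = \operatorname{Cov}\big(X,\phi'(X)\big)\geq 0$ is exactly the paper's symmetrization $\sum_{1\leq i<j\leq d} q_iq_j\big(\phi'(q_i)-\phi'(q_j)\big)\big(q_i-q_j\big)\geq 0$ rewritten in probabilistic language, and the integrating-factor representation
\begin{equation*}
p_i(t)-p_j(t) = \big(p_i(0)-p_j(0)\big)\exp\Big(\int_0^t p_i(s)+p_j(s)-\Vert\p(s)\Vert^2\,\d s\Big)
\end{equation*}
is identical to the paper's.

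For the quantitative bound $p_1(t)\geq \max_{i\geq 2}p_i(t)+\Delta$, however, you take a genuinely different and shorter route. The paper does not reuse its sign-preservation claim; instead it runs a local-in-time bootstrap: from the crude Lipschitz bounds $p_1(t)\geq p_1(0)-t$ and $p_i(t)\leq p_i(0)+t$ it secures the ordering $p_1(t)\geq \max_{i\geq 2}p_i(t)$ only on the window $[0,\Delta/2]$, deduces $\Vert\p(t)\Vert^2\leq p_1(t)$ and hence nonnegativity of the coefficient in the difference ODE there, concludes $p_1(t)-p_i(t)\geq\Delta$ on that window, and then iterates the argument window by window. You instead invoke the already-proved order preservation globally: since $\Delta>0$ forces $p_1(0)>p_j(0)$ for every $j\geq 2$, the first claim gives $p_1(t)=\max_j p_j(t)$ for \emph{all} $t\geq 0$, whence $\Vert\p(t)\Vert^2\leq \big(\max_j p_j(t)\big)\sum_j p_j(t)=p_1(t)$ globally, the integrating factor never drops below one, and the gap $p_1-p_j$ is nondecreasing in one stroke. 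Your version eliminates the iteration, and as a byproduct makes explicit that each gap $p_1(t)-p_j(t)$ is monotone nondecreasing in $t$ (slightly more than the lemma asserts); the paper's version is self-contained within part (b)'s second claim and mirrors the restart-on-small-intervals structure used later in the discrete-time analysis, but is longer. Both rest on the same difference ODE, so this is a streamlining rather than a new mechanism; there is no gap in your argument.
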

Lemma \ref{lem:PropertiesGradientFlow} implies that the $q$-norm $t\mapsto \vert\p(t)\vert_q$ is monotonically increasing whenever $1\leq q< \infty$. The result also implies that if instead $\phi$ is concave and differentiable, then $t\mapsto \sum_{i=1}^d \phi(p_i(t))$ is monotonically decreasing.

\begin{proof}[Proof of Lemma \ref{lem:PropertiesGradientFlow}]~
\begin{enumerate}[(a)]
    \item Since $\phi$ is convex, $\phi'$ is monotonically increasing. Thus, for a probability vector $\q=(q_1,\ldots,q_d)$, we have 
\begin{align*}
    \sum_{i=1}^d q_i \phi'(q_i) \big(q_i- \|\q\|_2^2\big)
    &= 
    \sum_{i=1}^d q_i \phi'(q_i) \Big(q_i \sum_{j=1}^d q_j - \sum_{j=1}^d q_j^2\Big) \\
    &= \sum_{i,j=1}^d q_iq_j \phi'(q_i) \big(q_i -q_j\big) \\
    &= \sum_{1\leq i<j\leq d}
    q_i q_j \big(\phi'(q_i)-\phi'(q_j)\big)
    \big(q_i -q_j\big) \\
    &\geq 0.
\end{align*}
(If $\phi$ is strictly convex, then strict equality holds if and only if $\q$ is one of the stationary points described above.) Using this and the gradient flow formula
\begin{equation*}
    \frac{\d}{\d t} \sum_{i=1}^d \phi(p_i(t)) 
= \sum_{i=1}^d \phi'(p_i(t)) \frac{\d}{\d t} p_i(t) 
= \sum_{i=1}^d p_i(t) \phi'(p_i(t)) \big( p_i(t)- \|\p(t)\|_2^2\big)\geq 0,
\end{equation*}
proving the result.
\item  
By definition it holds for $i,j\in[d]$
 \begin{align}\label{eq:diff grad}\notag
        \frac{\d}{\d t}\big(p_i(t) - p_j(t)\big)& = p_i(t)\big(p_i(t)-\Vert\p(t)\Vert^2\big)-p_j(t)\big(p_j(t)-\Vert\p(t)\Vert^2\big)\\
        &= \big(p_i(t) - p_j(t)\big)\big(p_i(t) + p_j(t) - \Vert \p(t)\Vert^2\big).
    \end{align}
 From this we deduce that
    \[p_i(t) - p_j(t)=(p_i(0)-p_j(0))\exp\Big(\int_{0}^tp_i(s) + p_j(s) - \Vert \p(s)\Vert^2\d s\Big),\]
    which concludes the proof of the first statement since the exponential function is strictly positive.

To prove the second statement, we have
    \begin{equation*}
        p_1(t)=p_1(0)+\int_0^t p_1(s)\big(p_1(s)-\Vert \p(s)\Vert^2\big) \, \d s \geq p_1(0)-t,
    \end{equation*}
    and similarly, for any $i \in \{2,\ldots,d\}$,
        \begin{equation*}
        p_i(t) \leq p_i(0)+t \leq p_1(0)+t-\Delta.
    \end{equation*}
    Hence, whenever $t\in[0,\Delta/2]$, we find
    \begin{equation*}
        p_1(t)\geq \max_{i=2,\ldots,d} \ p_i(t),
    \end{equation*}
    which also implies
    \begin{equation*}
        p_1(t)\geq p_1(t)^2 +\max_{i=2,\ldots,d} p_i(t)\big(1-p_1(t)\big)\geq \Vert \p(t)\Vert^2,
    \end{equation*}
    for all $t\in[0,\Delta/2]$.
    Therefore for any $t\in[0,\Delta/2],i\in[d]$ it holds 
    \begin{equation*}
        \frac{\d}{\d t}\big(p_1(t) - p_i(t)\big)= \big(p_1(t) - p_i(t)\big)\big(p_1(t) + p_i(t) - \Vert \p(t)\Vert^2\big)\geq 0,
    \end{equation*}
    which implies for any $i\in\{2,\ldots,d\}$ and $t\in[0,\Delta/2]$,
    \begin{equation*}
        p_1(t)-p_i(t)\geq p_1(0)-p_i(0)\geq \Delta,
    \end{equation*}
    Applying this argument iteratively concludes the proof.
\end{enumerate}
\end{proof}

For the reader's convenience we restate Theorem \ref{thm:GradientFlow} before giving its proof.
\theoremstyle{plain}
\newtheorem*{recalltheorem2}{Theorem \ref{thm:GradientFlow}}
\begin{recalltheorem2}
    Assume
    \begin{equation*}
        p_1(0)\geq \max_{i=2,\ldots,d}p_i(0)+\Delta,
    \end{equation*}
    for some $\Delta>0$.
    Then
     \begin{equation*}
        \Vert \mathbf{e}_1-\p(t)\Vert_1\le 2(1-p_1(0))\exp\Big(- \frac{\Delta}{d} (1+(d-1)\Delta) t\Big),
    \end{equation*}
    that is linear convergence of $\p(t)\to \mathbf{e}_1$ as $t\to\infty$.
\end{recalltheorem2}

\begin{proof}[Proof of Theorem \ref{thm:GradientFlow}]
Arguing as in \eqref{eq:grad bound} and \eqref{eq:p1 bound}, Lemma \ref{lem:PropertiesGradientFlow} \eqref{num:Monotonicity} implies
\begin{align*}
   \frac{\d}{\d t}(1-p_1(t)) & = -p_1(t)(p_1(t)-\Vert\p(t)\Vert^2)
   \\
   &\leq -\mu(1-p_1(t)),
\end{align*}
where
\begin{equation*}
    \mu\coloneq \frac{\Delta}{d}((d-1)\Delta+1).
\end{equation*}
Grönwall's inequality entails
\begin{align*}
    1-p_1(t)&\leq (1-p_1(0))\exp(-\mu t),
\end{align*}
which gives
\begin{equation*}
    \Vert \p(t)-\mathbf{e}_1\Vert_1=1-p_1(t)+\sum_{i=2}^dp_i(t)=2(1-p_1(t))\leq 2(1-p_1(0))\exp(-\mu t).
\end{equation*}

\end{proof}

\subsection{Proofs for Subsection \ref{subsec:Bioplausibility}}
\label{subsec:Bioplausibility_proofs}

In this subsection, we heuristically derive the expression for the probabilities 
\begin{align}
    \frac{\lambda_j w_j(\mft_{k})}{\sum_{\ell=1}^d \lambda_\ell w_\ell(\mft_{k})}, \quad \quad 
 j=1,\ldots, d,
    \label{eq.947ueb}
\end{align}
in \eqref{eq.prob_j_causes_spike}. To this end, we assume that the weights are small compared to the threshold $S$, that the weights are only updated at the postsynaptic spike times, and that $\sum_{\ell=1}^d \lambda_\ell w_\ell(\mft_{k}) \gg S$. For convenience, we write $w_\ell$ for $w_\ell(\mft_{k})$ and all $\ell\in [d]$. The constraint $\sum_{\ell=1}^d \lambda_\ell w_\ell \gg S$ guarantees that after the postsynaptic spike time $\mft_{k}$, the membrane potential $Y_t$ will again reach $S$ and thus emit another spike at time $\mft_{k+1}$.  

Taking the expectation of the membrane potential $Y_t =  \sum_{j=1}^d \sum_{\tau \in \mathcal{T}_j \cap (\mft_{k},t]} w_j e^{-(t-\tau)}$ with respect to all except the $j$\textsuperscript{th} spike-train, gives 
\begin{align*}
    Z_t&:=\sum_{\tau \in \mathcal{T}_j \cap (\mft_{k},t]} w_j e^{-(t-\tau)}
    +
    \sum_{\ell \neq j} w_\ell \lambda_\ell \int_{\mft_{k}}^t e^{-(t-s)} \, \d s  \\
    &= \sum_{\tau \in \mathcal{T}_j \cap (\mft_{k},t]} w_j e^{-(t-\tau)}
    +
    \sum_{\ell \neq j} w_\ell \lambda_\ell \big(1 -e^{-(t-\mft_{k})}\big),
\end{align*}
for all $\mft_{k}\le  t< \mft_{k+1}$. 

Introduce $\mft^\ast \coloneq \inf \{t\ge  \mft_{k}\colon  Z_t \geq S-w_j\}$ and write $\mft^+$ for the first time after $\mft^\ast$ where  
\begin{equation*}
    t\mapsto Z_{\mft^\ast}+\underbrace{e^{-(\mft^\ast - \mft_{k})}\sum_{\ell \neq j}^d w_\ell \lambda_\ell \big(1 -e^{-(t-\mft^\ast)}\big)}_{=\colon V_t}
\end{equation*}
reaches the threshold $S$. If there are sufficiently many neurons, the probability that the $j$\textsuperscript{th} presynaptic neuron spikes at time $\mft^\ast$ is small and will be neglected. We have $Z_{\mft^\ast}=S-w_j$ such that $V_{\mft^+}=w_j$. Approximating $1 -e^{-(\mft^+-\mft^\ast)}\approx \mft^+-\mft^\ast$ gives 
\begin{equation*}
    \mft^+-\mft^\ast \approx e^{\mft^\ast - \mft_{k}}  \frac{w_j}{\sum_{\ell \neq j} w_\ell \lambda_\ell}.
\end{equation*}
The $j$\textsuperscript{th} presynaptic neuron causes the next postsynaptic spike if and only if it spikes in the interval $(\mft^\ast, \mft^+)$. The spike times of the $j$\textsuperscript{th} presynaptic neuron are generated from a Poisson process with intensity $\lambda_j$. Thus, if $U\sim \operatorname{Poisson}(\lambda_j(\mft^+-\mft^\ast))$, the probability that the $j$\textsuperscript{th} presynaptic neuron spikes in $(\mft^\ast, \mft^+)$ is given by 
\begin{equation*}
    \P\big(U\neq 0\big) = 1 - \P\big(U= 0\big)= 1-\exp\big(-\lambda_j(\mft^+-\mft^\ast)\big)\approx \lambda_j(\mft^+-\mft^\ast) \approx  e^{\mft^\ast - \mft_{k}}\frac{w_j \lambda_j }{\sum_{\ell \neq j} w_\ell \lambda_\ell}.
\end{equation*}
We can moreover approximate the denominator on the right hand side by the full sum $\sum_{\ell=1}^d w_\ell \lambda_\ell$. Since the probabilities add up to one, we must have $e^{t^\ast - \mft_{k}}\approx 1$. This shows that the probability of the $j$\textsuperscript{th} presynaptic neuron triggering the first postsynaptic spike after $\mft_{k}$ is approximately given by \eqref{eq.947ueb}.

\medskip

\begin{lemma}
\label{lem:equal_weights}
Consider the setting outlined in  Subsection \ref{subsec:Bioplausibility}. If at some time point $\mft>0$, all weights are the same, then the probability that the $j$\textsuperscript{th} neuron triggers the next postsynaptic spike after $\mft$ is given by 
\[\frac{\lambda_j}{\sum_{\ell=1}^d \lambda_\ell}.\]
\end{lemma}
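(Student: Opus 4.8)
The plan is to exploit the fact that, once all synaptic weights coincide, the postsynaptic membrane potential no longer depends on \emph{which} presynaptic neuron fired, decoupling the identity of the triggering neuron from the threshold-crossing mechanism. Write $w$ for the common weight value at $\mft$; since in the assumed model weight updates are delayed to the postsynaptic spike times, the weights remain equal to $w$ throughout the interval up to the next postsynaptic spike. Hence by \eqref{eq:Postsynaptictrace} the membrane potential for $t>\mft$ equals $Y_t = w\sum_\tau e^{-(t-\tau)}$, where the sum ranges over \emph{all} presynaptic spikes in $(\mft, t]$ pooled across the $d$ input neurons. The crucial structural observation is that this expression depends only on the pooled spike \emph{times} and not on the neuron producing each spike.

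First I would invoke the superposition and colouring properties of Poisson processes. The $d$ independent input processes $(X_t^{(1)})_{t\ge 0},\ldots,(X_t^{(d)})_{t\ge 0}$ superimpose to a single Poisson process of intensity $\Lambda := \sum_{\ell=1}^d \lambda_\ell$; denote its ordered arrival times after $\mft$ by $T_1 < T_2 < \cdots$. By the colouring theorem each arrival $T_n$ carries a label $L_n \in [d]$ recording its neuron of origin, the $L_n$ are i.i.d.\ with $\P(L_n = j) = \lambda_j/\Lambda$, and---this is the property I will use---the label sequence $(L_n)_n$ is independent of the arrival-time sequence $(T_n)_n$.

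Next I would identify the triggering spike. Since $Y_t$ only jumps upward (at the $T_n$) and decays continuously in between, it first reaches the threshold $S$ at one of the arrival times; call its index $N$, so that the $N$\textsuperscript{th} pooled spike triggers the next postsynaptic spike and its label $L_N$ is exactly the index of the triggering neuron. Because $Y_t = w\sum_{\tau\le t} e^{-(t-\tau)}$ is a deterministic functional of the arrival times alone, the random index $N$ is measurable with respect to $(T_n)_n$, and is therefore independent of the labels $(L_n)_n$. One should also note that $N<\infty$ almost surely, since the exponential shot-noise process $Y$ is positive recurrent and hence exceeds any fixed level $S$ in finite time with probability one; alternatively the statement is read conditionally on the occurrence of a next postsynaptic spike. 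Conditioning on the value of $N$ and using the independence of $L_N$ from $N$ together with the common marginal of the labels then gives
\begin{equation*}
\P\big(\text{neuron } j \text{ triggers the next spike}\big) = \P\big(L_N = j\big) = \sum_{n\ge 1} \P(N = n)\,\P(L_n = j) = \frac{\lambda_j}{\Lambda}\sum_{n\ge 1}\P(N=n) = \frac{\lambda_j}{\sum_{\ell=1}^d \lambda_\ell}.
\end{equation*}

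The main obstacle---and the only place where the equal-weights hypothesis is genuinely used---is the decoupling step: one must argue that the threshold-crossing index $N$ depends on the pooled spikes only through their \emph{times}, so that it is independent of the neuron labels. Once equal weights make the membrane dynamics label-free, everything else reduces to a routine application of the Poisson colouring theorem and the independence of marks from arrival times.
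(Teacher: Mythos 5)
Your proof is correct, but it follows a genuinely different route from the paper's. The paper argues via a stopping-time and memorylessness reduction: with common weight $w$, it identifies the triggering neuron as the first one to spike once the membrane potential has reached the level $S-w$, restarts the Poisson processes at the stopping time $\mft^\ast=\inf\{t\ge \mft\,\colon\, Y_t\ge S-w\}$, and then computes $\P\big(X_j=\min(X_1,\ldots,X_d)\big)=\lambda_j/\sum_{\ell=1}^d\lambda_\ell$ for independent $X_i\sim\operatorname{Exp}(\lambda_i)$ --- the classical race of exponentials. You instead pool the $d$ inputs into a single Poisson process of intensity $\Lambda=\sum_{\ell=1}^d\lambda_\ell$ by superposition, observe that equal weights make the membrane potential, and hence the threshold-crossing index $N$, a functional of the pooled arrival times alone, and conclude via the colouring theorem that the label $L_N$ of the triggering spike carries the marginal label distribution $\lambda_j/\Lambda$. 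Your decomposition buys robustness: it never needs to identify \emph{which} pooled arrival is the trigger, only that the trigger index is determined by the arrival times. This sidesteps a subtlety that the paper's argument glosses over --- the first presynaptic spike after $\mft^\ast$ need not be the trigger, since the potential may decay back below $S-w$ before that spike arrives, so the race at $\mft^\ast$ may have to be rerun; making that rigorous requires precisely your observation that whether a given arrival triggers depends only on timing and not on labels. Conversely, the paper's approach is shorter and more elementary, using only memorylessness and the minimum-of-exponentials computation rather than the marking/colouring machinery, and it supplies the ``restart at a stopping time'' picture that the paper reuses in its surrounding heuristics.
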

\begin{proof}
Since all weights are the same, we can denote their value by $w$. The $j$\textsuperscript{th} neuron causes a postsynaptic spike if and only if it is the first one to spike after the postsynaptic membrane potential $Y_t$ has reached a level $\geq S-w$. As $\mft^\ast=\inf\{t\geq \mft: Y_t\geq S-w\}$ is a jump time and a stopping time, we can restart the process at $\mft^\ast$. As the increments of Poisson processes are independent, and the time between the jumps is exponentially distributed with parameters $\lambda_j$, the probability that the $j$\textsuperscript{th} neuron causes the next presynaptic spike is given by
\begin{align*}
    \P\big(X_j=\min(X_1,\ldots,X_d)\big),
\end{align*}
where $(X_i)_{i\in[d]}$ are independent random variables satisfying $X_i\sim \operatorname{Exp}(\lambda_i)$. If $U\sim \operatorname{Exp}(\lambda)$ and $V\sim \operatorname{Exp}(\lambda')$ are independent, then, $U\wedge V\sim \operatorname{Exp}(\lambda+\lambda')$ and $\P(U\leq V)=\lambda/(\lambda+\lambda')$. Thus, $\min_{i\neq j} X_i\sim \operatorname{Exp}(\sum_{i\neq j} \lambda_i)$, and 
\begin{equation*}
    \P\big(X_j=\min(X_1,\ldots,X_d)\big)=    \P\Big(X_j\leq \min_{i\neq j} X_i\Big)=\frac{\lambda_j}{\sum_{\ell=1}^d\lambda_\ell}.
\end{equation*}
\end{proof}
The above lemma and the previous discussion give a motivation for the form of the probabilities in settings, where the weights are small compared to the threshold $S$ or equal. We now give another heuristic, motivating our modelling choice. Let $Y$ be the postsynaptic membrane potential. Then, in expectation $Y$ grows linearly with slope $\llambda^\top \mathbf{w}.$ Furthermore, input $i$ causes a postsynaptic spike if, and only if, it spikes at a time at which $Y\geq S-w_i,$ where $S>0$ is the threshold level. As $Y$'s growth is approximately linear, the amount of time in which $Y\geq S-w_i,$ holds is approximately equal to $w_i/\llambda^\top\w$. Now the probability that input $i$ jumps in an interval of length $w_i/\llambda^\top\w$ is given by $1-\exp(-\lambda_iw_i/\llambda^\top\w)\approx \lambda_iw_i/\llambda^\top\w,$ which is exactly our modelling choice in the independent setting.

\subsection{On the connection to entropic mirror descent}\label{subsec:EntropicMirrordescent}

An alternative approach to connecting our proposed learning rule \eqref{eq:LearningRule_P} for the probabilities $\p$ and the entropic mirror descent in discrete-time is as follows. The entropic mirror descent step \eqref{eq:EntropicMirrorDescent} with Kullback--Leibler divergence and potential $f$ can be solved explicitly and yields
\begin{equation}
    p_i(k+1) = \frac{p_i(k)\exp(-\alpha (\nabla f(\p(k)))_i)}{\sum_{j=1}^d p_j(k)\exp(-\alpha (\nabla f(\p(k)))_j)},\quad i\in [d], k=0,1,\dots,\label{eq:ExponentiatedGradientDescent}
\end{equation}
see Section 5 of \citet{beckMirrorDescentNonlinear2003} for details. With $f(\p) = \wt{L}(\p) = \Vert \p\Vert^2/2$ and the first order approximation $\exp(x)\approx 1 + x$ for small $x$ we deduce
\begin{align*}
p_i(k+1) &=\frac{p_i(k)\exp(-\alpha (\nabla \wt{L}(\p(k)))_i)}{\sum_{j=1}^d p_j(k)\exp(-\alpha (\nabla \wt{L}(\p(k)))_j)}=\frac{p_i(k)\exp(\alpha p_i(k))}{\sum_{j=1}^d p_j(k)\exp(\alpha p_j(k))}\\
&\approx \frac{p_i(k)(1+\alpha p_i(k))}{\sum_{j=1}^dp_j(k)(1+\alpha p_j(k))}
\end{align*}
for any $i=1,\dots, d$ and $k=0,1,\dots$. As our proposed learning rule \eqref{eq:LearningRule_P} is a noisy version of the last line, it is naturally connected to noisy entropic gradient descent.

\subsection{Theoretical results for the alignment of multiple read-out neurons}\label{subsec:Theory_MultipleNeurons}

For a weight vector $\w\in\mathbb{R}^d$ with nonnegative entries let $i^\ast \coloneq \operatorname{argmax}_{i=1,\dots, d} \w^\top \mathbf{e}_i=\operatorname{argmax}_{i=1,\dots, d} w_i$ and define the cosine-projection
\begin{equation*}
    \mathcal{P}\w \coloneqq \Vert \w\Vert \mathbf{e}_{i^\ast}.
\end{equation*}
Assume that $\lambda_1>\dots>\lambda_d$ and consider first learning the first weight vector $\w_1$ using the learning rule in \eqref{eq:LearningRule_W} while the remaining weight vectors $\w_2,\dots, \w_d$ are fixed. Theorem \ref{thm:sgd} implies that after $K$ iterations, we have $\w_1\approx \mathbf{e}_1$. By projecting onto $\w_1^\ast\coloneqq\mathcal{P}\w_1(K)$ and setting $\w_2(0)\gets \w_2(0) -\w_1^\ast \w_2(0)^\top \w_1^\ast/\Vert\w_1^\ast\Vert$, we ensure that $[\p_2(0)]_2=\max_{i=1,\dots, d}[\p_2(0)]_i$ and Theorem \ref{thm:sgd} applies and yields $\p_2(K)\approx\mathbf{e}_2$. Proceeding successively, we arrive at Algorithm \ref{alg:MultipleWeights_2}.
\begin{theorem}\label{thm:SGD_MultipleOutput_Neurons}
Consider Algorithm \ref{alg:MultipleWeights_2}. Assume that $\lambda_1>\cdots>\lambda_d$ and the minimal gap $\Delta =\min_{i=1,\dots, d-1}([\p_i(0)]_i - \max_{j> i}[\p_i(0)]_j)>0$ is positive. Then we have
\begin{equation*}
    \P(\mathbf{P}^\ast\neq \identity)\to 0
\end{equation*}
as $K\to\infty$.
More precisely, let $\delta<\kappa/(1+\kappa)$ for $\kappa = \min_{i=1,\dots, d}\lambda_i/\max_{i=1,\dots, d}\lambda_i$ and $\epsilon>0$. Then
    \begin{equation*}
        \P(\mathbf{P}^\ast= \identity)\ge (1-\epsilon)^d \quad\text{for }K\ge  \frac{16d}{\alpha\Delta(4+d\Delta)}\log\Big(\frac{4}{\epsilon\delta} \Big).
    \end{equation*}
\end{theorem}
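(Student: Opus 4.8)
The plan is to run an induction over the $d$ disjoint learning periods of Algorithm~\ref{alg:MultipleWeights_2}, invoking Theorem~\ref{thm:sgd} once per period and chaining the resulting high-probability events. The key structural fact is that deflation decouples the periods: once the projected direction satisfies $\mathcal{P}\w_i(K)=\|\w_i(K)\|\mathbf{e}_i$, subtracting its component from $\w_{i+1}(0)$ sets the $i$-th coordinate of $\w_{i+1}(0)$ exactly to zero. Since the update \eqref{eq:LearningRule_W} is multiplicative, a coordinate initialised at zero stays zero for all iterates (a vanishing weight yields a vanishing spike-triggering probability, so the corresponding entry of $\mathbf{B}(k)$ is zero, and the prefactor $w_j(k)$ annihilates the noise term as well). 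Hence, after the first $i-1$ projections have been performed correctly, the dynamics of $\w_i$ are confined to the subsimplex spanned by $\{\mathbf{e}_i,\dots,\mathbf{e}_d\}$, on which the learning rule is exactly \eqref{eq:LearningRule_W} in dimension $d-i+1$, driven by independent noise.

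First I would isolate two deterministic facts. (i) \emph{Deflation preserves the gap.} Zeroing coordinates $1,\dots,i-1$ only rescales the surviving probabilities by the common factor $(\sum_{\ell\ge i}[\p_i(0)]_\ell)^{-1}\ge 1$, so the gap $[\p_i(0)]_i-\max_{j>i}[\p_i(0)]_j\ge\Delta$ is preserved and coordinate $i$ is dominant on the subsimplex. Thus Theorem~\ref{thm:sgd} is applicable in period $i$ with dimension $d-i+1\le d$ and gap $\ge\Delta$; as $d'\mapsto d'/(4+d'\Delta)$ is increasing, the stated threshold with the full dimension $d$ is a uniform upper bound for the iteration count, and each period produces $\|\p_i(K)-\mathbf{e}_i\|_1<\delta$ with conditional probability at least $1-\epsilon$. (ii) \emph{Projection correctness.} From $\|\p_i(K)-\mathbf{e}_i\|_1<\delta$ one gets $p_{i,i}(K)>1-\delta/2$ and $p_{i,j}(K)<\delta/2$ for $j\ne i$. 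Writing $w_{i,j}(K)=p_{i,j}(K)\,(\llambda^\top\w_i(K))/\lambda_j$, the index $\operatorname{argmax}_j w_{i,j}(K)$ is determined by comparing $p_{i,j}(K)/\lambda_j$, and coordinate $i$ wins exactly when $p_{i,i}(K)/p_{i,j}(K)>\lambda_i/\lambda_j$. The left-hand side exceeds $(1-\delta/2)/(\delta/2)$ while the right-hand side is at most $1/\kappa$ (the worst pair being $\lambda_1/\lambda_d$), so $\delta<\kappa/(1+\kappa)$ forces $\mathcal{P}\w_i(K)=\|\w_i(K)\|\mathbf{e}_i$, which sets up the correct deflation for period $i+1$.

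With these facts in place I would set up the induction. Let $\mathcal{F}^{(i)}$ be the $\sigma$-algebra generated by the randomness of periods $1,\dots,i-1$, and let $S_i$ denote the event that period $i$ succeeds, $\|\p_i(K)-\mathbf{e}_i\|_1<\delta$. On $S_1\cap\dots\cap S_{i-1}$ fact (ii) ensures all earlier projections were exact, so fact (i) guarantees that the deflated initialisation of period $i$ has the correct zero pattern, dominant coordinate $i$, and gap $\ge\Delta$; since the noise driving period $i$ is independent of $\mathcal{F}^{(i)}$, Theorem~\ref{thm:sgd} applied conditionally yields $\P(S_i\mid\mathcal{F}^{(i)})\ge 1-\epsilon$ on this event (provided $\alpha$ obeys the step-size restriction of Theorem~\ref{thm:sgd} for the gap $\Delta$). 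On $\bigcap_{i=1}^d S_i$ every output neuron aligns with its input neuron, whence $\mathbf{P}^\ast=\identity$, and chaining the conditional bounds gives $\P(\mathbf{P}^\ast=\identity)\ge\P(\bigcap_{i=1}^d S_i)\ge(1-\epsilon)^d$. The first claim follows by taking $K\to\infty$: for each fixed $\epsilon$ the bound holds once $K$ exceeds the stated threshold, so $\liminf_{K\to\infty}\P(\mathbf{P}^\ast=\identity)\ge(1-\epsilon)^d$, and letting $\epsilon\downarrow 0$ yields $\P(\mathbf{P}^\ast\ne\identity)\to 0$.

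The main obstacle I anticipate is the clean treatment of the conditioning together with the projection step: one must argue that, conditionally on all previous periods having succeeded, the initialisation of the current period lies \emph{deterministically} in the regime required by Theorem~\ref{thm:sgd}, so that the theorem can be invoked with full strength on the residual independent noise. The projection-correctness estimate (ii) is exactly what makes this rigorous; it is also the step where the missing eigen-gap of a genuine PCA is replaced by the intensity-ratio condition $\delta<\kappa/(1+\kappa)$, and ensuring that a single $\delta$ works simultaneously for all pairs $i<j$ (worst case $\lambda_1/\lambda_d=1/\kappa$) is the only genuinely delicate point.
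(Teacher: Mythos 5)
Your proposal is correct and follows essentially the same route as the paper's proof: Theorem \ref{thm:sgd} applied once per learning period, a projection-correctness step via the intensity ratio $\kappa$ (your fact (ii) is exactly the content of the paper's Lemma \ref{lem:Helper_multipleOutputNeurons}, just phrased in probabilities rather than weights), exact deflation on the success event, and chaining the periods by independence to get $(1-\epsilon)^d$. Your write-up is in fact more explicit than the paper's on the points it leaves implicit (zeros persisting under the multiplicative update, gap preservation under rescaling, and the conditional application of Theorem \ref{thm:sgd}).
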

\begin{algorithm}\label{alg:MultipleWeights_2}
\KwIn{$K\in\mathbb{N}$: number of iterations for each learning period, $\mathbf{W}(0)\in\mathbb{R}^{d\times d}$: weight initialisation.
}
\For{$j=1,\dots, d$}{
    \If{$j\ge 2$}{
    $\w_j(0) \gets \w_j(0) - \sum_{i=1}^{j-1} \frac{\w_j(0)^\top \w_i^\ast}{\Vert \w_i^\ast\Vert^2} \w_i^\ast$\;
    }
    \For{$k=0,1,\dots, d$}{
        Receive $\mathbf{B}_j(k) \sim M(1,\p_j(k))$ with $\p_j(k)\gets \llambda\odot \w_j(k)/\llambda^\top \w_j(k)$ and $\mathbf{Z}_j(k)\sim \operatorname{Unif}([-1,1]^d)$ from spike trains\;
        Update \begin{equation*}\w_j(k+1) \gets \alpha\w_j(k)\odot (\mathbf{B}_j(k) + \mathbf{Z}_j(k));\end{equation*}
        Set $\w_j^\ast \coloneqq \mathcal{P}\w_j(K)$ to obtain $\p_j^\ast = \llambda^\top \w_j^\ast$.
    }
}
\KwOut{The weight evolution $\mathbf{W}(k) = [\w_1(k)\,\cdots \, \w_d(k)]$, $k=0,\dots, K$, probability evolution $\mathbf{P}(k) = [\p_1(k)\,\cdots \,\p_d(k)]$, $k=1,\dots, K$ and projections $\mathbf{P}^\ast= [\p_1^\ast\, \cdots\, \p_d^\ast]$.}
\caption{Sequential alignment of multiple output neurons}
\end{algorithm}
Before proving Theorem \ref{thm:SGD_MultipleOutput_Neurons}, we start with an auxiliary result on the order of the weights when the probability vector is close to a standard unit vector.
\begin{lemma}\label{lem:Helper_multipleOutputNeurons}
    Assume that $0<\lambda_{\min}=\min_{i=1,\dots, d}\lambda_i\le \lambda_{\max}=\max_{i=1,\dots, d}\lambda_i<\infty$ and let $\kappa = \lambda_{\min}/\lambda_{\max}$. Consider a weight vector $\w\in\mathbb{R}^d$ with corresponding probability vector $\p=\w\odot \llambda /\w^\top\llambda$ and let $0<\delta<1$. Then the condition $1-p_1< \delta$ implies that $\max_{i=2,\dots,d}w_i\le w_1\kappa^{-1}\delta/(1-\delta)$.
\end{lemma}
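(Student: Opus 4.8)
The plan is to translate the hypothesis $1-p_1<\delta$ into a pointwise bound on each off-diagonal coordinate $p_i$ with $i\geq 2$, and then convert that probabilistic bound back into a statement about the weights using the defining relation $p_i=\lambda_i w_i/(\llambda^\top\w)$.

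First I would exploit that $\p$ is a probability vector: for every $i\in\{2,\dots,d\}$ one has $p_i\leq\sum_{j=2}^d p_j=1-p_1<\delta$. Unfolding the definition of the probabilities, this reads $\lambda_i w_i<\delta\,(\llambda^\top\w)$ for each such $i$. In parallel, the hypothesis on the first coordinate gives $\lambda_1 w_1=p_1(\llambda^\top\w)>(1-\delta)(\llambda^\top\w)$, equivalently $\llambda^\top\w<\lambda_1 w_1/(1-\delta)$.

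Combining the two bounds eliminates the common normalisation $\llambda^\top\w$ and yields, for every $i\geq 2$,
\[
\lambda_i w_i<\delta\,(\llambda^\top\w)<\frac{\delta}{1-\delta}\,\lambda_1 w_1.
\]
Dividing by $\lambda_i>0$ and bounding the intensity ratio by $\lambda_1/\lambda_i\leq\lambda_{\max}/\lambda_{\min}=\kappa^{-1}$ gives $w_i< w_1\kappa^{-1}\delta/(1-\delta)$; taking the maximum over $i\in\{2,\dots,d\}$ then concludes the argument.

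Since the whole derivation is a short chain of elementary inequalities, there is no substantive obstacle. The only points that require a little care are the very first step, namely recognising that the aggregate bound $1-p_1<\delta$ already controls each individual coordinate $p_i$ through $p_i\le 1-p_1$, and keeping track of the direction of the intensity ratio so that the constant $\kappa=\lambda_{\min}/\lambda_{\max}$ enters as $\lambda_1/\lambda_i\le\kappa^{-1}$ rather than the other way around.
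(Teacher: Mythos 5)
Your proposal is correct and follows essentially the same route as the paper: both translate the bound $1-p_1<\delta$ into $\lambda_i w_i \le \frac{\delta}{1-\delta}\lambda_1 w_1$ by eliminating the normalisation $\llambda^\top\w$, and then bound the intensity ratio by $\kappa^{-1}$. The only cosmetic difference is that you argue coordinate-wise via $p_i\le 1-p_1$, while the paper works with the aggregate sum $\sum_{i=2}^d \lambda_i w_i$ before passing to the maximum; the two are interchangeable.
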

\begin{proof}
    Since $1-p_1\le  \delta$ we know that $\sum_{i=2}^d \lambda_iw_i\le \frac{\delta}{1-\delta}\lambda_1w_1$. By bounding the $\lambda_i$ from above and below we find
    \begin{equation*}
        \lambda_{\min}\sum_{i=2}^dw_i \le \frac{\delta}{1-\delta} \lambda_{\max}w_1,
    \end{equation*}
    such that
    \begin{equation*}
        \max_{i=2,\dots, d}w_i\le \frac{\delta}{1-\delta}\kappa^{-1}w_1.
    \end{equation*}
\end{proof}
\begin{proof}[Proof of Theorem \ref{thm:SGD_MultipleOutput_Neurons}]
    By Theorem \ref{thm:sgd} we know that $\P(\Vert \p_1(K)-\mathbf{e}_1\Vert_1\le  \delta)\ge 1-\epsilon$ for our choice of $K$. Note that the projection picks out the direction $\operatorname{argmax}_{i=1,\dots,d}\w_1(K)^\top \mathbf{e}_i=\operatorname{argmax}_{i=1,\dots,d}[\w_1(K)]_i$. Consequently, Lemma \ref{lem:Helper_multipleOutputNeurons} and $\delta<\kappa/(1-\kappa)$ imply that $\w_1(K)$ projects in the direction of $\mathbf{e}_1$ with probability at least $1-\epsilon$. We find that $\P(\p_1^\ast \neq \mathbf{e}_1)\le 1-\epsilon$. By the adjustment of the weight vector $\w_2(0)$ we remove its first component, such that $[\p_2(0)]_1= 0$ and $[\p_2(0)]_2=\max_{j=1,\dots, d}[\p_2(0)]_i$. By the assumption on $ [\p_2(0)]_2-\max_{j=3,\dots, d}[\p_2(0)]_j\ge\Delta>0$ we find that $\P(\Vert \p_2(K)-\mathbf{e}_2\Vert_1\le \delta)\ge1-\epsilon$. By iteration and independence of the training windows we conclude the proof.
\end{proof}

\subsection{Additional material for the extension to time-inhomogeneous intensities}
\label{sec.addMat_time_inhom}



\textbf{Derivation of \eqref{eq.ODE_inhom}:}  \eqref{eq:Taylor_Decomposition_Inhomogeneous_Intensity} yields the deterministic update scheme
\begin{equation}
    {\p}(k+1) 
   =
   \wt{\p}(k) \odot \Big(\ones +\alpha \big(\p(k) -  \wt{\p}(k)^\top\p(k)\ones\big)\Big)+\mathcal{O}(\alpha^2).\label{eq:det_Updates_Timeinhomogeneous}
\end{equation}
Let $T>0$ be the time horizon and assume that the time-inhomogeneous intensities $\llambda_\alpha$ change on the same scale as the weights and are given by $\llambda_\alpha(t)=\llambda(t\alpha)$, $t\in [0,T/\alpha]$, where $\llambda\in C_b^2([0,T])$ is a universal, twice differentiable function with bounded derivatives and $\llambda(t)\ge \lambda_{\min}>0$, $t\in[0,T]$, componentwise. For vectors $\w,\llambda,\llambda'$ that are of the same length, and $\Delta:=\llambda'-\llambda,$ we have 
\begin{align*}
    \frac{\llambda' \odot \w}{{\llambda'}^\top \w}
    &= \frac{\Delta \odot \w+\llambda \odot \w}{\Delta^\top \w + \llambda^\top \w} 
    = \frac{\Delta \odot \w}{\Delta^\top \w + \llambda^\top \w}
    +
    \frac{\llambda \odot \w}{ \llambda^\top \w}\Big(1-
    \frac{\Delta^\top \w}{\Delta^\top \w + \llambda^\top \w}\Big).
\end{align*}
Applying this with $\llambda'=\llambda_{\alpha}(k+1), \llambda=\llambda_\alpha(k), \w=\w(k)$, yields
    \begin{align*}
        \mathbf{\wt{p}}(k) &= \frac{\llambda_\alpha(k+1)\odot \mathbf{w}(k)}{\llambda_\alpha(k+1)^\top\mathbf{w}(k)} \\
        &=  \mathbf{p}(k) + \frac{\Delta\llambda_\alpha(k)\odot\mathbf{w}(k)}{\Delta\llambda_\alpha(k)^\top \mathbf{w}(k)+\llambda_\alpha(k)^\top \mathbf{w}(k)}-\mathbf{p}(k)\frac{\Delta\llambda_\alpha(k)^\top\mathbf{w}(k)}{\Delta\llambda_\alpha(k)^\top \mathbf{w}(k) +\llambda_\alpha(k)^\top\mathbf{w}(k)}.
    \end{align*}
    Our assumptions on $\llambda_\alpha$ imply that $\mathbf{\wt{p}}(k)-\mathbf{p}(k) \lesssim \Vert\Delta\llambda_\alpha(k)\Vert\lesssim \alpha$ for all $k=0,1,\dots, \lfloor T/\alpha \rfloor$. This gives $\mathbf{\wt{p}}(k)\odot\mathbf{p}(k) - \mathbf{\wt{p}}(k)\mathbf{\wt{p}}(k)^\top \mathbf{p}(k)=\mathbf{{p}}(k)\odot\mathbf{p}(k) - \mathbf{{p}}(k)\mathbf{{p}}(k)^\top \mathbf{p}(k)+\mathcal{O}(\alpha).$    
    In combination with \eqref{eq:det_Updates_Timeinhomogeneous} we find
    \begin{align}
        \frac{\mathbf{p}(k+1) - \mathbf{p}(k)}{\alpha} &= \frac{\Delta\llambda_\alpha(k)}{\alpha}\odot\frac{\mathbf{w}(k)}{\Delta\llambda_\alpha(k)^\top \mathbf{w}(k)+\llambda_\alpha(k)^\top \mathbf{w}(k)}\nonumber\\
        &\quad- \frac{\mathbf{p}(k)}{\Delta\llambda_\alpha(k)^\top \mathbf{w}(k) +\llambda_\alpha(k)^\top\mathbf{w}(k)}\frac{\Delta\llambda_\alpha(k)^\top}{\alpha}\mathbf{w}(k)\nonumber\\
        &\quad + \mathbf{\wt{p}}(k)\odot\mathbf{p}(k) - \mathbf{\wt{p}}(k)\mathbf{\wt{p}}(k)^\top \mathbf{p}(k)\nonumber\\
        \begin{split}
        &=\frac{\Delta\llambda_\alpha(k)}{\alpha}\odot\frac{\mathbf{w}(k)}{\mathcal{O}(\alpha)+\llambda_\alpha(k)^\top \mathbf{w}(k)}\\
        &\quad- \frac{\mathbf{p}(k)}{\mathcal{O}(\alpha) +\llambda_\alpha(k)^\top\mathbf{w}(k)}\frac{\Delta\llambda_\alpha(k)^\top}{\alpha}\mathbf{w}(k)\\
        &\quad + \p(k)\odot\mathbf{p}(k) - \p(k)\p(k)^\top \p(k) +\mathcal{O}(\alpha).
        \end{split}\label{eq:Time_Inhomogeneous_EulerScheme}
    \end{align}
    Sending $\alpha\to 0$ and using that $\Delta\llambda_\alpha(t/\alpha)/\alpha \to \frac{\D}{\d t}\llambda(t)$ as $\alpha\to 0$ we recognize \eqref{eq:Time_Inhomogeneous_EulerScheme} as an Euler-type scheme for the ODE
    \begin{align*}
        \frac{\d}{\d t}\mathbf{p}(t) &= \frac{\frac{\d}{\d t}[\llambda(t)]\odot \mathbf{w}(t)}{\llambda^\top(t)\mathbf{w}(t)} - \mathbf{p}(t)\frac{ \frac{\d}{\d t}[\llambda(t)]^\top \mathbf{w}(t)}{\llambda^\top(t) \mathbf{w}(t)}+ \mathbf{p}(t)\odot\big(\mathbf{p}(t)-\Vert\mathbf{p}(t)\Vert^2\ones\big)\\
        &=\mathbf{p}(t)\odot\left(\frac{\d}{\d t}\log\big(\llambda(t)\big)-\mathbf{p}^\top(t) \frac{\d}{\d t}\log\big(\llambda(t)\big)\ones\right)+\mathbf{p}(t)\odot\big(\mathbf{p}(t)-\Vert\mathbf{p}(t)\Vert^2\ones\big)\\
        &=\p(t)\odot\left(\frac{\d}{\d t}\log\big(\llambda(t)\big) + \p(t) - \p(t)^\top\left(\frac{\d}{\d t}\log\big(\llambda(t)\big)+\p(t)\right)\ones\right),\quad t\in[0,T],
    \end{align*}
    where the logarithm is taken componentwise. 

\begin{figure}[t]
    \centering
    \includegraphics[width=0.32\linewidth]{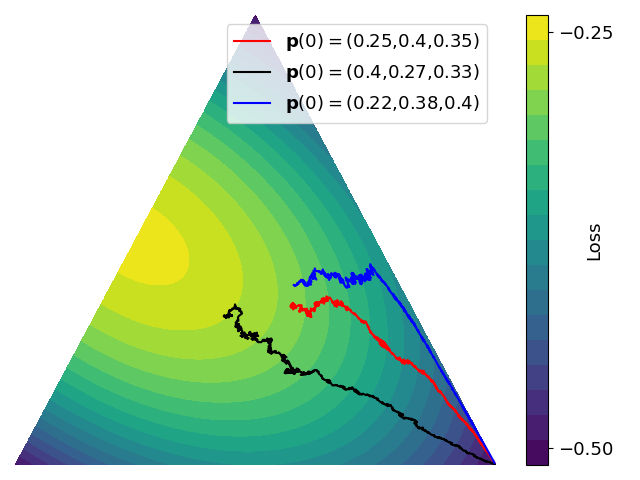}
    \includegraphics[width=0.32\linewidth]{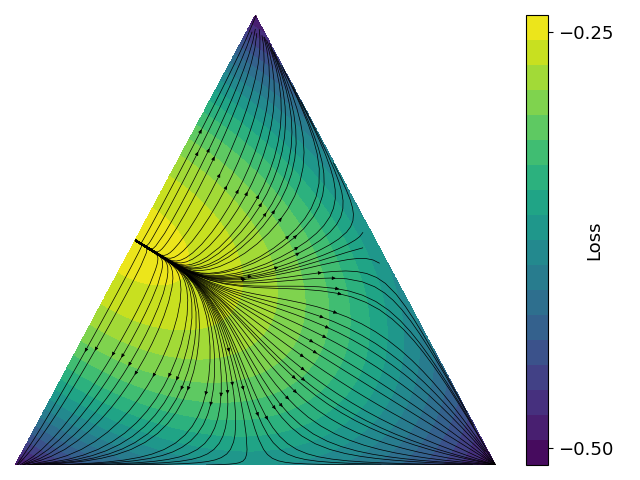}
    \includegraphics[width=0.32\linewidth]{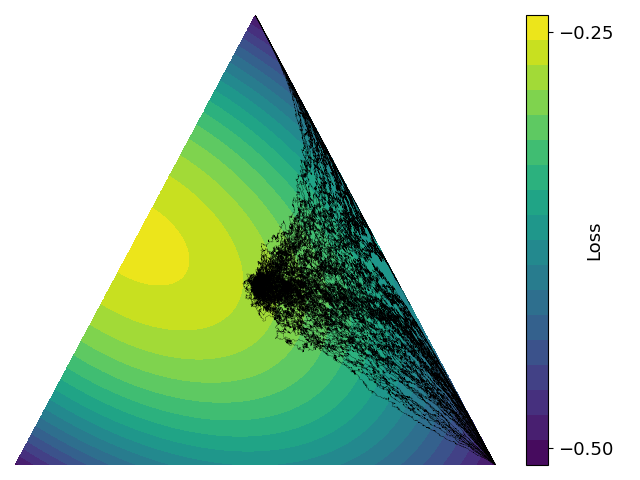}\\
    \includegraphics[width=0.32\linewidth]{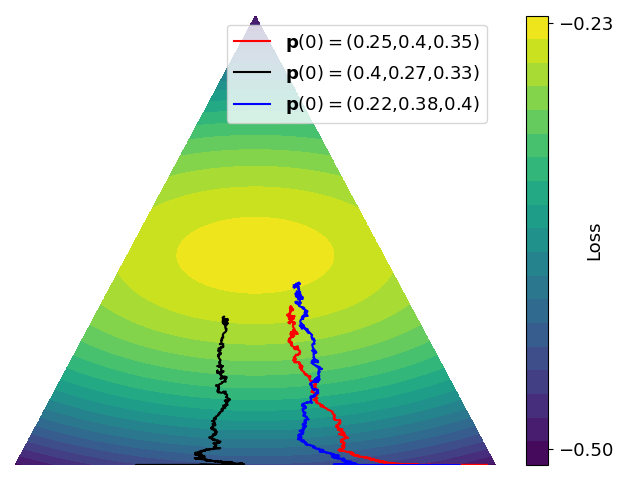}
    \includegraphics[width=0.32\linewidth]{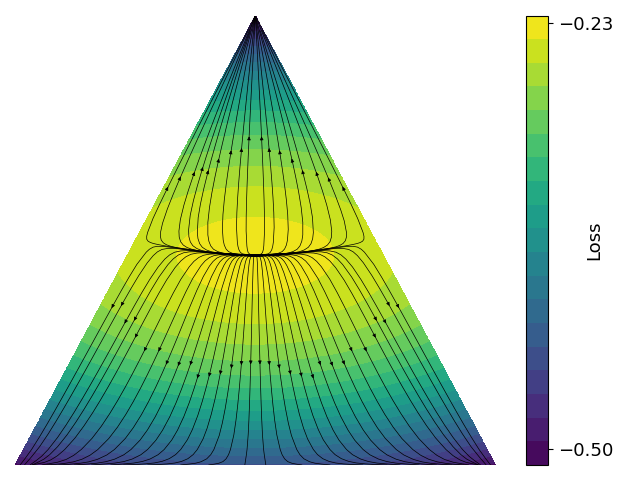}
    \includegraphics[width=0.32\linewidth]{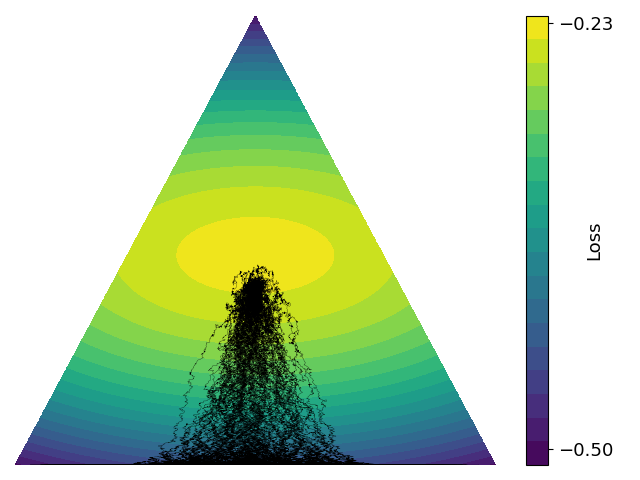}\\
    \includegraphics[width=0.32\linewidth]{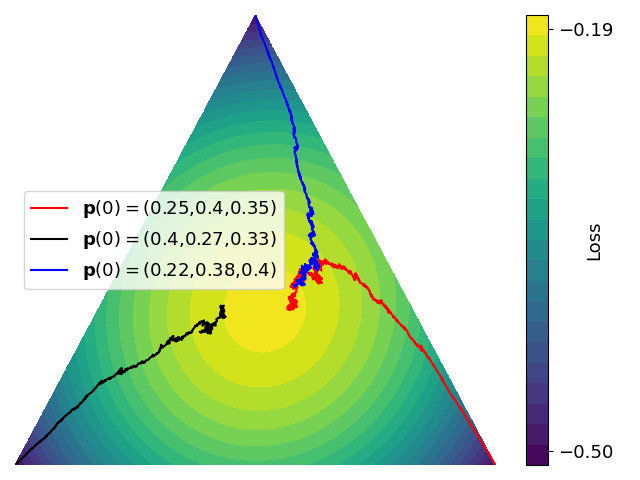}
    \includegraphics[width=0.32\linewidth]{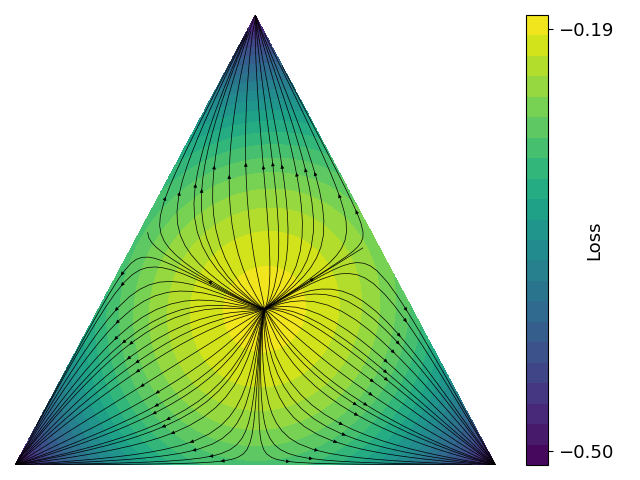}
    \includegraphics[width=0.32\linewidth]{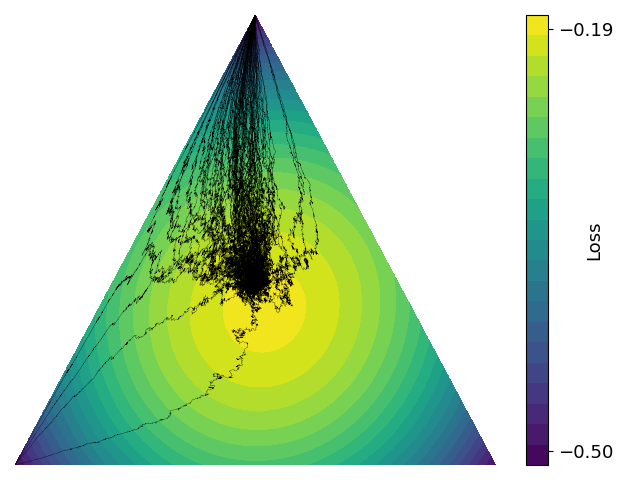}
    \caption[]{Correlated inputs with 
    
    \begin{minipage}{\linewidth}
   \begin{equation*}
   \boldsymbol{\Gamma} = \begin{pmatrix}
        1 & 1/2 & 0\\
        1/2 & 1 & 1/2\\
        0 & 1/2 & 1
    \end{pmatrix}\text{(top), }
    \boldsymbol{\Gamma} = \begin{pmatrix}
        1 & 3/4 & 0\\
        3/4 & 1 & 0\\
        0 & 0 & 1
    \end{pmatrix}\text{(middle), }
    \begin{pmatrix}
    1 & 1/10 & 1/10 \\
    1/10 & 1 & 0 \\
    1/10 & 0 & 1
    \end{pmatrix}\text{(bottom)}.
    \end{equation*}
    \end{minipage}
    Contour plot of the Shahshahani loss function $L(\p)=-\frac{1}{2}\p^\top \boldsymbol{\Gamma} \p$ on the probability simplex $\probsimplex$ for $d=3$ with different overlays. Left: Three sample trajectories of \eqref{eq:LearningRule_P} with different initial configurations $\p(0)$. Middle: Stream plot of the gradient field given by \eqref{eq:Lossgradient}. Right: 100 sample trajectories of \eqref{eq:LearningRule_P} with $\p(0)= (0.3,0.3,0.4)^\top$. All trajectories are simulated with 2000 iteration steps, learning rate $\alpha=0.01$ and $\mathbf{Z}(k)\sim \operatorname{Unif}([-1,1]^d)$.}
    \label{fig:Correlations}
\end{figure}

\subsection{Analysis of the correlated model}

In the following we always assume that the off-diagonal entries of $\bGamma$ are strictly smaller than $1,$ that is, $\bGamma_{i,j}<1$ for all $i\neq j=1,\ldots,d.$ This assumption is reasonable since a perfect correlation between input $i$ and input $j$ corresponds to one single input neuron with Poisson process intensity $\lambda_i+\lambda_j.$ Under this assumption it is easy to see that the quadratic form associated to $\bGamma$ is maximised on the probability simplex by the basis vectors $\mathbf{e}_1,\ldots,\mathbf{e}_d$. Thus, the natural question to investigate is the same as in the original model: To which basis vector does the model converge? In the following section we investigate this question and show results for the weakly dependent case.

 The analysis of the correlated version of the model follows the same steps as in the independent case. For this we assume that all random variables are defined on a filtered probability space $(\Omega,\mathcal{F},\P)$ and denote by $\mathcal{F}_n,n=1,2,\ldots$ the natural filtration of $(\mathbf{B}(n),\mathbf{Z}(n),\mathbf{C}(n))_{n\in\mathbb{N}},$ and set $\mathcal{F}_{-1}=\{\emptyset,\Omega\}$. The proof follows exactly the same steps as the proof of Theorem \ref{thm:sgd}. We start with the following result, which bounds the error of the Taylor approximation of the random dynamics.
\begin{lemma}\label{lemma:sgd corr}
    For $i\in[d]$ and $k=0,1,\dots$ define
        \begin{equation}
	\xi_i(k)\coloneq p_i(k)\Big(\E\Big[Y_i(k)-\sum_{j=1}^dp_j(k) Y_j(k)\Big\vert \mathcal{F}_{k-1}\Big]-\Big(Y_i(k)-\sum_{j=1}^dp_j(k) Y_j(k)\Big)\Big),
    \label{eq:xick_def}
    \end{equation}
    and assume $\alpha<1/Q$. Then for any $i\in[d]$ and $k=0,1,\dots$, there exists a random variable $\theta_i(k)$, satisfying
    \begin{align*}
        \vert \theta_i(k)\vert \leq \alpha^2\frac{2Q^2}{(1-Q\alpha)^3}p_i(k)\big(1-p_i(k)\big), \quad \quad \text{almost surely,}
    \end{align*}
    such that
    \begin{align*}
  		&p_i(k+1)
=p_i(k)+\alpha p_i(k)\big(  \boldsymbol{\Gamma}_{i,\cdot} \p(k)-(\p(k))^\top \boldsymbol{\Gamma}\p(k)\big)-\alpha\xi_i(k)-\theta_i(k).
    \end{align*}
\end{lemma}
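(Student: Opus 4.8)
The plan is to reproduce the proof of Lemma~\ref{lemma:sgd} almost verbatim, exploiting that the correlated dynamics differ from the independent one \emph{only} through the conditional mean of the increment $\mathbf{Y}(k)$, whereas the almost-sure bound $|Y_i(k)|\le Q$ — the sole ingredient behind the Taylor remainder estimate — is preserved. Concretely, in the correlated model the update for the probabilities still has the fractional form
\[
p_i(k+1)=p_i(k)\frac{1+\alpha Y_i(k)}{1+\alpha\sum_{j=1}^dp_j(k)Y_j(k)},
\]
now with $\mathbf{Y}(k)=\mathbf{S}(k)+\mathbf{Z}(k)$ and $\mathbf{S}(k)=\mathbf{C}(k)\mathbf{B}(k)$. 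Writing $J$ for the (random) index with $B_J(k)=1$, the one-hot structure of $\mathbf{B}(k)$ gives $S_i(k)=C_{iJ}(k)\in\{0,1\}$, whence $Y_i(k)\in[-(Q-1),Q]$ and so $|Y_i(k)|\le Q$ almost surely, exactly as in the independent case.

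Because this $Q$-bound is the only property of $\mathbf{Y}(k)$ used in the Taylor expansion of $x\mapsto(1+ax)/(1+bx)$ around $x=0$ (with $a=Y_i(k)$, $b=\sum_jp_j(k)Y_j(k)$), the entire remainder computation of Lemma~\ref{lemma:sgd} transfers unchanged. I would therefore invoke it directly to obtain the same error term $\theta_i(k)$ satisfying $|\theta_i(k)|\le\alpha^2\frac{2Q^2}{(1-Q\alpha)^3}p_i(k)(1-p_i(k))$ together with the first-order identity
\[
p_i(k+1)=p_i(k)+\alpha p_i(k)\Big(Y_i(k)-\sum_{j=1}^dp_j(k)Y_j(k)\Big)-\theta_i(k).
\]

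The only genuinely new computation is the conditional drift. Since $\mathbf{C}(k)$ has independent $\operatorname{Ber}(\boldsymbol{\Gamma}_{ij})$ entries and is independent of $(\mathbf{B}(k),\p(k))$, I would condition on the triggering index $J\sim M(1,\p(k))$ to get $\E[S_i(k)\mid\mathcal{F}_{k-1}]=\sum_{j=1}^d\boldsymbol{\Gamma}_{ij}p_j(k)=\boldsymbol{\Gamma}_{i,\cdot}\p(k)$; combined with $\E[Z_i(k)]=0$ this yields $\E[Y_i(k)\mid\mathcal{F}_{k-1}]=\boldsymbol{\Gamma}_{i,\cdot}\p(k)$ and hence $\E[\sum_jp_j(k)Y_j(k)\mid\mathcal{F}_{k-1}]=\p(k)^\top\boldsymbol{\Gamma}\p(k)$. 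Substituting these into the definition \eqref{eq:xick_def} of $\xi_i(k)$ and solving for $p_i(k)\big(Y_i(k)-\sum_jp_j(k)Y_j(k)\big)$ turns the identity above into the claimed decomposition with drift $\alpha p_i(k)\big(\boldsymbol{\Gamma}_{i,\cdot}\p(k)-\p(k)^\top\boldsymbol{\Gamma}\p(k)\big)$.

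I do not anticipate a serious obstacle, as all the analytic subtlety lives in the remainder estimate, which is insensitive to the correlation structure. The two points deserving an explicit line are the verification that $S_i(k)\in\{0,1\}$ (so that the $Q$-bound, and therefore the bound on $\theta_i(k)$, survive unchanged) and the conditioning argument producing $\boldsymbol{\Gamma}\p(k)$ as the conditional mean; both follow immediately from the one-hot form of $\mathbf{B}(k)$ and the independence of $\mathbf{C}(k)$ from $(\mathbf{B}(k),\p(k))$.
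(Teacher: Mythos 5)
Your proposal is correct and takes essentially the same route as the paper's proof: both observe that the update retains the fractional form $p_i(k+1)=p_i(k)(1+\alpha Y_i(k))/(1+\alpha\sum_j p_j(k)Y_j(k))$ with $\vert Y_i(k)\vert\le Q$ still holding, so the Taylor remainder bound of Lemma \ref{lemma:sgd} transfers verbatim, and then both conclude by computing the new conditional drift $\E[Y_i(k)\,\vert\,\mathcal{F}_{k-1}]=\boldsymbol{\Gamma}_{i,\cdot}\p(k)$ from the independence of $\mathbf{C}(k)$ from $\mathcal{F}_{k-1}$. If anything, your writeup is slightly more explicit than the paper's, since you verify $S_i(k)=C_{iJ}(k)\in\{0,1\}$ and spell out the conditioning on the triggering index, steps the paper leaves implicit.
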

\begin{proof}
    As the dynamic of $\p$ still follows \eqref{eq:taylor p} as in the uncorrelated model, we can argue as in the proof of Lemma \ref{lemma:sgd} to obtain that for some $\gamma\in(0,1)$
    \begin{align*}
		p_i(k+1)
        &=p_i(k)+\alpha p_i(k)\Big( Y_i(k)-\sum_{j=1}^dp_j(k) Y_j(k)\Big)
        \\&\quad-\alpha^2p_i(k)\frac{\sum_{j=1}^dp_j(k) Y_j(k)\Big( Y_i(k)-\sum_{j=1}^dp_j(k) Y_j(k)\Big)}{\big(1+\gamma\sum_{j=1}^dp_j(k) Y_j(k) \big)^3}.
	\end{align*} 
    Then, since $\vert Y_i(k)\vert\leq Q$ we obtain the same bound on the error term as in the proof of Lemma \ref{lemma:sgd}
    \begin{align*}
&\Bigg\vert\alpha^2p_i(k)\frac{\sum_{j=1}^dp_j(k) Y_j(k)\Big( Y_i(k)-\sum_{j=1}^dp_j(k) Y_j(k)\Big)}{\Big(1+\gamma\sum_{j=1}^dp_j(k) Y_j(k) \Big)^3}\Bigg\vert
		\\
		&\leq \alpha^2\frac{2Q^2}{(1-Q\alpha)^3}p_i(k)(1-p_i(k)).
	\end{align*}
    Since the $p_i(k)$ are $\mathcal{F}_{k-1}$-measurable and $\mathbf{C}(k)$ is independent of $\mathcal{F}_{k-1},$ we finally obtain $\E[Y_i(k)\vert \mathcal{F}_{k-1}]= \boldsymbol{\Gamma}_{i,\cdot}\p(k).$
\end{proof}
In the following we always assume for $\Delta_p,\Delta_{\bGamma}>0$
\begin{align}
\begin{split}\label{ass:cor}
    p_1(0)&\geq p_i(0)+\Delta_p,\quad \forall i=2,\ldots,d,
    \\
    (\bGamma\p(0))_1&\geq (\bGamma\p(0))_i+\Delta_{\bGamma},\quad \forall i=2,\ldots,d, 
    \\
    c_\star&\coloneq\frac{\Delta_p\Delta_{\bGamma}}{4}-\nu\Big(1+\frac{\Delta_p\Delta_{\bGamma}}{4}\Big)>0,\quad \textrm{where},\quad\nu\coloneq \max_{i\neq j\in[d]}\bGamma_{i,j}.
    \end{split}
\end{align}
These assumptions are for example fulfilled in the following case
\begin{align*}
    \p(0)=\begin{pmatrix}
        0.8\\0.1\\0.1
    \end{pmatrix},\quad \bGamma=
\begin{pmatrix}
1 & 0.1 & 0.1 \\
0.1 & 1 & 0 \\
0.1 & 0 & 1
\end{pmatrix}, 
\end{align*}
since we can choose $\Delta_p=0.7,\Delta_{\bGamma}=0.64,\nu=0.1,$ and hence $c_\star=8*10^{-4}.$ In the independent case, which corresponds to $\nu=0,$ the assumptions given in \eqref{ass:cor} reduce to the original assumption given in \eqref{ass:delta}.
For $ k=1,2,\ldots,$ we define a sequence of benign events
\begin{equation*}
\Omega(k)\coloneq \Big\{p_1(u)\geq \max_{i=2,\dots, d} \, p_i(u)+\frac{\Delta_p}2,\big(\bGamma\p(u)\big)_1\geq \max_{i=2,\dots, d} \, \big(\bGamma\p(u)\big)_i+\frac{\Delta_{\bGamma}}2,\ \forall u\in[k]\Big\}.
\end{equation*}
Due to assumption \eqref{ass:cor}, $\Omega(0)=\Omega$.
Additionally, since the assumptions in \eqref{ass:cor} imply the gradient to be bounded away from $0$, we can prove the following recursive upper bound for $1-p_1(k)$.
\begin{proposition}\label{prop:basic cor}
	If 
    \begin{equation*}
        0<\alpha\leq \frac{(1-Q\alpha)^3}{8Q^2}\Delta_{\bGamma},
    \end{equation*}
	then, on the event $\Omega(k)$,
	\begin{align*}
		1-p_1(k+1)\leq \Big(1-\alpha \frac{\Delta_{\bGamma}} {4d}\Big(1+\frac{\Delta_p}{2}(d-1)\Big)\Big)\big(1-p_1(k)\big)+\alpha\xi_i(k).
	\end{align*}
\end{proposition}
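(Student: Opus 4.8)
The plan is to mirror the proof of Proposition~\ref{prop:basic} line by line, substituting the correlated drift supplied by Lemma~\ref{lemma:sgd corr} for the Euclidean one and tracking the two gaps $\Delta_p$ and $\Delta_{\bGamma}$ at the two distinct places where they enter. The learning-rate constraint $0<\alpha\leq\frac{(1-Q\alpha)^3}{8Q^2}\Delta_{\bGamma}$ forces $\alpha<1/Q$, so Lemma~\ref{lemma:sgd corr} is applicable with $i=1$ and yields
\[
1-p_1(k+1)=1-p_1(k)-\alpha\,p_1(k)\big(\bGamma_{1,\cdot}\p(k)-\p(k)^\top\bGamma\p(k)\big)+\alpha\xi_1(k)+\theta_1(k).
\]
The only genuinely new ingredient is therefore a lower bound on the drift $\bGamma_{1,\cdot}\p(k)-\p(k)^\top\bGamma\p(k)$ valid on $\Omega(k)$.

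The key step I would carry out first is to rewrite this drift, exactly as in \eqref{eq:grad bound}, as a convex combination of coordinate gaps of $\bGamma\p(k)$,
\[
\bGamma_{1,\cdot}\p(k)-\p(k)^\top\bGamma\p(k)=\sum_{j=1}^d p_j(k)\big((\bGamma\p(k))_1-(\bGamma\p(k))_j\big),
\]
using $\sum_j p_j(k)=1$. On $\Omega(k)$ the second defining inequality gives $(\bGamma\p(k))_1\geq(\bGamma\p(k))_j+\Delta_{\bGamma}/2$ for every $j\geq 2$, so dropping the vanishing $j=1$ term produces the correlated analogue of \eqref{eq:grad bound},
\[
\bGamma_{1,\cdot}\p(k)-\p(k)^\top\bGamma\p(k)\geq \tfrac{\Delta_{\bGamma}}{2}\big(1-p_1(k)\big).
\]
This replaces the identity $p_1-\|\p\|^2=\sum_j p_j(p_1-p_j)$ of the independent model and is the only place where the $\Delta_{\bGamma}$-gap is used.

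The remainder is routine bookkeeping identical to Proposition~\ref{prop:basic}. Using the bound $\vert\theta_1(k)\vert\leq\alpha^2\frac{2Q^2}{(1-Q\alpha)^3}p_1(k)(1-p_1(k))$ from Lemma~\ref{lemma:sgd corr} together with the learning-rate constraint to absorb $\frac{2Q^2}{(1-Q\alpha)^3}\alpha\leq\frac{\Delta_{\bGamma}}{4}$ into the drift coefficient, I reach $1-p_1(k+1)\leq 1-p_1(k)-\tfrac{\Delta_{\bGamma}}{4}\alpha\,p_1(k)(1-p_1(k))+\alpha\xi_1(k)$. Finally I would lower-bound $p_1(k)$ via the \emph{first} defining inequality of $\Omega(k)$: repeating the computation of \eqref{eq:p1 bound} with $\Delta_p$ in place of $\Delta$ gives $p_1(k)\geq\frac1d\big(1+\frac{(d-1)\Delta_p}{2}\big)$, and inserting this yields the claimed contraction factor $1-\alpha\frac{\Delta_{\bGamma}}{4d}\big(1+\frac{\Delta_p}{2}(d-1)\big)$.

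I do not expect a real obstacle here, since the argument is structurally identical to the independent case; the only point requiring care is keeping the two gaps separate, with $\Delta_{\bGamma}$ controlling the drift magnitude and $\Delta_p$ controlling the floor on $p_1(k)$. It is worth noting that the off-diagonal mass of $\bGamma$, quantified by $\nu$ and $c_\star$ in \eqref{ass:cor}, does \emph{not} enter this proposition at all: the drift bound uses only the coordinate ordering of $\bGamma\p(k)$ already imposed by $\Omega(k)$. That constant $c_\star$ will instead be needed in the correlated analogue of Lemma~\ref{lemma:ek}, when showing that $\Omega(k)$ occurs with high probability.
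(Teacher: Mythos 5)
Your proposal is correct and follows essentially the same route as the paper's own proof: you apply Lemma~\ref{lemma:sgd corr} with $i=1$, derive the drift bound $(\bGamma\p(k))_1-\p(k)^\top\bGamma\p(k)\geq \tfrac{\Delta_{\bGamma}}{2}(1-p_1(k))$ exactly as in \eqref{eq:grad bound cor}, absorb the Taylor remainder $\theta_1(k)$ using the learning-rate constraint, and invoke the $\Delta_p$-analogue of \eqref{eq:p1 bound} to obtain the contraction factor. Your closing observation that $\nu$ and $c_\star$ play no role here and only enter in Lemma~\ref{lemma:ek cor} is also accurate (and you use the correct index $\xi_1(k)$ where the paper's statement has a typographical $\xi_i(k)$).
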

\begin{proof}
By definition, we have on the event $\Omega(k)$,
	\begin{align}\label{eq:grad bound cor}
		(\bGamma\p(k))_1-\p(k)^\top\bGamma\p(k)
		&= \sum_{j=1}^dp_j(k)\big((\bGamma\p(k))_1-(\bGamma\p(k))_j\big)  \notag
		\\
		&\geq \frac{\Delta_{\bGamma}} {2} \big(1-p_1(k)\big).
	\end{align}
    The assumption on $\alpha$ implies $\alpha <1/Q$ and thus Lemma \ref{lemma:sgd corr} becomes applicable. Now applying Lemma \ref{lemma:sgd corr} with $i=1$,  gives on the event $\Omega(k)$,
	\begin{align*}
		&1-p_1(k+1)
        \\
&\leq1-p_1(k)-\alpha \frac{\Delta_{\bGamma}} {2}p_1(k)(1-p_1(k))+\alpha\xi_i(k)+\theta_i(k)
        \\
&\leq1-p_1(k)-\alpha \Big(\frac{\Delta_{\bGamma}} {2}-\alpha\frac{2Q^2}{(1-Q\alpha)^3}\Big)p_1(k)(1-p_1(k))+\alpha\xi_i(k)
        \\
&\leq1-p_1(k)-\alpha \frac{\Delta_{\bGamma}} {4}p_1(k)(1-p_1(k))+\alpha\xi_i(k)
       \\
&\leq\Big(1-\alpha \frac{\Delta_{\bGamma}} {4d}\Big(1+\frac{\Delta_p}{2}(d-1)\Big)\Big)(1-p_1(k))+\alpha\xi_i(k)
	\end{align*}
This concludes the proof.
\end{proof}
As in the uncorrelated setting our goal is now to derive a lower bound for the probability of the favourable  event $\Omega(k)$. For this we again follow the same strategy and rely on uniform concentration inequalities for martingales. In order apply those, we require the following lemma, which states that $\Omega(k)$ is fulfilled as soon as
\begin{equation}
    M_j(k)\coloneq \sum_{\ell=0}^k\alpha\xi_j(\ell)\1_{\Omega(\ell)}, \quad k=0,1,\ldots
    \label{eq:Mjdef corr}
\end{equation}
with $\xi_j(\ell)$ defined in \eqref{eq:xik_def}, exhibits a uniform concentration behaviour. In the following we denote by $\Vert\bGamma\Vert_{\infty}$ the row-sum norm of $\bGamma,$ i.e. $\Vert\bGamma\Vert_{\infty}=\max_{i\in[d]}\sum_{j=1}^d\bGamma_{i,j}.$
\begin{lemma}\label{lemma:ek cor}
Define the sets
\begin{equation*}
(E)_j(k)\coloneq \Big\{\max_{u\in[k]} \vert M_j(u)\vert \leq \frac{1}{4}\Big(\Delta_p\land \frac{\Delta_{\bGamma}}{\Vert\bGamma\Vert_\infty} \Big)\Big\},\quad E(k)\coloneq \bigcap_{j=1}^d(E)_j(k),\quad k=0,1,\dots.
\end{equation*}
Then if 
\begin{equation*}
0<\alpha\leq\frac{(1-Q\alpha)^3}{4Q^2} c_\star,
\end{equation*}
the following set inclusion holds for any $k=0,1,\dots$ 
\begin{equation*}
    E(k)\subseteq \Omega(k+1).
\end{equation*}
\end{lemma}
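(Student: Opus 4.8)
The plan is to mirror the proof of Lemma~\ref{lemma:ek} from the uncorrelated case, the crucial difference being that the event $\Omega(k)$ now encodes \emph{two} gap conditions — one for $\p$ and one for $\bGamma\p$ — so I must propagate both simultaneously by induction on $k$. As in the scalar case, the base step $k=0$ is immediate from $\Omega(0)=\Omega$ together with assumption \eqref{ass:cor}, and the inductive step rests on controlling, for each fixed $u\in\{2,\dots,d\}$, the one-step increments of $p_1-p_u$ and of $(\bGamma\p)_1-(\bGamma\p)_u$ on $\Omega(k)$, in each case isolating a martingale piece (accumulating into $M_j$) from a drift-plus-error piece that I will show is non-negative once the learning rate is small.

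\textbf{First gap.} I would apply Lemma~\ref{lemma:sgd corr} to $p_1$ and $p_u$. On $\Omega(k)$ one has $g_u:=(\bGamma\p(k))_u\le(\bGamma\p(k))_1=:g_1$, so $p_u(g_u-\p^\top\bGamma\p)\le p_u(g_1-\p^\top\bGamma\p)$, and exactly as in the uncorrelated proof this yields $p_1(k+1)-p_u(k+1)-\alpha(\xi_u(k)-\xi_1(k))\ge (p_1-p_u)+\alpha(p_1-p_u)(g_1-\p^\top\bGamma\p)-|\theta_1|-|\theta_u|$. Using the gradient bound \eqref{eq:grad bound cor}, the bound $p_1-p_u\ge\Delta_p/2$ valid on $\Omega(k)$, and the $\theta$-bound of Lemma~\ref{lemma:sgd corr}, the drift-plus-error term is at least $\alpha(1-p_1)\big(\tfrac{\Delta_p\Delta_{\bGamma}}4-\tfrac{4Q^2\alpha}{(1-Q\alpha)^3}\big)$; the constraint $\alpha\le\tfrac{(1-Q\alpha)^3}{4Q^2}c_\star$ makes the bracket at least $\nu\big(1+\tfrac{\Delta_p\Delta_{\bGamma}}4\big)\ge0$, so this increment is non-negative and $p_1-p_u$, net of the martingale $\alpha(\xi_u-\xi_1)$, is non-decreasing. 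Telescoping as in \eqref{eq:sgdproof} and bounding by $|M_u(k)|+|M_1(k)|$ then keeps $p_1-p_u\ge\Delta_p/2$ on $E(k)$.

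\textbf{Second gap.} This is the genuinely new ingredient. Since $(\bGamma\p)_1-(\bGamma\p)_u=(\bGamma_{1,\cdot}-\bGamma_{u,\cdot})\p$ is a \emph{linear} functional of $\p$, I would write its one-step change as $\sum_j(\bGamma_{1,j}-\bGamma_{u,j})\big(\alpha p_j(g_j-\p^\top\bGamma\p)-\alpha\xi_j-\theta_j\big)$. The noise part telescopes into $\sum_j(\bGamma_{1,j}-\bGamma_{u,j})M_j(k)$, bounded by $\|\bGamma\|_\infty\max_j|M_j(k)|$ — which is exactly why the threshold in $(E)_j(k)$ carries the factor $\Delta_{\bGamma}/\|\bGamma\|_\infty$, so that this sum stays below $\Delta_{\bGamma}/4$. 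For the drift I would split off the $j=1$ term $(1-\bGamma_{u,1})p_1(g_1-\p^\top\bGamma\p)\ge0$ as the signal, and bound the remaining off-diagonal contributions (each coefficient $\le\nu$) together with the $\theta_j$-errors by quantities of order $\nu(1-p_1)$ and $\alpha^2(1-p_1)$; the slack $\nu\big(1+\tfrac{\Delta_p\Delta_{\bGamma}}4\big)$ left over in the definition of $c_\star$ is precisely what absorbs this correlation leakage. Starting from $(\bGamma\p(0))_1-(\bGamma\p(0))_u\ge\Delta_{\bGamma}$ and telescoping then gives $(\bGamma\p(k+1))_1-(\bGamma\p(k+1))_u\ge\Delta_{\bGamma}/2$ on $E(k)$. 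Combining the two gaps yields $E(k)\subseteq\Omega(k+1)$, and the induction closes verbatim as in Lemma~\ref{lemma:ek}.

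\textbf{Main obstacle.} The hard part is the second gap and its coupling to the first: the evolution of the $\p$-gap relies on the ordering $g_u\le g_1$ of $\bGamma\p$, while the evolution of the $\bGamma\p$-gap is driven by the $\p$-dynamics and leaks across coordinates through the off-diagonal entries of $\bGamma$. The delicate bookkeeping is to show that the single positive signal $\tfrac{\Delta_p\Delta_{\bGamma}}4$ dominates both the Taylor error (killed by the learning-rate constraint) and the $\nu$-order leakage simultaneously — that is, that the one scalar condition $c_\star>0$ in \eqref{ass:cor} suffices to close both inductions at once. Everything else is a direct transcription of the uncorrelated argument with $\Delta^2$ replaced by $\Delta_p\Delta_{\bGamma}$.
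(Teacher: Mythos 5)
Your overall architecture matches the paper's proof: a simultaneous induction propagating both gap conditions in $\Omega(k)$, a martingale telescoping into the $M_j$'s, and the threshold $\Delta_{\bGamma}/\Vert\bGamma\Vert_\infty$ in $E_j(k)$ tuned to control the $\bGamma$-weighted noise sum. Your treatment of the first gap is correct and is exactly the paper's (including the observation that the learning-rate constraint leaves the slack $\nu(1+\Delta_p\Delta_{\bGamma}/4)\geq 0$). The genuine gap is in your second step. Writing the one-step change of $(\bGamma\p)_1-(\bGamma\p)_u$ as $\sum_{j=1}^d(\bGamma_{1,j}-\bGamma_{u,j})\big(\alpha p_j((\bGamma\p)_j-\p^\top\bGamma\p)-\alpha\xi_j-\theta_j\big)$, you keep only the $j=1$ term as signal and claim every remaining drift term has coefficient at most $\nu$, hence contributes leakage of order $\nu(1-p_1)$. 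This is false for $j=u$: its coefficient is $\bGamma_{1,u}-\bGamma_{u,u}=-(1-\bGamma_{1,u})$, of magnitude at least $1-\nu$, because it involves the \emph{diagonal} entry $\bGamma_{u,u}=1$. This term cannot be absorbed into correlation leakage. Already in the independent case $\bGamma=\identity$ (so $\nu=0$) it equals $-\alpha p_u\big(p_u-\Vert\p\Vert^2\big)$, which is generally nonzero and can be strictly negative: for $\p=(0.35,0.3,0.07,0.07,0.07,0.07,0.07)^\top$ one has $\Vert\p\Vert^2=0.237<p_2=0.3$, while your claimed leakage bound is $\nu(1-p_1)=0$. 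So the inductive step for the $\bGamma\p$-gap does not close as you have set it up.

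The fix is precisely the device you already used for the first gap, applied a second time: pair the $j=1$ and $j=u$ terms and use the ordering $(\bGamma\p)_u\leq(\bGamma\p)_1$, valid on $\Omega(k)$, to get
\begin{align*}
&(1-\bGamma_{u,1})\,p_1\big((\bGamma\p)_1-\p^\top\bGamma\p\big)-(1-\bGamma_{1,u})\,p_u\big((\bGamma\p)_u-\p^\top\bGamma\p\big)\\
&\qquad\geq(1-\bGamma_{1,u})(p_1-p_u)\big((\bGamma\p)_1-\p^\top\bGamma\p\big)
\;\geq\;(1-\nu)\,\frac{\Delta_p}{2}\cdot\frac{\Delta_{\bGamma}}{2}\,(1-p_1),
\end{align*}
so that only the terms $j\neq 1,u$ are genuine $\nu$-order leakage; then
$(1-\nu)\tfrac{\Delta_p\Delta_{\bGamma}}{4}-\nu-\alpha\tfrac{4Q^2}{(1-Q\alpha)^3}= c_\star-\alpha\tfrac{4Q^2}{(1-Q\alpha)^3}\geq 0$ under the stated constraint, and the drift-plus-error term is non-negative. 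This is what the paper does, and it is the one place where the two inductions genuinely couple. A second, minor slip: $\sum_{j=1}^d\vert\bGamma_{1,j}-\bGamma_{u,j}\vert\leq 2\Vert\bGamma\Vert_\infty$, not $\Vert\bGamma\Vert_\infty$ (the rows $1$ and $u$ need not overlap), so on $E(k)$ the accumulated noise in the $\bGamma\p$-gap is only guaranteed to be at most $\Delta_{\bGamma}/2$ rather than $\Delta_{\bGamma}/4$; this is harmless, since a remaining gap of $\Delta_{\bGamma}-\Delta_{\bGamma}/2=\Delta_{\bGamma}/2$ is exactly what membership in $\Omega(k+1)$ requires.
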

\begin{proof}
Let $u\in\{2,\ldots,d\}$ be arbitrary. By Lemma \ref{lemma:sgd corr}, on the event $\Omega(k),$ 
\begin{align*}
&p_1(k+1)-p_u(k+1)
\\
&\geq p_1(k)-p_u(k)+\alpha \big(p_1(k)-p_u(k)\big)\big((\bGamma\p(k))_1-\p(k)^\top\bGamma\p(k)\big)\\
&\quad +\alpha\big(\xi_j(k)-\xi_1(k)\big)-\theta_1(k)+\theta_u(k).
	\end{align*}
Since $p_u(k)\leq 1-p_1(k)$ we also obtain on $\Omega(k)$,
\begin{align*}
-\theta_1(k)+\theta_u(k)
&\geq -\alpha^2\frac{2Q^2}{(1-Q\alpha)^3}\Big(p_1(k)\big(1-p_1(k)\big)+p_u(k)(1-p_u(k))\Big)
\\
&\geq -\alpha^2\frac{4Q^2}{(1-Q\alpha)^3}\big(1-p_1(k)\big).
	\end{align*}
 The assumptions on $\alpha$ then imply that on the event $\Omega(k)$,
		\begin{align*}
		&p_1(k+1)-p_u(k+1) - \alpha(\xi_j(k)-\xi_1(k))
		\\
		&\geq  p_1(k)-p_u(k) +\alpha \big(p_1(k)-p_u(k)\big)\big((\bGamma\p(k))_1-\p(k)^\top\bGamma\p(k)\big)
        -\alpha^2\frac{4Q^2}{(1-Q\alpha)^3}\big(1-p_1(k)\big)
												\\
		&\geq p_1(k)-p_u(k).
	\end{align*}
	Because of $\Omega(k)\subseteq \Omega(k-1)$ it then follows, 
	\begin{align*}
(p_1(k+1)-p_u(k+1))\1_{\Omega(k)}&\geq (p_1(k)-p_u(k))\1_{\Omega(k)}+\alpha(\xi_j(k)-\xi_1(k))\1_{\Omega(k)}
\\
&=\1_{\Omega(k)}\Big( (p_1(k)-p_u(k))\1_{\Omega(k-1)}+\alpha(\xi_j(k)-\xi_1(k))\1_{\Omega (k)}\Big).
	\end{align*}
	This gives
			\begin{align}
				\begin{split}\label{eq:sgdproof corr}
		&(p_1(k+1)-p_u(k+1))\1_{\Omega(k)}
\\
&\geq \1_{\Omega(k)}\Big(p_1(0)-p_u(0)+\sum_{\ell=0}^k\alpha(\xi_j(\ell)-\xi_1(\ell))\1_{\Omega(\ell)}\Big)
\\
&\geq \Delta_p\1_{\Omega(k)}-\vert M_j(k)\vert -\vert M_1(k)\vert.
				\end{split}
	\end{align}
Now again let $u\in\{2,\ldots,d\}$ be given. Then it holds, on $\Omega(k)$
\begin{align*}
&(\bGamma \p(k+1))_1-(\bGamma \p(k+1))_u
\\
&= (\bGamma \p(k))_1-(\bGamma \p(k))_u+\alpha \sum_{j=1}^d\bGamma_{1,j}p_j(k)((\bGamma\p(k))_j-\p(k)\bGamma\p)
\\&\quad-\alpha \sum_{j=1}^d\bGamma_{u,j}p_j(k)
((\bGamma\p(k))_j-\p(k)\bGamma\p)-\sum_{j=1}^d(\bGamma_{1,j}-\bGamma_{u,j})(\alpha \xi_j(k)+\theta_j(k))
\\
&\geq (\bGamma \p(k))_1-(\bGamma \p(k))_u+\alpha(1-\bGamma_{1,u})(p_1(k)-p_u(k))((\bGamma\p(k))_1-\p(k)\bGamma\p)
\\&\quad+\alpha \sum_{j=2,j\neq u}^d(\bGamma_{1,j}-\bGamma_{u,j})p_j(k)((\bGamma\p(k))_j-\p(k)\bGamma\p)
\\&\quad-\sum_{j=1}^d(\bGamma_{1,j}-\bGamma_{u,j})(\alpha \xi_j(k)+\theta_j(k))
\\
&\geq (\bGamma \p(k))_1-(\bGamma \p(k))_u+\alpha(1-\bGamma_{1,u})\frac{\Delta_p\Delta_{\bGamma}}{4}(1-p_1(k))
-\alpha\nu (1-p_1(k))
\\&\quad-\sum_{j=1}^d(\bGamma_{1,j}-\bGamma_{u,j})(\alpha \xi_j(k)+\theta_j(k))
\\
&\geq (\bGamma \p(k))_1-(\bGamma \p(k))_u+\alpha\Big((1-\nu)\frac{\Delta_p\Delta_{\bGamma}}{4}-\nu-\alpha\frac{4Q^2}{(1-Q\alpha)^3}\Big)(1-p_1(k))
\\&\quad-\alpha\sum_{j=1}^d(\bGamma_{1,j}-\bGamma_{u,j})\xi_j(k)
\\
&\geq (\bGamma \p(k))_1-(\bGamma \p(k))_u-\alpha\sum_{j=1}^d(\bGamma_{1,j}-\bGamma_{u,j})\xi_j(k).
	\end{align*}
    Hence, arguing as in the derivation of \eqref{eq:sgdproof corr} gives 
    \begin{align}
    \begin{split}\label{eq:sgdproof corr2}
&\Big((\bGamma \p(k+1))_1-(\bGamma \p(k+1))_u\Big)\1_{\Omega(k)}
\\
&\geq \1_{\Omega(k)}\Big((\bGamma \p(0))_1-(\bGamma \p(0))_u-\alpha\sum_{\ell=0}^k\sum_{j=1}^d(\bGamma_{1,j}-\bGamma_{u,j})\xi_j(\ell)\1_{\Omega(\ell)}\Big)
\\
&= \1_{\Omega(k)}\Big((\bGamma \p(0))_1-(\bGamma \p(0))_u-\sum_{j=1}^d(\bGamma_{1,j}-\bGamma_{u,j})M_j(k)\Big)
\\
&\geq \1_{\Omega(k)}\Big((\bGamma \p(0))_1-(\bGamma \p(0))_u-2\Vert \bGamma\Vert_\infty\max_{j\in[d]}\vert M_j(k)\vert\Big).
\end{split}
    \end{align}
With the above results we can now begin proving that $E(k)\subseteq \Omega(k+1)$ for all $k=0,1,\ldots$. We do this by induction. For $k=0$, this directly follows from \eqref{eq:sgdproof corr} and \eqref{eq:sgdproof corr2}, since $\Omega(0)=\Omega$ by assumption. Now, assume the assertion holds for some $k=0,1,\ldots$. This implies $E(k+1)\subseteq E(k)\subseteq\Omega(k+1)$, such that for any $u\in\{2,\ldots,d\}$ it holds on $E(k+1)$ by \eqref{eq:sgdproof corr}
			\begin{align*}
		(p_1(k+2)-p_u(k+2))
		&\geq \Delta_p-\vert M_j(k+1)\vert -\vert M_1(k+1)\vert
		\\
		&\geq \frac {\Delta_p} {2},
\end{align*}
and additionally by \eqref{eq:sgdproof corr2} it holds on $E(k+1)$
\begin{align*}
    (\bGamma \p(k+2))_1-(\bGamma \p(k+2))_u
&\geq (\bGamma \p(0))_1-(\bGamma \p(0))_u-2\Vert \bGamma\Vert_\infty\max_{j\in[d]}\vert M_j(k+1)\vert
\\
&\geq \frac {\Delta_{\bGamma}} {2},
\end{align*}
which proves the assertion.
\end{proof}
With the above results we are now able to prove and state the main theorem for the correlated case.
\begin{theorem}\label{thm:main corr}
Given $\epsilon\in(0,1)$, assume 
\begin{align*}
   \Delta_p&:=p_1(0)- \max_{i=2,\ldots,d}p_i(0)>0, \quad\Delta_{\bGamma}:=(\bGamma \p(0))_1- \max_{i=2,\ldots,d}(\bGamma \p(0))_1>0 , 
\\
c_\star&\coloneq\frac{\Delta_p\Delta_{\bGamma}}{4}-\nu\Big(1+\frac{\Delta_p\Delta_{\bGamma}}{4}\Big)>0,\quad \textrm{where},\quad\nu\coloneq \max_{i\neq j\in[d]}\bGamma_{i,j},
\end{align*}
and
\[0<\alpha \leq \frac{1}{4Q^2}\Big((1-Q\alpha)^3 c_\star \land \frac{1}{1024(1-p_1(0))}\Big(\Delta_p\land \frac{\Delta_{\bGamma}}{\Vert\bGamma\Vert_\infty} \Big)^2\Delta_{\bGamma} \Big(\frac 4d+\Delta_p\Big)\Big).\]
Then there exists an event $\Theta$ with probability $\geq 1-\epsilon/2$ such that
	\begin{align*}
	\E\big[	\Vert \p(k)-\mathbf{e}_1\Vert_1\1_{\Theta}\big]\leq2(1-p_1(0)) \exp\Bigg(-\alpha \frac{\Delta_{\bGamma}} {16}\Big(\frac{4}{d}+\Delta_p\Big)k\Bigg), \quad \text{for all} \ k=0,1,\ldots
\end{align*}
Consequently, given $\delta>0$, it holds
\begin{equation*}
    \P\Big(\Vert\p(k) - \mathbf{e}_1 \Vert_1\ge\delta\Big)\leq \epsilon\quad\text{for all}\quad k\geq \frac{16d}{\alpha\Delta_{\bGamma}(4+d\Delta_p)}\log\Big(\frac{4(1-p_1(0))}{\epsilon\delta} \Big).
\end{equation*}
\end{theorem}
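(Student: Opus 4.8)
The plan is to replay the proof of Theorem~\ref{thm:sgd} almost verbatim, substituting the three correlated auxiliary results for their independent-case analogues: Lemma~\ref{lemma:sgd corr} in place of Lemma~\ref{lemma:sgd}, Proposition~\ref{prop:basic cor} in place of Proposition~\ref{prop:basic}, and Lemma~\ref{lemma:ek cor} in place of Lemma~\ref{lemma:ek}. Set $\Theta \coloneq \bigcap_{k\ge 0}\Omega(k)$ with the two-gap events $\Omega(k)$ of the correlated model. As before, $\p(k)$ is $\mathcal{F}_{k-1}$-measurable, $\Omega(k)\in\mathcal{F}_{k-1}$, and $\E[\xi_j(k)\mid\mathcal{F}_{k-1}]=0$, so each $(M_j(k))_{k\ge0}$ from \eqref{eq:Mjdef corr} is a martingale and Doob's inequality \eqref{eq:doob} is available.

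First I would bound the second moments. Since $\xi_j(k)$ is defined by \eqref{eq:xick_def} exactly as in the independent case and still satisfies $|Y_i(k)|\le Q$, the orthogonality of martingale increments telescopes $\E[M_j(k)^2]$ into $\alpha^2\sum_{\ell=0}^k\E[(\xi_j(\ell)\1_{\Omega(\ell)})^2]$, and the conditional-variance estimate then gives $\E[M_1(k)^2]\le 4Q^2\alpha^2\sum_{\ell=0}^k\E[(1-p_1(\ell))\1_{\Omega(\ell)}]$ together with $\E[M_u(k)^2]\le 4Q^2\alpha^2\sum_{\ell=0}^k\E[p_u(\ell)\1_{\Omega(\ell)}]$ for $u\ge2$. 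The only change from the independent computation is that the conditional mean is now $(\bGamma\p(k))_i$ rather than $p_i(k)$, but this mean cancels in the centred quantity defining $\xi_j$, so the variance bounds are unaffected by the correlation.

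Next I would combine these bounds. Applying \eqref{eq:doob} with $p=2$ and a union bound at the threshold $u=\tfrac14\big(\Delta_p\land \Delta_{\bGamma}/\Vert\bGamma\Vert_\infty\big)$ from Lemma~\ref{lemma:ek cor}, and summing the $j=1$ term together with the $j\ge2$ terms (whose $p_u$ add up to $1-p_1$), lower-bounds $\P(E(k))$, and hence $\P(\Omega(k+1))$ via the set inclusion of Lemma~\ref{lemma:ek cor}, in terms of $\sum_\ell\E[(1-p_1(\ell))\1_{\Omega(\ell)}]$. Proposition~\ref{prop:basic cor} then yields the geometric decay $\E[(1-p_1(k))\1_{\Omega(k)}]\le(1-p_1(0))\big(1-\alpha\tfrac{\Delta_{\bGamma}}{4d}(1+\tfrac{\Delta_p}{2}(d-1))\big)^{k}$, whose sum over $\ell$ is a finite multiple of $\big(\alpha\Delta_{\bGamma}(\tfrac4d+\Delta_p)\big)^{-1}$ after using $d\ge2$ and $d-1\ge d/2$. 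Feeding this into the probability bound and invoking the stated smallness constraint on $\alpha$ gives $\P(\Theta)=\lim_{k\to\infty}\P(\Omega(k))\ge 1-\epsilon/2$ by continuity of measure. The expectation estimate follows from the same decay together with $\Vert\p(k)-\mathbf{e}_1\Vert_1=2(1-p_1(k))$, and the final high-probability statement from Markov's inequality, exactly as in the independent case.

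The genuinely new difficulty has already been isolated in Lemma~\ref{lemma:ek cor}: unlike the independent model, $\Omega(k)$ must simultaneously propagate a gap in $\p$ and a gap in $\bGamma\p$, and the off-diagonal entries of $\bGamma$ couple these through cross terms controlled only by $\nu=\max_{i\ne j}\bGamma_{i,j}$ and the positivity of $c_\star$. Granting that lemma, the one subtlety to watch for the present theorem is that the second condition forces the smaller Doob threshold $\Delta_{\bGamma}/\Vert\bGamma\Vert_\infty$, since $\bGamma$ amplifies the martingale fluctuations entering the $\bGamma\p$ gap through \eqref{eq:sgdproof corr2}; this enters the constant multiplying $\sum_\ell\E[(1-p_1(\ell))\1_{\Omega(\ell)}]$, but the geometric rate itself depends only on $\Delta_p,\Delta_{\bGamma},d$, so the series remains summable and a constant learning rate still suffices.
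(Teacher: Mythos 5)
Your proposal is correct and follows essentially the same route as the paper's own proof: it assembles Lemma~\ref{lemma:sgd corr}, Proposition~\ref{prop:basic cor} and Lemma~\ref{lemma:ek cor}, bounds $\E[M_j(k)^2]$ via martingale-increment orthogonality and the conditional-variance estimate (correctly noting that the change of conditional mean to $(\bGamma\p(k))_i$ cancels in the centred $\xi_j$), applies Doob with the threshold $\tfrac14\big(\Delta_p\land\Delta_{\bGamma}/\Vert\bGamma\Vert_\infty\big)$, sums the geometric decay from Proposition~\ref{prop:basic cor}, and closes with continuity of measure, the identity $\Vert\p(k)-\mathbf{e}_1\Vert_1=2(1-p_1(k))$, and Markov's inequality. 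Your concluding remark about where the correlation genuinely enters (the two-gap propagation and the $\Vert\bGamma\Vert_\infty$-deflated threshold) matches the structure of the paper's argument exactly.
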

Before giving the proof of the above theorem, we want to remark that Theorem \ref{thm:main corr} exactly recovers the result of Theorem \ref{thm:sgd} in the independent case. Indeed, in the independent case $\bGamma$ is equal to the identity matrix and thus $\Delta_p=\Delta_{\bGamma},\nu=0$ and $c_\star=\Delta_p^2/4$ hold true, which gives the result of Theorem \ref{thm:sgd}.
\begin{proof}[Proof of Theorem \ref{thm:main corr}]
As in the uncorrelated case, the recursive definition ensures that $\p(k)$ is $\mathcal{F}_{k-1}$-measurable. Thus, also $\Omega(k)\in\mathcal{F}_{k-1}$ for any $k=0,1,\dots$. Then $(M_i(k))_{k=0,1,\dots}$, defined in \eqref{eq:Mjdef corr}, is a martingale for each $i\in[d]$. This allows us to apply Doob's submartingale inequality. For this, we deduce the following bound on the second moment, 
\begin{align*}
&\E[M_1(k)^2]
\\
&=\alpha^2\sum_{\ell=0}^k\E[ (\xi_1(\ell)\1_{\Omega(\ell)})^2]
\\
&=\alpha^2\sum_{\ell=0}^k\E\Big[ (p_1(\ell))^2\Big(\E\Big[ Y_1(\ell)-\sum_{j=1}^dp_j(\ell) Y_j(\ell)\Big\vert \mathcal{F}_{\ell-1}\Big]-\Big( Y_1(\ell)-\sum_{j=1}^dp_j(\ell) Y_j(\ell)\Big)\Big)^2\1_{\Omega(\ell)}\Big]
\\
&\leq \alpha^2\sum_{\ell=0}^k\E\Big[ \E\Big[\Big(\E\Big[ Y_1(\ell)-\sum_{j=1}^dp_j(\ell) Y_j(\ell)\Big\vert \mathcal{F}_{\ell-1}\Big]
-\Big( Y_1(\ell)-\sum_{j=1}^dp_j(\ell) Y_j(\ell)\Big)\Big)^2\Big\vert \mathcal{F}_{\ell-1}\Big]\1_{\Omega(\ell)}\Big]
\\
&\leq \alpha^2\sum_{\ell=0}^k\E\Big[ \Big( Y_1(\ell)-\sum_{j=1}^dp_j(\ell) Y_j(\ell)\Big)^2\1_{\Omega(\ell)}\Big]
\\
&= \alpha^2\sum_{\ell=0}^k\E\Big[ \Big((1-p_1(\ell)) Y_1(\ell)-\sum_{j=2}^dp_j(\ell) Y_j(\ell)\Big)^2\1_{\Omega(\ell)}\Big]
\\
&\leq 2Q^2 \alpha^2\sum_{\ell=0}^k\E\Big[ \Big((1-p_1(\ell))^2+\Big(\sum_{j=2}^dp_j(\ell)\Big)^2\Big)\1_{\Omega(\ell)}\Big]
\\
&\leq4Q^2 \alpha^2\sum_{\ell=0}^k\E\Big[ (1-p_1(\ell))\1_{\Omega(\ell)}\Big],
\end{align*}
where we argued similarly as for the independent case.
Furthermore, we obtain for $u\in\{2,\ldots,d\}$,
\begin{align*}
	\E[(M_u(k))^2]
	&=\alpha^2\E\Big[ \sum_{\ell=0}^k(\xi_u(\ell)\1_{\Omega(\ell)})^2\Big]
	\\
	&\leq \alpha^2\sum_{\ell=0}^k\E\Big[ 	 (p_u(k))^2\Big(Y_u(\ell)-\sum_{j=1}^dp_j(\ell)Y_j(\ell)\Big)^2\1_{\Omega(\ell)}\Big]
		\\
	&\leq 4Q^2\alpha^2\sum_{\ell=0}^k\E\Big[ 	 p_u(\ell)\1_{\Omega(\ell)}\Big].
\end{align*}
Hence, applying a union bound, Doob's submartingale inequality \eqref{eq:doob} with $p=2$ gives for any $k=0,1,\dots$
\begin{align*}
\P(E(k))&=1-\P\Big(\bigcup_{j=1}^d\max_{u\in[k]} \vert M_j(u)\vert \geq \frac{1}{4}\Big(\Delta_p\land \frac{\Delta_{\bGamma}}{\Vert\bGamma\Vert_\infty} \Big)\Big)
\\
&\geq 1-\sum_{j=1}^d\P\Big(\max_{u\in[k]} \vert M_j(u)\vert \geq \frac{1}{4}\Big(\Delta_p\land \frac{\Delta_{\bGamma}}{\Vert\bGamma\Vert_\infty} \Big)\Big)
\\
&\geq  1-64Q^2\alpha^2\Big(\Delta_p\land \frac{\Delta_{\bGamma}}{\Vert\bGamma\Vert_\infty} \Big)^{-2}\Big(\sum_{\ell=0}^k\E\Big[ (1-p_1(\ell))\1_{\Omega(\ell)}\Big]+\sum_{j=2}^d\sum_{\ell=0}^k\E\Big[ 	 p_j(\ell)\1_{\Omega(\ell)}\Big]\Big)
\\
&=  1-128Q^2\alpha^2\Big(\Delta_p\land \frac{\Delta_{\bGamma}}{\Vert\bGamma\Vert_\infty} \Big)^{-2}\sum_{\ell=0}^k\E\Big[ (1-p_1(\ell))\1_{\Omega(\ell)}\Big].
\end{align*}
Proposition \ref{prop:basic cor} gives for any $k=0,1,\dots$ the bound
		\begin{align*}
			&\E\Big[	\big(1-p_1(k+1)\big)\1_{\Omega(k+1)}\Big] \\
            &\leq \E\Big[	\big(1-p_1(k+1)\big)\1_{\Omega(k)}\Big]
	\\
	&\leq\E\Big[\Big(1-\alpha \frac{\Delta_{\bGamma}} {4d}\Big(1+\frac{\Delta_p}{2}(d-1)\Big)\Big)\big(1-p_1(k)\big)\1_{\Omega(k)}+\alpha\xi_1(k)\1_{\Omega(k)}\Big]
	\\
	&=\Big(1-\alpha \frac{\Delta_{\bGamma}} {4d}\Big(1+\frac{\Delta_p}{2}(d-1)\Big)\Big)\E\Big[\big(1-p_1(k)\big)\1_{\Omega(k)}\Big],
	\end{align*}
	which implies
	\begin{equation}\label{eq:rate cor}
		\E\Big[\big(1-p_1(k)\big)\1_{\Omega(k)}\Big]\leq\big(1-p_1(0)\big) \Big(1-\alpha \frac{\Delta_{\bGamma}} {4d}\Big(1+\frac{\Delta_p}{2}(d-1)\Big)\Big)^k.
	\end{equation}
    We set
	\[\Theta\coloneq \bigcap_{k=0}^\infty \Omega(k). \]
The continuity of probability measures and Lemma \ref{lemma:ek cor} then imply
	\begin{align*}
\P(\Theta)&=\lim\limits_{k\to\infty}\P(\Omega(k))
\\
&\geq \lim\limits_{k\to\infty}\P(E(k))
\\
&\geq  1-128Q^2\alpha^2\Big(\Delta_p\land \frac{\Delta_{\bGamma}}{\Vert\bGamma\Vert_\infty} \Big)^{-2}\sum_{\ell=0}^\infty\E\Big[ (1-p_1(\ell))\1_{\Omega(\ell)}\Big]
\\
&\geq  1-128Q^2\alpha^2(1-p_1(0))\Big(\Delta_p\land \frac{\Delta_{\bGamma}}{\Vert\bGamma\Vert_\infty} \Big)^{-2}\sum_{\ell=0}^\infty \Big(1-\alpha \frac{\Delta_{\bGamma}} {4d}\Big(1+\frac{\Delta_p}{2}(d-1)\Big)\Big)^\ell
\\
&=1-1024Q^2\alpha(1-p_1(0))\Big(\Delta_p\land \frac{\Delta_{\bGamma}}{\Vert\bGamma\Vert_\infty} \Big)^{-2}\frac{d}{ \Delta_{\bGamma} \big(2+\Delta_p(d-1)\big)}
\\
&=1-2048Q^2\alpha(1-p_1(0))\Big(\Delta_p\land \frac{\Delta_{\bGamma}}{\Vert\bGamma\Vert_\infty} \Big)^{-2}\frac{1}{ \Delta_{\bGamma} \big(4/d+\Delta_p\big)}
\\
&\geq 1-\varepsilon/2,
	\end{align*}
    where we used that we can assume $d\geq 2$ without loss of generality.
Additionally, \eqref{eq:rate cor} and the elementary inequality  $1-x\leq \exp(-x)$, which is valid for any real number $x,$ give
		\begin{align*}
		\E\Big[\big(1-p_1(k)\big)\1_{\Theta}\Big]&\leq 		\E\Big[\big(1-p_1(k)\big)\1_{\Omega(k)}\Big]
		\\
		&\leq\big(1-p_1(0)\big) \Big(1-\alpha \frac{\Delta_{\bGamma}} {4d}\Big(1+\frac{\Delta_p}{2}(d-1)\Big)\Big)^k
		\\
		&\leq(1-p_1(0)) \exp\Big(-\alpha \frac{\Delta_{\bGamma}} {4d}\Big(1+\frac{\Delta_p}{2}(d-1)\Big)k\Big).
    \end{align*} 
    When $d=1$, the right hand side of this inequality is $0$. For $d\geq 2$, we can also use the bound $d-1\geq d/2$. Together with 
     \begin{align*}
        \Vert \p(k)-\mathbf{e}_1\Vert_1&=1-p_1(k)+\sum_{i=2}^dp_i(k)
 =2\big(1-p_1(k)\big),
    \end{align*}
    this concludes the proof of the first statement. 
For the proof of the second statement, we apply Markov's inequality to obtain
\begin{align*}
    &\P\Big(\Vert\p(k) - \mathbf{e}_1 \Vert_1\ge\delta\Big)
    \\
    &\leq \P((\Theta)^{\mathbf{C}})+\P\Big(\Vert\p(k) - \mathbf{e}_1 \Vert_1\1_{\Theta}\ge\delta\Big)
    \\
    &\leq \frac \epsilon 2+2(1-p_1(0)) \exp\Bigg(-\alpha \frac{\Delta_{\bGamma}} {16}\Big(\frac{4}{d}+\Delta_p\Big)k\Bigg)\delta^{-1}.
\end{align*}
Hence, if
\begin{align*}
    k&\geq \Bigg(\frac{\alpha\Delta_{\bGamma}}{16}\Big(\frac{4}{d}+\Delta_p\Big)\Bigg)^{-1}\log\Big(\frac{4(1-p_1(0))}{\epsilon\delta} \Big)
    \\
    &=\frac{16d}{\alpha\Delta_{\bGamma}(4+d\Delta_p)}\log\Big(\frac{4(1-p_1(0))}{\epsilon\delta} \Big),
\end{align*}
then,
\begin{align*}
    &\P\Big(\Vert\p(k) - \mathbf{e}_1 \Vert_1\ge\delta\Big)
\leq \epsilon.
\end{align*}
\end{proof}


\begin{thebibliography}{49}
\providecommand{\natexlab}[1]{#1}
\providecommand{\url}[1]{\texttt{#1}}
\expandafter\ifx\csname urlstyle\endcsname\relax
  \providecommand{\doi}[1]{doi: #1}\else
  \providecommand{\doi}{doi: \begingroup \urlstyle{rm}\Url}\fi

\bibitem[Abbott and Song(1998)]{abbottTemporallyAsymmetricHebbian1998}
L.~Abbott and S.~Song.
\newblock Temporally asymmetric {{Hebbian}} learning, spike liming and neural response variability.
\newblock In M.~Kearns, S.~Solla, and D.~Cohn, editors, \emph{Advances in Neural Information Processing Systems}, volume~11. MIT Press, 1998.

\bibitem[Alvarez et~al.(2004)Alvarez, Bolte, and Brahic]{alvarezHessianRiemannianGradient2004}
F.~Alvarez, J.~Bolte, and O.~Brahic.
\newblock Hessian {{Riemannian}} gradient flows in convex programming.
\newblock \emph{SIAM Journal on Control and Optimization}, 43\penalty0 (2):\penalty0 477--501, 2004.

\bibitem[Amari(1998)]{amariNaturalGradientWorks1998}
S.-i. Amari.
\newblock Natural gradient works efficiently in learning.
\newblock \emph{Neural Computation}, 10\penalty0 (2):\penalty0 251--276, 1998.

\bibitem[Amari and Nagaoka(2007)]{amariMethodsInformationGeometry2007}
S.-i. Amari and H.~Nagaoka.
\newblock \emph{Methods of Information Geometry}.
\newblock Translations of {{Mathematical}} {{Monographs}}. American Mathematical Society, 2007.

\bibitem[Beck and Teboulle(2003)]{beckMirrorDescentNonlinear2003}
A.~Beck and M.~Teboulle.
\newblock Mirror descent and nonlinear projected subgradient methods for convex optimization.
\newblock \emph{Operations Research Letters}, 31\penalty0 (3):\penalty0 167--175, 2003.

\bibitem[Bell and Parra(2004)]{BellPara_MaximisingSensitivitySpikingNetwork}
A.~Bell and L.~Parra.
\newblock Maximising sensitivity in a spiking network.
\newblock In L.~Saul, Y.~Weiss, and L.~Bottou, editors, \emph{Advances in Neural Information Processing Systems}, volume~17. MIT Press, 2004.

\bibitem[Bohte and Mozer(2004)]{BohteMozer_ReducingSpikeTrainVariability}
S.~Bohte and M.~C. Mozer.
\newblock Reducing spike train variability: A computational theory of spike-timing dependent plasticity.
\newblock In L.~Saul, Y.~Weiss, and L.~Bottou, editors, \emph{Advances in Neural Information Processing Systems}, volume~17. MIT Press, 2004.

\bibitem[Chechik(2003)]{chechikSpikeTimingDependentPlasticityRelevant2003}
G.~Chechik.
\newblock Spike-{{Timing-Dependent Plasticity}} and {{Relevant Mutual Information Maximization}}.
\newblock \emph{Neural Computation}, 15\penalty0 (7):\penalty0 1481--1510, 2003.

\bibitem[Clopath et~al.(2010)Clopath, Büsing, Vasilaki, and Gerstner]{clopathConnectivityReflectsCoding2010}
C.~Clopath, L.~Büsing, E.~Vasilaki, and W.~Gerstner.
\newblock Connectivity reflects coding: A model of voltage-based {{STDP}} with homeostasis.
\newblock \emph{Nature Neuroscience}, 13\penalty0 (3):\penalty0 344–352, 2010.

\bibitem[Cornford et~al.(2024)Cornford, Pogodin, Ghosh, Sheng, Bicknell, Codol, Clark, Lajoie, and Richards]{cornfordBrainlikeLearningExponentiated2024}
J.~Cornford, R.~Pogodin, A.~Ghosh, K.~Sheng, B.~A. Bicknell, O.~Codol, B.~A. Clark, G.~Lajoie, and B.~A. Richards.
\newblock Brain-like learning with exponentiated gradients, 2024.
\newblock URL \url{http://biorxiv.org/lookup/doi/10.1101/2024.10.25.620272}.

\bibitem[Dereich and Jentzen(2024)]{dereich2024convergenceratesadamoptimizer}
S.~Dereich and A.~Jentzen.
\newblock Convergence rates for the {{Adam}} optimizer, 2024.
\newblock URL \url{https://arxiv.org/abs/2407.21078}.

\bibitem[Feldman and Ballard(1982)]{feldmanConnectionistModelsTheir1982}
J.~Feldman and D.~Ballard.
\newblock Connectionist models and their properties.
\newblock \emph{Cognitive Science}, 6\penalty0 (3):\penalty0 205--254, 1982.

\bibitem[Gerstner and Kistler(2002)]{gerstnerSpikingNeuronModels2002}
W.~Gerstner and W.~M. Kistler.
\newblock \emph{Spiking {{Neuron Models}}: {{Single Neurons}}, {{Populations}}, {{Plasticity}}}.
\newblock Cambridge University Press, 2002.

\bibitem[Gerstner et~al.(2014)Gerstner, Kistler, Naud, and Paninski]{gerstnerNeuronalDynamicsSingle2014}
W.~Gerstner, W.~M. Kistler, R.~Naud, and L.~Paninski.
\newblock \emph{Neuronal {{Dynamics}}: {{From Single Neurons}} to {{Networks}} and {{Models}} of {{Cognition}}}.
\newblock Cambridge University Press, 2014.

\bibitem[Gilson et~al.(2010)Gilson, Burkitt, Grayden, Thomas, and Van~Hemmen]{gilsonRepresentationInputStructure2010}
M.~Gilson, A.~N. Burkitt, D.~B. Grayden, D.~A. Thomas, and J.~L. Van~Hemmen.
\newblock Representation of input structure in synaptic weights by spike-timing-dependent plasticity.
\newblock \emph{Physical Review E}, 82\penalty0 (2):\penalty0 021912, 2010.

\bibitem[Hebb(1949)]{hebbOrganizationBehavior1949}
D.~O. Hebb.
\newblock \emph{The {{Organization}} of {{Behavior}}}.
\newblock Wiley, 1949.

\bibitem[Hofbauer and Sigmund(1998)]{hofbauerEvolutionaryGamesPopulation1998}
J.~Hofbauer and K.~Sigmund.
\newblock \emph{Evolutionary {{Games}} and {{Population Dynamics}}}.
\newblock Cambridge University Press, 1998.

\bibitem[Hofbauer and Sigmund(2003)]{hofbauerEvolutionaryGameDynamics2003}
J.~Hofbauer and K.~Sigmund.
\newblock Evolutionary game dynamics.
\newblock \emph{Bulletin of the American Mathematical Society}, 40\penalty0 (4):\penalty0 479--519, 2003.

\bibitem[Huang et~al.(2021)Huang, Niles-Weed, and Ward]{pmlr-v134-huang21a}
D.~Huang, J.~Niles-Weed, and R.~Ward.
\newblock Streaming k-{{PCA}}: Efficient guarantees for {{Oja’s}} algorithm, beyond rank-one updates.
\newblock In M.~Belkin and S.~Kpotufe, editors, \emph{Proceedings of Thirty Fourth Conference on Learning Theory}, volume 134 of \emph{Proceedings of Machine Learning Research}, pages 2463--2498. PMLR, 2021.

\bibitem[Jaffard et~al.(2024)Jaffard, Vaiter, Muzy, and Reynaud-Bouret]{Jaffard_2024}
S.~Jaffard, S.~Vaiter, A.~Muzy, and P.~Reynaud-Bouret.
\newblock Provable local learning rule by expert aggregation for a {H}awkes network.
\newblock In S.~Dasgupta, S.~Mandt, and Y.~Li, editors, \emph{Proceedings of The 27th International Conference on Artificial Intelligence and Statistics}, volume 238 of \emph{Proceedings of Machine Learning Research}, pages 1837--1845. PMLR, 02--04 May 2024.
\newblock URL \url{https://proceedings.mlr.press/v238/jaffard24a.html}.

\bibitem[Kempter et~al.(1999)Kempter, Gerstner, and Van~Hemmen]{kempterHebbianLearningSpiking1999}
R.~Kempter, W.~Gerstner, and J.~L. Van~Hemmen.
\newblock Hebbian learning and spiking neurons.
\newblock \emph{Physical Review E}, 59\penalty0 (4):\penalty0 4498--4514, 1999.

\bibitem[Kempter et~al.(2001)Kempter, Gerstner, and Hemmen]{kempterIntrinsicStabilizationOutput2001}
R.~Kempter, W.~Gerstner, and J.~L.~V. Hemmen.
\newblock Intrinsic stabilization of output rates by spike-based hebbian learning.
\newblock \emph{Neural Computation}, 13\penalty0 (12):\penalty0 2709--2741, 2001.

\bibitem[Kumar and Sarkar(2024)]{kumarOjasAlgorithmStreaming}
S.~Kumar and P.~Sarkar.
\newblock Oja\textquotesingle s algorithm for streaming sparse {{PCA}}.
\newblock In A.~Globerson, L.~Mackey, D.~Belgrave, A.~Fan, U.~Paquet, J.~Tomczak, and C.~Zhang, editors, \emph{Advances in Neural Information Processing Systems}, volume~37, pages 74528--74578. Curran Associates, Inc., 2024.

\bibitem[Lai(2003)]{lai03}
T.~L. Lai.
\newblock {Stochastic approximation: invited paper}.
\newblock \emph{The Annals of Statistics}, 31\penalty0 (2):\penalty0 391 -- 406, 2003.

\bibitem[Lynch et~al.(2019)Lynch, Musco, and Parter]{lynchWinnerTakeAllComputationSpiking2019}
N.~Lynch, C.~Musco, and M.~Parter.
\newblock Winner-{{Take-All}} computation in spiking neural networks, 2019.
\newblock URL \url{https://arxiv.org/abs/2407.21078}.

\bibitem[Mertikopoulos and Sandholm(2018)]{mertikopoulosRiemannianGameDynamics2018}
P.~Mertikopoulos and W.~H. Sandholm.
\newblock Riemannian game dynamics.
\newblock \emph{Journal of Economic Theory}, 177:\penalty0 315--364, 2018.

\bibitem[Mertikopoulos et~al.(2020)Mertikopoulos, Hallak, Kavis, and Cevher]{2020_Neurips_SGD_Martikopolous}
P.~Mertikopoulos, N.~Hallak, A.~Kavis, and V.~Cevher.
\newblock On the almost sure convergence of stochastic gradient descent in non-convex problems.
\newblock In H.~Larochelle, M.~Ranzato, R.~Hadsell, M.~Balcan, and H.~Lin, editors, \emph{Advances in Neural Information Processing Systems}, volume~33. Curran Associates, Inc., 2020.

\bibitem[Morrison et~al.(2008)Morrison, Diesmann, and Gerstner]{morrisonPhenomenologicalModelsSynaptic2008}
A.~Morrison, M.~Diesmann, and W.~Gerstner.
\newblock Phenomenological models of synaptic plasticity based on spike timing.
\newblock \emph{Biological Cybernetics}, 98\penalty0 (6):\penalty0 459--478, 2008.

\bibitem[Nemirovsky and Yudin(1983)]{nemirovskyProblemComplexityMethod1983}
A.~S. Nemirovsky and D.~B. Yudin.
\newblock \emph{Problem complexity and method efficiency in optimization}.
\newblock Wiley-Interscience series in discrete mathematics. Wiley, 1983.

\bibitem[Neumann et~al.(2013)Neumann, Strub, and Steil]{neumannIntrinsicPlasticityNatural2013}
K.~Neumann, C.~Strub, and J.~Steil.
\newblock Intrinsic plasticity via natural gradient descent with application to drift compensation.
\newblock \emph{Neurocomputing}, 112:\penalty0 26--33, 2013.

\bibitem[Oster and Liu(2005)]{osternips2005}
M.~Oster and S.-C. Liu.
\newblock Spiking inputs to a winner-take-all network.
\newblock In Y.~Weiss, B.~Sch\"{o}lkopf, and J.~Platt, editors, \emph{Advances in Neural Information Processing Systems}, volume~18. MIT Press, 2005.

\bibitem[Pfister et~al.(2006)Pfister, Toyoizumi, Barber, and Gerstner]{pfisterOptimalSpikeTimingDependentPlasticity2006}
J.-P. Pfister, T.~Toyoizumi, D.~Barber, and W.~Gerstner.
\newblock Optimal {{Spike-Timing-Dependent Plasticity}} for {{Precise Action Potential Firing}} in {{Supervised Learning}}.
\newblock \emph{Neural Computation}, 18\penalty0 (6):\penalty0 1318--1348, 2006.

\bibitem[Raskutti and Mukherjee(2015)]{raskuttiInformationGeometryMirror2015}
G.~Raskutti and S.~Mukherjee.
\newblock The information geometry of mirror descent.
\newblock \emph{IEEE Transactions on Information Theory}, 61\penalty0 (3):\penalty0 1451--1457, 2015.

\bibitem[Robbins and Monro(1951)]{robbinsmonro}
H.~Robbins and S.~Monro.
\newblock A stochastic approximation method.
\newblock \emph{The Annals of Mathematical Statistics}, 22\penalty0 (3):\penalty0 400--407, 1951.

\bibitem[Sanger(1989)]{sangerOptimalUnsupervisedLearning1989}
T.~D. Sanger.
\newblock Optimal unsupervised learning in a single-layer linear feedforward neural network.
\newblock \emph{Neural Networks}, 2\penalty0 (6):\penalty0 459--473, 1989.

\bibitem[Scellier and Bengio(2017)]{ScellierBengio_EquilibirumBackpropagation}
B.~Scellier and Y.~Bengio.
\newblock Equilibrium propagation: Bridging the gap between energy-based models and backpropagation.
\newblock \emph{Frontiers in Computational Neuroscience}, 11, 2017.

\bibitem[Sjöström and Gerstner(2010)]{sjostromSpiketimingDependentPlasticity2010}
J.~Sjöström and W.~Gerstner.
\newblock Spike-timing dependent plasticity.
\newblock \emph{Scholarpedia}, 5\penalty0 (2):\penalty0 1362, 2010.

\bibitem[Song and Abbott(2001)]{songCorticalDevelopmentRemapping2001}
S.~Song and L.~Abbott.
\newblock Cortical development and remapping through spike timing-dependent plasticity.
\newblock \emph{Neuron}, 32\penalty0 (2):\penalty0 339--350, 2001.

\bibitem[Song et~al.(2000)Song, Miller, and Abbott]{songCompetitiveHebbianLearning2000}
S.~Song, K.~D. Miller, and L.~F. Abbott.
\newblock Competitive {{Hebbian}} learning through spike-timing-dependent synaptic plasticity.
\newblock \emph{Nature Neuroscience}, 3\penalty0 (9):\penalty0 919--926, 2000.

\bibitem[Su et~al.(2019)Su, Chang, and Lynch]{suSpikeBasedWinnerTakeAllComputation2019}
L.~Su, C.-J. Chang, and N.~Lynch.
\newblock Spike-based winner-take-all computation: Fundamental limits and order-optimal circuits.
\newblock \emph{Neural Computation}, 31\penalty0 (12):\penalty0 2523--2561, 2019.

\bibitem[Surace et~al.(2020)Surace, Pfister, Gerstner, and Brea]{suraceChoiceMetricGradientbased2020}
S.~C. Surace, J.-P. Pfister, W.~Gerstner, and J.~Brea.
\newblock On the choice of metric in gradient-based theories of brain function.
\newblock \emph{PLOS Computational Biology}, 16\penalty0 (4):\penalty0 e1007640, 2020.

\bibitem[Toyoizumi et~al.()Toyoizumi, Pfister, Aihara, and Gerstner]{toyoizumiGeneralizedBienenstockCooper2005}
T.~Toyoizumi, J.-P. Pfister, K.~Aihara, and W.~Gerstner.
\newblock Generalized {{Bienenstock}}–{{Cooper}}–{{Munro}} rule for spiking neurons that maximizes information transmission.
\newblock \emph{Proceedings of the National Academy of Sciences}, 102\penalty0 (14):\penalty0 5239--5244.

\bibitem[Toyoizumi et~al.(2004)Toyoizumi, Pfister, Aihara, and Gerstner]{Toyoizumi_STDP_MutualInformationMaximisation}
T.~Toyoizumi, J.-P. Pfister, K.~Aihara, and W.~Gerstner.
\newblock Spike-timing dependent plasticity and mutual information maximization for a spiking neuron model.
\newblock In L.~Saul, Y.~Weiss, and L.~Bottou, editors, \emph{Advances in Neural Information Processing Systems}, volume~17. MIT Press, 2004.

\bibitem[Toyoizumi et~al.(2007)Toyoizumi, Pfister, Aihara, and Gerstner]{toyoizumiOptimalityModelUnsupervised2007}
T.~Toyoizumi, J.-P. Pfister, K.~Aihara, and W.~Gerstner.
\newblock Optimality {{Model}} of {{Unsupervised Spike-Timing-Dependent Plasticity}}: {{Synaptic Memory}} and {{Weight Distribution}}.
\newblock \emph{Neural Computation}, 19\penalty0 (3):\penalty0 639--671, 2007.

\bibitem[Vigneron and Martinet(2020)]{vigneron_Survey_STDP_PatternRecognition}
A.~Vigneron and J.~Martinet.
\newblock A critical survey of {{STDP}} in spiking neural networks for pattern recognition.
\newblock In \emph{2020 International Joint Conference on Neural Networks (IJCNN)}, pages 1--9, 2020.

\bibitem[Weissmann et~al.(2025)Weissmann, Klein, Azizian, and D{\"o}ring]{weissmann2025almost}
S.~Weissmann, S.~Klein, W.~Azizian, and L.~D{\"o}ring.
\newblock Almost sure convergence of stochastic gradient methods under gradient domination.
\newblock \emph{Transactions on Machine Learning Research}, 2025.

\bibitem[Wibisono et~al.(2016)Wibisono, Wilson, and Jordan]{wibisonoVariationalPerspectiveAccelerated2016}
A.~Wibisono, A.~C. Wilson, and M.~I. Jordan.
\newblock A variational perspective on accelerated methods in optimization.
\newblock \emph{Proceedings of the National Academy of Sciences}, 113\penalty0 (47), 2016.

\bibitem[{Yang Dan} and {Mu-ming Poo}(2004)]{yangdanSpikeTimingDependentPlasticity2004}
{Yang Dan} and {Mu-ming Poo}.
\newblock Spike timing-dependent plasticity of neural circuits.
\newblock \emph{Neuron}, 44\penalty0 (1):\penalty0 23--30, 2004.

\bibitem[Yuille and Grzywacz(1989)]{yuilleWinnerTakeAllMechanismBased1989}
A.~L. Yuille and N.~M. Grzywacz.
\newblock A {{Winner-Take-All}} mechanism based on presynaptic inhibition feedback.
\newblock \emph{Neural Computation}, 1\penalty0 (3):\penalty0 334--347, 1989.

\end{thebibliography}
\end{document}